\title{Geometric Algorithms for Neural Combinatorial Optimization with Constraints 
}
\author{%
  Nikolaos Karalias\thanks{Equal contribution.} \\
  MIT\\
  \texttt{stalence@mit.edu} \\
  \And
  Akbar Rafiey\footnotemark[1]\\
  NYU \\
  \texttt{ar9530@nyu.edu} \\
  \And
  Yifei Xu \\
  NYU\\
  \texttt{yx3590@nyu.edu} \\
  \And
  Zhishang Luo\\
  UCSD \\
  \texttt{zluo@ucsd.edu} \\
  \And
  Behrooz Tahmasebi \\
  MIT\\
  \texttt{bzt@mit.edu} \\
  \And
  Connie Jiang\\
  MIT \\
  \texttt{conniej@mit.edu}
  \And
  Stefanie Jegelka\\
  TUM and MIT\\
  \texttt{stefje@mit.edu}\\
}
\definecolor{darkblue}{rgb}{0.0,0.0,0.65}
\definecolor{darkred}{rgb}{0.68,0.05,0.0}
\definecolor{darkgreen}{rgb}{0.0,0.29,0.29}
\definecolor{darkpurple}{rgb}{0.47,0.09,0.29}
\begin{document}

\maketitle

\begin{abstract}
Self-Supervised Learning (SSL) for Combinatorial Optimization (CO) is an emerging paradigm for solving combinatorial problems using neural networks. In this paper, we address a central challenge of SSL for CO: solving problems with discrete constraints. We design an end-to-end differentiable framework that enables us to solve discrete constrained optimization problems with neural networks.  Concretely, we leverage algorithmic techniques from the literature on convex geometry and Carath\'eodory's theorem to decompose neural network outputs into convex combinations of polytope corners that correspond to feasible sets. This decomposition-based approach enables self-supervised training but also ensures efficient quality-preserving rounding of the neural net output into feasible solutions. Extensive experiments in cardinality-constrained optimization show that our approach can consistently outperform neural baselines. We further provide worked-out examples of how our method can be applied beyond cardinality-constrained problems to a diverse set of combinatorial optimization tasks, including finding independent sets in graphs, and solving matroid-constrained problems.
\end{abstract}

\section{Introduction}


Combinatorial Optimization (CO) encompasses a broad category of optimization problems where the objective is to find a configuration of discrete objects that satisfies specific constraints and is optimal given a prescribed criterion. It has a wide range of real-world applications \citep{pardalos2002handbook, mirhoseini2020chip, huang2006maximum, bianco1987combinatorial, yener1997combinatorial,saraf2006efficient} while also being of central importance in complexity theory and algorithm design. These problems are often non-convex, and solving them involves exploring large-scale discrete combinatorial spaces of configurations. To that end, neural networks have emerged as a powerful tool for designing CO algorithms as they can learn to exploit patterns in real-world data and discover high-quality solutions efficiently. A compelling proposal in that direction is self-supervised CO because it eschews the need to acquire labeled data, which can be computationally expensive for those problems. This usually involves training a neural network to produce solutions to input instances by minimizing a continuous proxy for the discrete objective of the problem \citep{karalias2020erdos, toenshoff2021graph, amizadeh2018learning, karalias2022neural, xu2020tilingnn}.
While the efficiency and scalability of this approach are appealing, it also harbors significant challenges.

Enforcing constraints on the output of the neural network is one of the primary obstacles. When training, this is often done by including a weighted term in the loss function that penalizes solutions that do not comply with the constraints. There are two key considerations when doing this. Having access to a differentiable function that encodes constraint violation for the discrete problem, and carefully tuning its contribution to the overall loss. Furthermore, at inference, it is important to guarantee that the output of the neural network is discrete and it complies with the constraints. In practice, this is usually achieved with an algorithm that rounds the continuous output of the neural network to a feasible discrete solution. Such algorithms are often heuristics and treat inference differently from training which may hurt performance. Despite recent developments that have enabled tackling new combinatorial problems in the self-supervised setting, combining loss function design and rounding in a coherent fashion can still be highly nontrivial.

 In this work, building on ideas from the literature on self-supervised CO and techniques from geometric algorithms and combinatorial optimization, we propose an approach to loss function design that seamlessly integrates training and inference while also providing a rounding guarantee. The main idea is to \textit{learn to map input instances to distributions of feasible solutions} that maximize (or minimize resp.) the objective. Given an input instance, we use the neural network to predict a continuous vector in the convex hull of feasible solutions. Using an iterative algorithm, this vector is decomposed into a distribution of discrete feasible sets.  This algorithm draws from the literature on Carathéodory's theorem \citep{caratheodory1907variabilitatsbereich} to express points in the interior of a convex polytope as a sparse convex combination of the corners of the polytope. This decomposition yields a distribution that allows us to tractably calculate the expected value of the discrete objective. We use this expectation as our loss and minimize it with a neural net and standard automatic differentiation in a self-supervised fashion.  By backpropagating derivatives from this expectation, the neural network learns to generate outputs in the feasible set that optimize the expected objective.  At inference time, the iterative algorithm generates candidate feasible solutions to the problem. The pipeline is summarized in Figure \ref{fig:pipeline}.

 \begin{figure}[t]
    \centering    \includegraphics[width=0.95\textwidth]{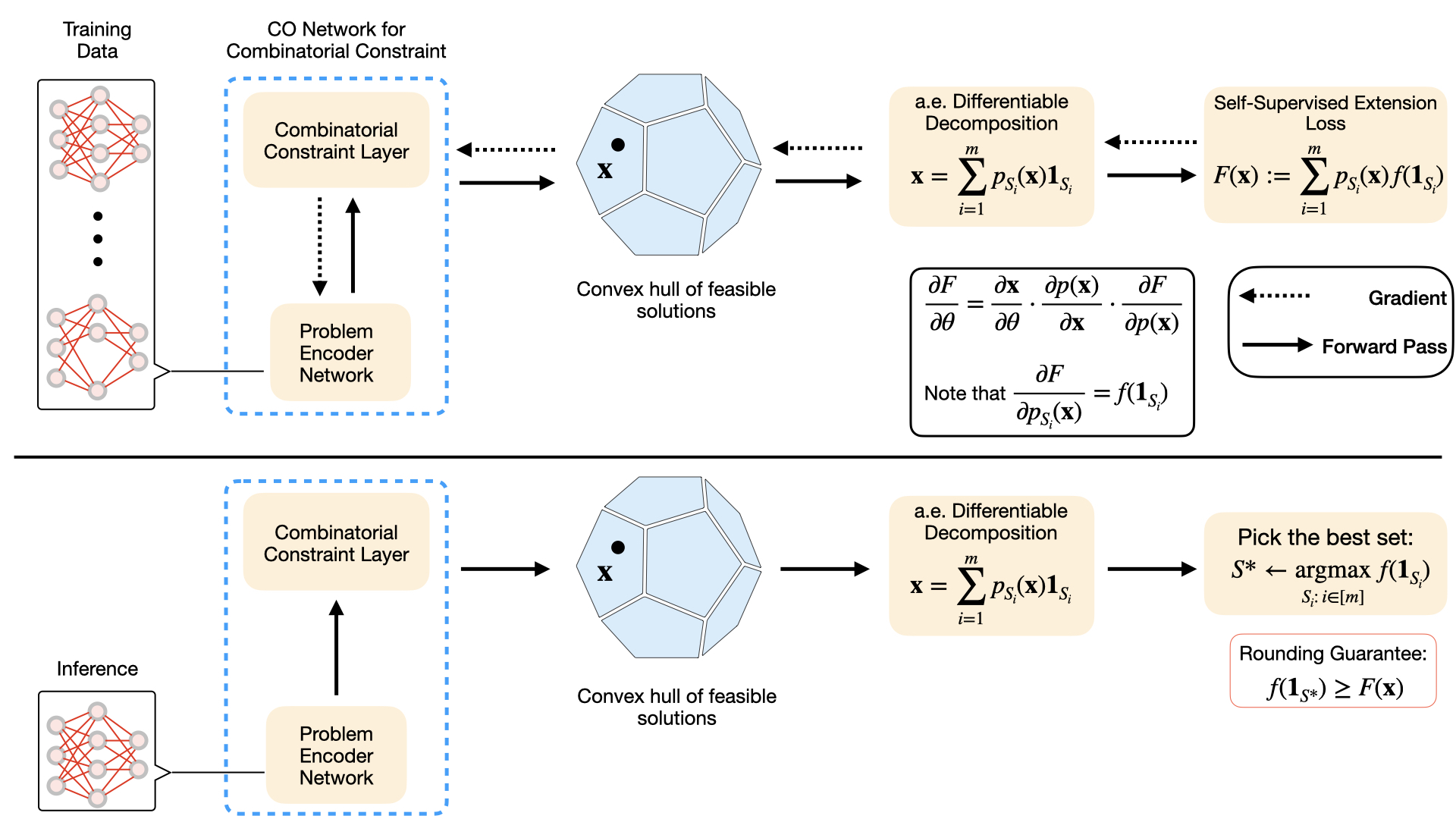}
    \caption{Overview of our framework. During training (top), the model learns to output a point in the convex hull of feasible solutions. A self-supervised extension loss is computed by decomposing this point and evaluating the objective, which is then used to backpropagate. During inference (bottom), the model outputs a relaxed solution, which is decomposed and the best feasible set is selected with a rounding guarantee.}
    \label{fig:pipeline}
\end{figure}

\paragraph{Our contribution.} Our method generalizes previous work, and offers strong empirical benefits in efficiency and performance on large-scale CO instances. Our contributions can be summarized as follows: 

\begin{itemize}
    \item We use a class of geometric decomposition algorithms to design end-to-end learning pipelines for constrained neural CO. Our main result establishes the applicability of our method to any problem whose feasible set polytope admits an efficient linear optimization oracle. 
    \item Our geometric algorithms effectively tackle both the challenge of loss function design for training with constraints, and the challenge of rounding neural network outputs at inference. 
    \item We show strong results in cardinality constrained CO on large-scale instances with hundreds of thousands of nodes, often surpassing previous neural baselines and heuristics. 
    \item We conduct ablation studies to test the effectiveness of optimizing the extension with gradient descent and the contribution of the model in the overall performance.
\end{itemize}

\section{Related work}
\label{sec:related work}


\textbf{Self-Supervised CO and continuous extensions.} Our work follows a long line of work on unsupervised CO \citep{amizadeh2018learning, amizadeh2019pdp, karalias2020erdos, toenshoff2021graph, karalias2022neural, sun2022annealed, wang2022unsupervised, hong2024unsupervised, sanokowski2024diffusion, gaile2022unsupervised, ozolins2022goal, schuetz2022combinatorial, kollovieh2024expected, mundinger2025neural, xu2020tilingnn}. Crucial components of those pipelines include the choice of model, the use of specific input features (positional encodings), the loss function, and the rounding algorithm. There has been extensive research on neural net architectures for combinatorial problems \citep{min2022can, pointer_network, sato2019approximation, yau2024graph} and the role of input features for well known classes of models \citep{xu2018how,lim2022sign, keriven2023functions}. Our primary focus will be on loss function design and rounding. We build on previous work that proposed using continuous extensions of discrete functions as losses for neural CO \citep{karalias2022neural}. Following this blueprint, bespoke extensions have been used for learning on permutations \citep{nerem2024differentiable}, hierarchical clustering \citep{kollovieh2024expected} and interpretable graph learning \citep{chen2024interpretable}. However, there is no general extension design paradigm that can accommodate different constraints. We address this issue by proposing a template for extension design and show how it enables building the constraints into the distribution. This streamlines the process of building extensions and dispenses with the need to tune constraint terms in the loss function that is present in other approaches to self-supervised CO.

\textbf{Loss function design and  enforcing constraints.}
The design of specific loss functions in self-supervised CO to enable smoother optimization, integration with different powerful architectures, and/or better rounding has been studied over the past few years. A common approach is to adopt a probabilistic perspective and parametrize a distribution of outputs with a neural network that learns to optimize the expected cost and the probability of constraint violation  \citep{mundinger2025neural, wang2022unsupervised, karalias2022neural, sanokowski2024diffusion, nerem2024differentiable}. Deriving and/or computing those expectations and probability terms for more complex problems can be difficult, which motivated the work by \cite{bu2024tackling} that provides derivations and fast rounding techniques for cases such as spanning trees and cardinality constraints. In our work, the constraints are built into the distribution and the loss only has to handle the objective function. It is challenging to craft such losses which has led to the development of techniques that improve training dynamics via annealed training \citep{sun2022annealed,ichikawa2024controlling}. 
Other approaches to loss function design leverage ideas from physics \citep{schuetz2022combinatorial, schuetz2022graph}, and semidefinite programming \citep{yau2024graph, huula2024understanding}. Those cases also involve tunable constraint terms. Another major consideration is that of enforcing constraints on the outputs of the neural networks. Techniques in the literature typically involve calculating the projection of the neural net output onto the feasible set. This has been done for cardinality constraints using ideas from the theory of optimal transport \citep{wang2023linsatnet}, and general linear constraints with gradient-based techniques \citep{donti2021dc3, zeng2024glinsat}. In the continuous (potentially non-convex) optimization setting, several projection techniques have also been developed \citep{liang2023low,liang2024homeomorphic,liangefficient, li2025gauge}.

\textbf{Convex geometry and learning over polytopes.}
Our approach relies on fundamental results from convex geometry and geometric algorithms. Specifically, the Carath\'eodory Theorem states that any point in a polytope $P \subseteq \mathbb{R}^d$ can be expressed as a convex combination of at most $d+1$ points. The constructive versions  of this result \citep{grotschel2012geometric} have been adapted to yield efficient algorithms in polytopes for ranking \citep{kletti2022introducing} and scheduling problems \citep{hoeksma2016efficient, ezzeldin2019polynomial, bojja2016costly}. 
Approximate versions of the decomposition have been developed that do not explicitly depend on the ambient dimension $d$ \citep{barman2015approximating} and admit efficient algorithms with fast convergence rates  \citep{mirrokni2017tight, combettes2023revisiting}. In the context of machine learning, polyhedral methods have been used extensively to impose constraints on the outputs of models for generative modelling \citep{frerix2020homogeneous}, for learning with end-to-end differentiable sampling from exponential families using perturb-and-map techniques \citep{paulus2020gradient}, for applications to differentiable sorting \citep{berthet2020learning,sander2023fast} and learning to optimize over permutations \citep{grover2018stochastic, mena2018learning, nerem2024differentiable}. For additional discussion, please refer to the appendix.

\section{Problem formulation and learning setup}

 Let $V=\{v_1,\dots,v_n\}$ be a set of variables, each assigned a value from a discrete domain $D=\{1,\dots,d\}$. A combinatorial optimization problem consists of a universe of instances $\mathcal{T}$, where each instance $T \in \mathcal{T}$ can be represented by a tuple $T = (V, D, \mc{C}, f)$. The set $\mc{C}\subseteq D^n$ defines the feasible solutions to the instance. The real-valued function $f:D^n\to\zR$ is called the objective function. The problem is then to find an optimal feasible vector $\mathbf{x}^*$, i.e., to find a feasible vector $\bx\in D^n$ that maximizes the objective function $f$:           
\begin{align}
    \label{eq:obj-general}
    \max f(\bx), \quad \text{s.t.} \quad \bx\in \mc{C}.
\end{align}
In this paper, we focus on CO problems with Boolean domain, i.e., $D=\{0,1\}$, and $\mc{C}\subseteq \{0,1\}^n$.
This domain choice corresponds to combinatorial optimization problems that involve set functions, i.e., $\bx$ indicates a subset $S \subseteq V$ of a ground set $V=\{v_1,\dots,v_n\}$ of elements and $x_i=1$ if element $i$ is in the subset, and $f$ maps each subset to a real number. The subsets can be, for instance, nodes or edges in a graph, and are central to many problems in applied mathematics and operations research \citep{boros2002pseudo}. 
The set $\mc{C}\subseteq \{0,1\}^n$ includes all subsets of $V$ that obey the constraints of the problem. Each subset $S$ is identified with its indicator vector $\mb{1}_S \in \{0,1\}^n$ and we use  $f(S)$ and $f(\mb{1}_S)$ interchangeably throughout this paper. The objective function is the set function $f:\{0,1\}^n\to \zR$. The goal is to select an optimal feasible subset so  \cref{eq:obj-general}  takes the form $
    \max_{S\in \mc{C}} f(S)$. 

\paragraph{Background on polyhedral geometry.}
 Define the encoding length of a number  as the length of the binary string representing the number. The encoding length of an inequality $\mathbf{a}^\top \mb{x} \leq \mb{b}$, is the sum of encoding lengths of the entries of the vectors $\mb{a}$ and $ \mb{b}$.
Let \(\mathcal{P}\subseteq \mathbb{R}^n\) be a polyhedron and let \(\varphi\)  be a positive integer. We say that \(\mathcal{P}\) has \emph{facet–complexity} at most \(\varphi\) if there exists a system of linear inequalities with rational coefficients whose solution set is \(\mathcal{P}\), and such that the encoding length of each inequality in the system is at most \(\varphi\). 
A \emph{well–described polytope} is a triple \((\mathcal{P};\, n,\, \varphi)\) where \(\mathcal{P}\subseteq \mathbb{R}^n\) is a polytope with facet–complexity at most \(\varphi\).
The \emph{strong optimization problem} for a given rational vector  $\mathbf{x} \in \mathbb{Q}^n$ and any well–described \((\mathcal{P};\, n,\, \varphi)\) is
\begin{align}
    \max_{\mathbf{c}\in \mathcal{P}} \, \mathbf{c} ^\top \mathbf{x}. 
    \label{eq:strongmax}
\end{align}
A strong optimization oracle returns a maximizer $\mb{c}^*$ for \eqref{eq:strongmax}. 
Polytopes can be described as convex hulls of finitely many vectors or as systems of linear inequalities. Even though a description may require exponentially many vectors or inequalities,  the polytope may admit a fast optimization oracle. This is the case for submodular polyhedra \cite{bach2013learning} or the matching polytope \citep[Chapters 25, 26]{schrijver2003combinatorial}.

\textbf{Self-Supervised CO.}
Given a problem instance $T$ and input features $\mathbf{Z}_T \in \mathbb{R}^{n \times d}$, we use a neural network $\text{NN}_\theta$, which we will often refer to as the encoder network, to map the instance data to an output prediction $\mathbf{x} \in \mathbb{R}^n$ by computing $\mathbf{x} = \text{NN}_\theta(\mathbf{Z}_T;T)$. For example, the input instance could be a graph, the input features could be positional encodings for the graph, and the neural net a Multi-Layer Perceptron (MLP). The output $\mathbf{x}$ may not correspond to a discrete feasible point. In those cases, a rounding algorithm will be used to map $\mathbf{x}$ to a feasible indicator vector of a set.  The goal is for the neural net to learn to predict the optimal solution ${\mathbf{x}}^*$. In self-supervised CO, this is done by training  $\text{NN}_\theta$ on a collection of instances $T_1, T_2, \dots, T_m$. The neural net minimizes the problem-specific loss function $\mathcal{L}_{\mathcal{T}}$ computed for each instance $\mathcal{L}_{\mathcal{T}}(\text{NN}_\theta\left(\mathbf{Z}_{T_i};T_i)\right)$ and averaged over a batch (or the entire training set). Since there are no labels, how the loss function is designed to fit the constraints of the problem is of critical importance. At inference time, a test instance is processed through the neural network to obtain a prediction. At this step, rounding heuristics are typically used to ensure that the solution is feasible and discrete.
It should be noted that, it is possible to directly optimize the model with gradient descent on the test instances since no labels are leveraged during training. In this case, the additional computational cost of backpropagation is incurred at inference time.

\section{Proposed approach}
First, we present an overview of the key components in the pipeline and then we discuss each component in more detail. Our proposed method focuses on the design of $\mathcal{L}_{\mathcal{T}}(\mathbf{x})$ that is used to train the model. The objective $f$ is a set function with discrete domain and cannot be directly optimized. Following the literature, we construct a continuous extension of $f$, denoted by $F(\mathbf{x)}:[0,1]^n \rightarrow \mathbb{R}$, and use it as our loss function, i.e. $\mathcal{L}_{\mathcal{T}}(\mathbf{x}) = F(\mb{x})$. 
The extension is defined as
\begin{align}
   F(\mathbf{x}) \coloneqq \E_{S \sim \mc{D}_{\mc{C}}(\bx)}[f(S)], \label{eq:extensiondef}
\end{align}
where $\mathcal{D}_{\mc{C}}(\mathbf{x})$ is a distribution parametrized by the output of the neural network and  $S \in \mathcal{C}$ for all $S$ in the support of $\mathcal{D}_{\mc{C}}(\mathbf{x})$.  The key challenge lies in how to tractably construct and parameterize such a distribution on feasible sets in a way that enables end-to-end learning. We achieve this using an iterative algorithm that ensures $\mc{D}_\mc{C}(\mb{x})$ is supported on $O(n)$ feasible sets and the probability of each set is an a.e. differentiable function of $\mb{x}$. This allows us to efficiently compute the exact expectation in \cref{eq:extensiondef} and to calculate its derivatives with standard automatic differentiation packages. At inference time, the geometric algorithm can be used as a rounding algorithm for $\mb{x}$ since  $\mc{D}_\mc{C}(\mb{x})$ has a small number of sets in its support. Specifically, we can round $\mb{x}$ to the best set in the support and obtain a feasible integral solution whose quality is at least as good as $F(\mb{x})$.
\subsection{A general decomposition algorithm for extension design} \label{sec:decompositiondesign}
For a feasible set $\mc{C}$, consider the polytope $\mc{P}(\mc{C}) = \text{Conv}(\{\mb{1}_S : S \in \mc{C}\})$, i.e., the convex hull of the indicator vectors of the sets in $\mc{C}$.  Any point $\bx \in \mc{P}(\mc{C})$ is a convex combination $\bx = \sum_{S \in \mc{C}}\alpha_S \mathbf{1}_S$ of feasible solutions from $\mc{C}$. In fact, from Carath\'eodory's theorem we know that this convex combination requires at most $n+1$ points where $n$ is the ambient dimension of the space.
 The coefficients of the convex combination can be viewed as the probabilities of a distribution $\mathcal{D}_{\mc{C}}(\bx)$ and hence $\bx = \E_{S \sim \mathcal{D}_{\mc{C}}(\bx)}[\mathbf{1}_S]$. If we can \emph{uniquely} and \emph{efficiently} decompose each point $\bx \in \mc{P}(\mc{C})$ into such a convex combination, we can leverage that distribution to compute the extension in \cref{eq:extensiondef}.
In Algorithm \ref{alg:decomposition-generic}, we propose a general template for an iterative algorithm that yields such decompositions and forms the foundation for our approach.

Intuitively, the algorithm decomposes $\mb{x}$ iteratively by removing a corner $\mb{1}_S$ weighted by a suitably chosen coefficient in each iteration. This is done until the coefficients and the corresponding corners can be used to reconstruct $\mb{x}$ up to some small error $\epsilon$.

\begin{wrapfigure}{r}{0.5\textwidth}
    \vspace{-2em}
\begin{minipage}{0.50 \textwidth}
\begin{algorithm}[H]
\caption{General decomposition algorithm}\label{alg:decomposition-generic}
\begin{algorithmic}[1]
\small
\REQUIRE $\bx \in [0,1]^n$  in $\mc{P}(\mc{C})$.
\STATE $\bx_0\gets \bx$, $t \gets 0$.
\REPEAT
    \STATE $\mb{1}_{S_t}\gets$ vertex from $\mc{P}(\mc{C})$ in the support of $\bx_t$. 
    \STATE $a_t \gets g(\bx_t, \mb{1}_{S_t})$
    \STATE $\bx_{t+1}\gets \left(\bx_t - a_t\mathbf{1}_{S_t}\right)/{(1-a_t)}$
    \STATE $ p_{\bx_t}(S_t) \gets a_t \prod_{i=0}^{t-1}(1-a_i)$
    \STATE $t\gets t+1$
\UNTIL{$\|\bx - \sum_{t}p_{\bx_t}(S_t)\mathbf{1}_{S_t}\| \leq \epsilon$}
\RETURN All $\{p_{\bx_t}(S_t), \mb{1}_{S_t})\}$ pairs.
\end{algorithmic}
\end{algorithm}
\end{minipage}
\end{wrapfigure}

 Let $\bx  \in \mc{P}(\mc{C})$ and $\bx_0=\bx$. In each iteration $t$, the algorithm finds a feasible solution in $\mc{C}$ that is both an extremal point of the convex hull and lies in the support of $\bx_t$. Here by the support of $\mb{x}_t$ we mean $supp(\mb{x}_t):=\{S\in \mathcal{C}: \mb{x}_t(i)\neq 0\, ~ \forall i \in S\}$, that is all the feasible subsets $S\in \mc{C}$ such that every corresponding entry of $\mb{x}_t$ indexed by $i\in S$ is nonzero. Let this point be $\mb{1}_{S_t}\in supp(\mb{x}_t)\subseteq\{0,1\}^n$. Let $a_t \in [0,1]$ be the coefficient at iteration $t$. We express each iterate $\bx_t$ as 
 \begin{align}
     \bx_t =a_t\mathbf{1}_{S_t} + (1-a_t)\mathbf{x}_{t+1}, \label{eq:recurrencegeneral}
 \end{align}
 where $a_t=g(\bx_t, \mb{1}_{S_t})$ is a function of $\mb{x}_t$ and $\mb{1}_{S_t}$, chosen such that $\mathbf{x}_{t+1}$ remains in $\mc{P}(\mc{C})$.
We can recursively apply the same process to $\mb{x}_{t+1}$ until $\bx$ has been decomposed into a convex combination of feasible sets.
After the algorithm terminates, we can obtain a distribution $\mathcal{D}_{\mc{C}}(\bx)$ which is completely characterized by the sets $S_t$ and probabilities $p_{\bx_t}(S_t) = a_t\prod_{i=0}^{t-1}(1-a_i)$ of the decomposition. 
Determining the specific form of steps 3 and 4 in a way that yields a polynomial-time algorithm that is usable in an end-to-end learning setting is a non-trivial task that depends on the polytope. We prove the following theorem which relies on a constructive version of the Carath\'eodory theorem, also known in the literature as the GLS method (from Grötschel-Lovász-Schrijver) \citep{hoeksma2016efficient, kletti2022introducing}. The GLS method establishes a decomposition for a large class of polytopes. Our result builds on top of the GLS method to show how this decomposition is also almost everywhere differentiable with respect to $\mb{x}$. 
\begin{restatable}{theorem}{MainThm} \label{thm:aediffGLS}
  There exists a polynomial-time algorithm that for any well-described polytope $\mc{P}$ given by a strong optimization oracle, for any rational vector $\mb{x}$, finds vertex-probability pairs $\{p_{\bx_t}(S_t), \mb{1}_{S_t}\}$  for $t=0,1, \dots, n-1$  such that 
  $\bx= \sum_{t=0}^{n-1} p_{\bx_{t}}(S_t) \mb{1}_{S_t}$ and all $p_{\bx_t}(S_t) $ are almost everywhere differentiable functions of $\mb{x}$.
\end{restatable}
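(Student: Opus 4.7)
The plan is to combine the constructive Carathéodory decomposition of Grötschel-Lovász-Schrijver (GLS) with a smoothness analysis of the resulting coefficient map. The GLS method already guarantees termination in at most $n$ iterations using polynomially many oracle calls, so the substantive new content is to verify that the probabilities $p_{\mathbf{x}_t}(S_t)$ depend differentiably on $\mathbf{x}$ outside a set of Lebesgue measure zero.

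First, I would instantiate Algorithm \ref{alg:decomposition-generic} for a well-described polytope $(\mathcal{P}; n, \varphi)$ equipped with a strong optimization oracle. At iteration $t$, the vertex $\mathbf{1}_{S_t}$ is produced by calling the oracle on a linear functional derived from $\mathbf{x}_t$. The coefficient $a_t = g(\mathbf{x}_t, \mathbf{1}_{S_t})$ is the largest value in $[0,1]$ such that $\mathbf{x}_{t+1} = (\mathbf{x}_t - a_t \mathbf{1}_{S_t})/(1-a_t)$ remains in $\mathcal{P}$; equivalently, the ray from $\mathbf{1}_{S_t}$ through $\mathbf{x}_t$ is extended until it first meets a facet of $\mathcal{P}$. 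Using the separation-optimization equivalence from \citep{grotschel2012geometric}, this binding facet can be identified in polynomial time, and the affine hull of the support strictly decreases at each step, yielding termination within $n$ iterations and total runtime polynomial in $n$ and $\varphi$.

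Next, I would establish differentiability by examining the algebraic form of the updates. For generic $\mathbf{x}$, both the argmax vertex $\mathbf{1}_{S_t}$ and the facet binding the line search are unique. Conditional on these combinatorial choices being fixed, $a_t$ takes the closed form $a_t = (\mathbf{a}^\top \mathbf{x}_t - b)/(\mathbf{a}^\top \mathbf{1}_{S_t} - b)$, where $\mathbf{a}^\top \mathbf{y} \leq b$ is the binding facet inequality. By induction on $t$, $\mathbf{x}_t$ is a rational function of $\mathbf{x}$ obtained by composing affine maps with nonvanishing denominators $1 - a_i$, hence $a_t$ and $p_{\mathbf{x}_t}(S_t) = a_t \prod_{i<t}(1-a_i)$ are rational, and therefore smooth, functions of $\mathbf{x}$ on the open set where all combinatorial choices through iteration $t$ are locally constant.

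Finally, I would control the non-differentiability locus. Let $N \subseteq \mathcal{P}$ be the set of inputs at which, at some iteration, either two vertices tie in the argmax selecting $\mathbf{1}_{S_t}$ or two facets tie as the first binding constraint in the line search. Each such event is an equation between two affine functions of $\mathbf{x}$, cutting out a closed set of dimension at most $n-1$; since there are only finitely many vertex pairs and facet pairs and at most $n$ iterations, $N$ is a finite union of such affine subvarieties and has Lebesgue measure zero, and on $\mathcal{P} \setminus N$ every $p_{\mathbf{x}_t}(S_t)$ is differentiable in $\mathbf{x}$. I expect the main obstacle to be verifying that the combinatorial-choice neighborhood from iteration $t-1$ propagates through the rational update to iteration $t$; this amounts to showing that the map $\mathbf{x} \mapsto \mathbf{x}_t$ has full-rank Jacobian on a neighborhood of generic $\mathbf{x}$, which follows from $1 - a_i > 0$ at every non-terminal step, so that a measure-zero event detected for $\mathbf{x}_t$ pulls back to a measure-zero event for $\mathbf{x}$.
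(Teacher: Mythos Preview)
Your approach is correct in spirit and reaches the same conclusion, but it proceeds differently from the paper. You argue via explicit stratification: fix the combinatorial choices (which vertex the oracle returns, which facet the ray hits), observe that $a_t$ and hence $p_{\mathbf{x}_t}(S_t)$ are rational functions of $\mathbf{x}$ on each stratum, and then bound the strata boundaries as a finite union of lower-dimensional algebraic sets. The paper instead represents the coefficient $a^*$ directly through the support function $h_{\mathcal{P}}$ of the polytope, obtaining
\[
a^*(\mathbf{x}_t;\mathbf{1}_{S_t}) \;=\; 1 - \max_{\mathbf{c}:\, h_{\mathcal{P}}(\mathbf{c})>\mathbf{c}^\top \mathbf{1}_{S_t}}\; \frac{\mathbf{c}^\top(\mathbf{x}_t-\mathbf{1}_{S_t})}{h_{\mathcal{P}}(\mathbf{c})-\mathbf{c}^\top \mathbf{1}_{S_t}},
\]
which is a pointwise maximum of linear functions of $\mathbf{x}_t$ and therefore piecewise linear, locally Lipschitz, and a.e.\ differentiable without any stratum enumeration. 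For the polynomial-time claim, the paper likewise avoids separation plus line search: it rewrites the ray--boundary intersection as a single convex program in $\mathbf{c}$ (normalize $\mathbf{c}^\top(\mathbf{x}_t-\mathbf{1}_{S_t})=1$ and minimize $h_{\mathcal{P}}(\mathbf{c})-\mathbf{c}^\top\mathbf{1}_{S_t}$). Your route is more explicit about where differentiability fails; the paper's is shorter and sidesteps the propagation bookkeeping entirely.

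One technical point in your last paragraph deserves care: the map $\mathbf{x}\mapsto\mathbf{x}_t$ does \emph{not} have full-rank Jacobian as a map $\mathbb{R}^n\to\mathbb{R}^n$, since by construction each $\mathbf{x}_{t+1}$ lands on a proper face and the rank drops by one at every step. The condition $1-a_i>0$ only guarantees the update is well-defined, not that it is a local diffeomorphism. What you actually need, and what does hold generically, is that $\mathbf{x}\mapsto\mathbf{x}_t$ is a submersion onto the $(n-t)$-dimensional face containing $\mathbf{x}_t$; then codimension-one tie events there pull back to codimension-one (hence measure-zero) events in $\mathbf{x}$. An even simpler fix, which avoids the Jacobian altogether, is to note that on each stratum $\mathbf{x}_t$ is a rational function of $\mathbf{x}$, so any tie condition at iteration $t$ is a polynomial equation in $\mathbf{x}$ and therefore either holds identically on the stratum or cuts out a measure-zero subset.
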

Intuitively, at each iteration the decomposition algorithm leverages the strong optimization oracle to obtain a vertex for a minimal face of the polytope that contains the current iterate. This corresponds to step 3 of our algorithm. At step 4,  the largest possible $a_t$ is chosen that yields a new iterate that remains in the polytope.  This means that $\mb{x}_{t+1}$ will intersect the boundary of the polytope. At each iteration,  the iterate lies on a lower-dimensional face of the polytope than the previous one, and hence the algorithm terminates in polynomial time, in at most $n+1$ iterations. 

\textbf{End-to-end learning with the decomposition.} Drawing insights from the GLS method, we aim to design a decomposition that can be integrated in gradient-based optimization pipelines. Here, we summarize the basic requirements for the general decomposition to yield an extension that can be optimized with standard automatic differentiation, which will enable end-to-end learning. 
To obtain consistent tie breaks at the boundaries for the strong optimization oracle, we adopt a standard lexicographic ordering of the coordinates. This also ensures that the steps of the decomposition are deterministic and that the coefficients of the decomposition are uniquely determined for each point in the polytope. For our approach to be useful for end-to-end learning, we require:

\begin{enumerate}
    \item A differentiable method that ensures our starting point $\mb{x}$ lies in the polytope $\mc{P}(\mc{C})$.
    \item A routine that efficiently finds a vertex $\mb{1}_{S_t}\in \mc{P}(\mc{C})$ in the support of $\mb{x}_t$.
    \item An (almost everywhere) differentiable function $g(\mb{x}_t,\mb{1}_{S_t})$ that yields a valid decomposition.
\end{enumerate}

Satisfying these conditions will allow us to tractably construct a distribution of feasible sets $\mathcal{D}_{\mc{C}}(\bx)$, which will enable efficient calculation of the expectation in \cref{eq:extensiondef}. It also means that we can update the parameters of the neural network that is used to predict $\mb{x}$ by optimizing the extension via standard automatic differentiation packages. Our proof of \cref{thm:aediffGLS} establishes that a strong optimization oracle is sufficient to handle steps 2 and 3. See \cref{app:Conditions} and  \cref{app: applicability} for detailed comments on the applicability of the method.

\paragraph{Practical considerations for step 3 and 4.} Our GLS-based construction offers a canonical template that can instantiate Algorithm 1, but deviating from GLS can confer additional benefits. For example, $g$ in Step 4 can be chosen so that the next iterate does not intersect a lower dimensional face of the polytope.  The difference now is that, instead of choosing the maximum coefficient possible, we will rescale it in each step by some fixed constant $b \in (0,1]$. Termination of the algorithm is ensured by the parameter $\epsilon$ in step 8 of \cref{alg:decomposition-generic} and a lower bound $l$ on the minimum value of the coefficient.
\begin{proposition}
    Let 
    \begin{align}
          \tilde a_t =
  \begin{cases}
    b\,\ a_t, & b\,\ a_t \ge \ell,\\
    \ a_t,    & b\,\ a_t < \ell \label{eq:lower bound rule for coeff}
  \end{cases}
\end{align}
    be the coefficient we pick for step 3 in \cref{alg:decomposition-generic}. Then the reconstruction error of step 8 \cref{alg:decomposition-generic} decays exponentially in $k$.
\end{proposition}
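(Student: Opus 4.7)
The plan is to convert the residual $\mathbf{x} - \sum_{t=0}^{k-1} p_{\mathbf{x}_t}(S_t)\mathbf{1}_{S_t}$ into a product of $(1-\tilde{a}_t)$ factors multiplied by $\mathbf{x}_k$, and then use rule \eqref{eq:lower bound rule for coeff} to show that this product shrinks geometrically in $k$. First, I would unroll the recurrence \eqref{eq:recurrencegeneral} (with $\tilde{a}_t$ in place of $a_t$) by induction on $k$ to obtain
\begin{equation*}
\mathbf{x} \;=\; \sum_{t=0}^{k-1} \tilde{a}_t \Bigl(\prod_{i=0}^{t-1}(1-\tilde{a}_i)\Bigr)\mathbf{1}_{S_t} \;+\; \Bigl(\prod_{i=0}^{k-1}(1-\tilde{a}_i)\Bigr)\mathbf{x}_k .
\end{equation*}
Matching this against $p_{\mathbf{x}_t}(S_t) = \tilde{a}_t \prod_{i=0}^{t-1}(1-\tilde{a}_i)$ from line 6 of Algorithm \ref{alg:decomposition-generic} yields the clean identity
\begin{equation*}
\Bigl\|\mathbf{x} - \sum_{t=0}^{k-1} p_{\mathbf{x}_t}(S_t)\mathbf{1}_{S_t}\Bigr\| \;=\; \|\mathbf{x}_k\|\, \prod_{i=0}^{k-1}(1-\tilde{a}_i),
\end{equation*}
reducing the task to controlling the two factors on the right-hand side.

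Next, I would show by induction that every iterate remains in $\mathcal{P}(\mathcal{C}) \subseteq [0,1]^n$. Since $\tilde{a}_t \leq a_t$ and $a_t$ is (by the GLS construction underlying Algorithm \ref{alg:decomposition-generic}) the largest coefficient keeping the GLS update $(\mathbf{x}_t-a_t\mathbf{1}_{S_t})/(1-a_t)$ in $\mathcal{P}(\mathcal{C})$, the rescaled $\mathbf{x}_{t+1}$ lies on the segment joining $\mathbf{x}_t$ and that boundary iterate, both feasible, hence $\mathbf{x}_{t+1}\in\mathcal{P}(\mathcal{C})$ by convexity. This supplies the uniform bound $\|\mathbf{x}_k\| \leq \sqrt{n}$. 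For the multiplicative factor, rule \eqref{eq:lower bound rule for coeff} is designed to force $\tilde{a}_t \geq \ell$ at every iteration: the first branch gives $\tilde{a}_t = b\,a_t \geq \ell$ directly; in the ``else'' branch $\tilde{a}_t = a_t$, and since $\ell$ doubles as the minimum-coefficient threshold guaranteeing termination of Algorithm \ref{alg:decomposition-generic} (rather than an ad hoc rescaling floor), the branch is invoked only when $a_t \geq \ell$. Combining the two bounds yields
\begin{equation*}
\Bigl\|\mathbf{x} - \sum_{t=0}^{k-1} p_{\mathbf{x}_t}(S_t)\mathbf{1}_{S_t}\Bigr\| \;\leq\; \sqrt{n}\,(1-\ell)^k ,
\end{equation*}
which is the claimed exponential decay in $k$.

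The step I expect to require the most care is precisely the lower bound on $\tilde{a}_t$ in the ``else'' branch, because the case statement in \eqref{eq:lower bound rule for coeff} does not by itself exclude small values of $a_t$ when $b\,a_t < \ell$. A robust fallback argument that sidesteps any interpretational issue uses a structural property of GLS: an iteration in which the full coefficient $a_t$ is used pushes $\mathbf{x}_{t+1}$ strictly onto a lower-dimensional face of $\mathcal{P}(\mathcal{C})$, so the ``else'' branch can fire at most $n$ times in total. Consequently, among the $k$ factors at least $k-n$ arise from the first branch and satisfy $1-\tilde{a}_t \leq 1-\ell$, giving $\prod_{i=0}^{k-1}(1-\tilde{a}_i) \leq (1-\ell)^{k-n}$, which still produces exponential decay, with an $n$-dependent prefactor $(1-\ell)^{-n}$.
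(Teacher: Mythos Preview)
Your argument is correct and tracks the paper's proof closely: both unroll the recurrence to write the residual as $\bigl(\prod_{i=0}^{k-1}(1-\tilde a_i)\bigr)\mathbf{x}_k$, bound $\|\mathbf{x}_k\|$ via feasibility of the iterates in $[0,1]^n$, and then bound the product by $(1-\ell)^k$. The paper's proof simply asserts the last inequality without discussing the else branch of \eqref{eq:lower bound rule for coeff}; you are right that the case split alone does not literally force $\tilde a_t\ge \ell$ there, and your fallback argument---that a full-coefficient (else-branch) step pushes the iterate onto a strictly lower-dimensional face and can therefore occur at most $n$ times, yielding $\prod_i(1-\tilde a_i)\le (1-\ell)^{k-n}$---is a genuine robustness addition that the paper does not supply. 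The only cosmetic difference is that the paper bounds $\|\mathbf{x}_{k+1}\|_2$ by $n$ rather than your $\sqrt{n}$; both are valid.
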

This approach allows for controlled approximation of $\mb{x}$ and the introduction of more vertices into the decomposition which can help exploration when optimizing the extension.



\textbf{Neural predictions in the feasible set polytope.} Suppose our encoder network outputs ${\mb{x}} \in \mathbb{R}^n$. Since  Algorithm \ref{alg:decomposition-generic} requires a point in the polytope $\bx \in \mc{P}(\mc{C})$, we need a way to enforce this constraint on the output of our neural network. 
This type of problem has received significant attention (see \Cref{sec:related work}) in both the optimization and machine learning communities. Commonly, this can involve projection-based techniques such as Sinkhorn's algorithm~\citep{sinkhorn1964relationship} and its various extensions~\citep{wang2022towards} or other gradient-based approaches \citep{zeng2024glinsat}. This can be a viable strategy but, depending on the polytope, may be difficult to integrate in a differentiable pipeline. We will leverage a projection-based approach when appropriate but we also introduce a simple efficient alternative.
 We interpret the neural encoder output as a perturbation of an interior point in $\mc{P}(\mc{C})$. More concretely, the neural net predicts a perturbation vector $\mb{z} \in \mathbb{R}^n$ such that $\bx = \mb{z} + \mb{u}$, where $\mb{u} \in \mathbb{R}^n$ is an interior point of $\mc{P}(\mc{C})$. While this approach circumvents the need for projection, it requires access to an interior point of the polytope. It also requires care so that the perturbed point $\mb{z}+\mb{u}$ remains in the polytope. Nonetheless, we show that for several fundamental constraint families--including cardinality constraints, partition matroids, and the spanning tree base polytope--it is possible to do this efficiently in a differentiable way (e.g., Proposition~\ref{prop:cardproj}).
 

\textbf{Extension function, training, and inference.} Once the decomposition algorithm provides $\mathcal{D}_{\mc{C}}(\bx)$, we can use it to compute our loss function. For the extension to be well defined, the decomposition needs to deterministically map each point in the polytope to vertex-probability pairs. This is guaranteed in our constructions.  It is also straightforward to compute derivatives of the extension with respect to the neural network parameters. For simplicity, suppose the encoder network, parameterized by weights $\theta$, outputs $\bx \in \mc{P}(\mc{C})$. Then, we have $\frac{\partial F}{\partial \theta} = \frac{\partial F}{\partial p(\mathbf{x})}\cdot \frac{\partial p(\mathbf{x})}{\partial \mathbf{x}}\cdot \frac{\partial \mathbf{x}}{\partial \theta}$ where $p(\bx)$ is 
the vector of probabilities returned by Algorithm \ref{alg:decomposition-generic}.

The extension $F(\bx)= \E_{S\sim \mathcal{D}_{\mc{C}}(\bx)}[f(S)]= \sum_{t} p_{\bx_{t}}(S_t) f(S_t)$ has several desirable properties that have been studied in the literature \citep{karalias2022neural}. Since we are considering convex combinations of the objective over the feasible set, the extreme points are preserved i.e., $\max_{\bx\in\mc{P}(\mc{C})} F(\bx) = \max_{S\in \mc{C}} f(S)$; moreover we have $\argmax_{\bx\in\mc{P}(\mc{C})}F(\bx)\in \text{Conv}(\{\mb{1}_S : S \in \argmax_{S\in \mc{C}}f(S)\})$.  The expectation formulation also provides a rounding guarantee. There exists at least one feasible solution $S^*$ in the decomposition of $\bx$ such that $f(S^*) \geq F(\bx)$. This property enables fast inference on the output of the encoder network without any degradation in solution quality. 

\subsection{Case studies}\label{sec:case-study}
In this section we discuss how the decomposition can be generated for various constraints. We provide a detailed treatment of the cardinality constraint to build intuition and then discuss additional examples including spanning trees and independent sets. All proofs, additional details and additional experiments can be found in the Appendix.
\subsubsection{Cardinality constraint}
Here, we focus on combinatorial problems that involve optimizing a set function under an exact cardinality constraint. This setting is sufficiently general to encompass many fundamental and practical combinatorial optimization problems. Formally, let $f:2^V \to \zR$ be a set function that assigns a real value to each subset of $V = \{v_1, \dots, v_n\}$. For a given positive integer $k$, the set of feasible solutions is defined as $\mc{C} = \{S : |S| = k, S \subseteq V\}$, which includes all subsets of $V$ of size exactly $k$. The resulting constrained combinatorial optimization problem is:
\begin{align}
    \label{eq:objcardinalitymax}
    \max_{\bx\in \{0,1\}^n, \|\bx\|_1=k} f(\bx). 
\end{align}
The associated convex polytope, $\mc{P}(\mc{C}) = \text{Conv}(\{\mb{1}_S : |S| = k\})$, is the convex hull of Boolean vectors with exactly $k$ ones and $n - k$ zeros and the \emph{matroid base polytope} corresponding to a uniform matroid on $n$ elements of rank $k$. It is known as the (k,n)-hypersimplex and is denoted by $\Delta_{n,k}$ \citep{rispoli2008graph}. 

\textbf{Designing the decomposition algorithm.}
For now, we assume that our predictions lie in the polytope of the feasible set, i.e. $\bx \in \Delta_{n,k}$. To construct a decomposition, we need to ensure that steps 3 and 4 in \cref{alg:decomposition-generic} comply with our conditions for efficient end-to-end learning. To obtain a vertex of the polytope in the support of $\mb{x}_t$ for Step 3 in \cref{alg:decomposition-generic}, we set
\begin{align}
    \mb{1}_{S_t} = \argmax_{\mb{c}\in \{0,1\}^n, \|\mb{c}\|_1=k } \mb{x}_t^\top \mb{c}. \label{eq:step3cardinality}
\end{align}
For step 4, we use \cref{eq:recurrencegeneral} to calculate the largest possible coefficient that keeps the iterate in the polytope. This yields
\begin{align}
    a_t = \min \left\{ \min_{i \in S_t}\bx_t(i), 1 - \max_{i \notin S_t} \bx_t(i)\right\}.   \label{eq:step4cardinality}
\end{align}
The following result shows that our decomposition algorithm produces a valid convex decomposition of points in the hypersimplex and is suitable for end-to-end learning.
\begin{theorem}
    \label{thm:decompositioncardinality}  For $\mb{x} \in \Delta_{n,k}$,  Algorithm \ref{alg:decomposition-generic} with Step 3 according to \cref{eq:step3cardinality} and Step 4 according to \cref{eq:step4cardinality}, terminates in at most $n$  iterations and returns probability-vertex pairs $\{(p_{\bx_{t}}(S_t), \mb{1}_{S_t})\}$ such that every $S_t$ is a set of size $k$ and $\mb{x} = \sum_{t}p_{\bx_t}(S_t)\mathbf{1}_{S_t}$. Moreover, each $p_{\bx_{t}}(S_t)$ is an a.e. differentiable function of $\bx$.
\end{theorem}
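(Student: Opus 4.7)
My plan is to prove the four claims — invariance of the iterate in $\Delta_{n,k}$, $|S_t| = k$, termination in at most $n$ steps, correctness of the decomposition, and almost-everywhere differentiability — in that order.

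First I would establish the invariant that $\mathbf{x}_t \in \Delta_{n,k}$ for every $t$, by induction. For the $\ell_1$ norm, a short calculation using $\|\mathbf{1}_{S_t}\|_1 = k$ gives $\|\mathbf{x}_{t+1}\|_1 = (k - a_t k)/(1 - a_t) = k$. For the $[0,1]$ box, the two-sided minimum defining $a_t$ in \eqref{eq:step4cardinality} is tailored precisely so that coordinates in $S_t$ stay nonnegative and coordinates not in $S_t$ stay at most $1$ after the recurrence \eqref{eq:recurrencegeneral}. Since $\mathbf{x}_t$ lies in the hypersimplex, it has at least $k$ strictly positive coordinates (by the $\ell_1$ constraint combined with the $[0,1]$ bounds), so the linear-program step \eqref{eq:step3cardinality} returns an indicator vector of a set $S_t$ with $|S_t| = k$ lying entirely in the support of $\mathbf{x}_t$; this yields the claim on set sizes.

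The main obstacle, and the heart of the argument, is termination in at most $n$ iterations. Here I would exploit the structure of the minimum in \eqref{eq:step4cardinality}: if $a_t = \min_{i \in S_t} \mathbf{x}_t(i)$ is attained at $i^\star \in S_t$, then $\mathbf{x}_{t+1}(i^\star) = 0$, and if $a_t = 1 - \max_{i \notin S_t}\mathbf{x}_t(i)$ is attained at $j^\star \notin S_t$, then $\mathbf{x}_{t+1}(j^\star) = 1$. In either case, at least one new coordinate becomes integral. The key complementary fact is that once a coordinate is integral, it stays integral: a coordinate equal to $0$ never reenters the support (because with lexicographic tie-breaking the top-$k$ oracle always prefers the $k$ strictly positive coordinates), and a coordinate equal to $1$ is forced into every subsequent $S_t$ (again via lex tie-breaking on the maximal value), so the recurrence preserves it. Hence the count of integral coordinates strictly increases each iteration, forcing termination in at most $n$ steps; the last iterate is itself a vertex, corresponding to $a_t = 1$.

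The decomposition identity $\mathbf{x} = \sum_t p_{\mathbf{x}_t}(S_t)\mathbf{1}_{S_t}$ then follows by unrolling the recurrence \eqref{eq:recurrencegeneral}: a straightforward telescoping gives
\begin{equation*}
\mathbf{x}_0 \;=\; \sum_{t=0}^{T-1} a_t \!\!\prod_{i=0}^{t-1}(1-a_i)\, \mathbf{1}_{S_t} \;+\; \prod_{i=0}^{T-1}(1-a_i)\, \mathbf{x}_T,
\end{equation*}
and at termination $\mathbf{x}_T = \mathbf{1}_{S_T}$ with $a_T = 1$, absorbing the remainder into the sum; the $p_{\mathbf{x}_t}(S_t)$ also sum to $1$ by the identity $\sum_{t=0}^{T} a_t\prod_{i<t}(1-a_i) = 1 - \prod_{t=0}^{T}(1-a_t)$. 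Finally, for almost-everywhere differentiability, I would note that $S_t$ is a discrete selection that is locally constant in $\mathbf{x}$ except on a measure-zero set where the top-$k$ selection ties; away from this set, $a_t$ is a $\min$/$\max$ of finitely many coordinates of $\mathbf{x}_t$, which is itself a rational function of $\mathbf{x}$ with nonzero denominator $\prod_{i<t}(1-a_i)$, hence smooth on an open dense set of full measure. Since $p_{\mathbf{x}_t}(S_t)$ is a product of such functions, it inherits a.e. differentiability, completing the proof.
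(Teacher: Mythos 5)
Your proposal is correct and follows essentially the same route as the paper's proof: preserve the $\ell_1$ norm and box constraints to justify the coefficient in \eqref{eq:step4cardinality}, observe that each iteration fixes one coordinate to $0$ or $1$ and that fixed coordinates never change (giving termination in at most $n$ steps), and conclude a.e.\ differentiability because each $a_t$ is a min/max of coordinates of $\mathbf{x}_t$ composed through the recurrence. The only additions are your explicit telescoping of the recurrence and the more detailed justification that integral coordinates persist, both of which the paper leaves implicit.
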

 At a high level, the result follows a similar strategy as \cref{thm:aediffGLS}. Our choice of coefficient ensures that the next iterate intersects the boundary of the polytope and lies in a lower-dimensional face. This in turn guarantees that the algorithm terminates in at most $n+1$ steps. 

\textbf{Generating neural predictions in the hypersimplex.}\label{sec:noise-cardinality}
The last concern to address is that of obtaining a vector $\bx \in \Delta_{n,k}$ in an efficient and differentiable manner.  We follow the perturbation strategy that we outlined earlier and use the neural network to predict a perturbation vector $\mb{z} \in [0,1]^n$ for an interior point. The following result ensures that the perturbed point $\mb{x}$ remains in the polytope.
\begin{proposition}\label{prop:cardproj} 
Let $\mb{z} \in [0,1]^n$ and $\mathbf{u} = [\frac{k}{n}, \frac{k}{n}, \dots, \frac{k}{n}]$. Let $\mu$ be the mean of $\mb{z}$, and $\tilde{\mb{z}}$ be the mean-centered version of $\mb{z}$. For $s = \min (  \tfrac{k/n}{\mu},\tfrac{(n-k)/n}{1-\mu}  )$ and $\mb{x}  = s\tilde{\mb{z}} + \mb{u}$, we have that  $\mb{x} \in \Delta_{n,k}$.
 \end{proposition}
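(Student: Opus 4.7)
The plan is to verify directly that $\mb{x} = s\tilde{\mb{z}} + \mb{u}$ satisfies the two defining properties of the hypersimplex $\Delta_{n,k}$: (i) its coordinates sum to $k$, and (ii) every coordinate lies in $[0,1]$. Since the only nontrivial content of the statement is coordinate-wise feasibility, the scaling factor $s$ is clearly the key quantity to analyze: its two arguments in the min are exactly what one obtains by saturating the upper and lower coordinate constraints respectively.

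First I would handle the sum. Because $\tilde{\mb{z}}$ is mean-centered, $\sum_i \tilde{z}_i = 0$, so
\begin{equation*}
    \sum_{i=1}^n x_i \;=\; s\sum_{i=1}^n \tilde{z}_i + \sum_{i=1}^n u_i \;=\; 0 + n\cdot\tfrac{k}{n} \;=\; k,
\end{equation*}
which takes care of the affine constraint.

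Next I would establish the coordinate bounds by fixing an arbitrary index $i$ and writing $x_i = s(z_i - \mu) + k/n$. For the upper bound $x_i \leq 1$, I would split on the sign of $z_i - \mu$: if $z_i \leq \mu$ then $s(z_i - \mu) \leq 0 \leq (n-k)/n$ and so $x_i \leq k/n \leq 1$; if $z_i > \mu$, I would use $z_i \leq 1$ to get $z_i - \mu \leq 1 - \mu$, and then apply the bound $s \leq \tfrac{(n-k)/n}{1-\mu}$ coming from the definition of $s$, yielding $s(z_i - \mu) \leq (n-k)/n$ and hence $x_i \leq 1$. The lower bound $x_i \geq 0$ is symmetric: if $z_i \geq \mu$ the conclusion is immediate, and otherwise $z_i \geq 0$ gives $z_i - \mu \geq -\mu$, so the bound $s \leq \tfrac{k/n}{\mu}$ yields $s(z_i - \mu) \geq -k/n$ and thus $x_i \geq 0$.

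The only subtlety is that the defining expression for $s$ has $\mu$ or $1-\mu$ in the denominator, so I would briefly address the degenerate cases $\mu=0$ and $\mu=1$ separately: in both cases $\mb{z}$ is a constant vector (either all zeros or all ones), so $\tilde{\mb{z}} = \mb{0}$ and $\mb{x} = \mb{u} \in \Delta_{n,k}$ regardless of how $s$ is interpreted. I do not expect any real obstacle here—the proof is a case analysis showing that each term inside the min in the definition of $s$ is precisely the largest scaling compatible with one of the two sides of the coordinate constraints, so taking their minimum enforces both simultaneously.
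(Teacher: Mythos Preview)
Your proposal is correct and follows essentially the same approach as the paper's proof: both verify the sum constraint via the mean-centering of $\tilde{\mb z}$, then establish the coordinate bounds using $-\mu \le z_i - \mu \le 1-\mu$ together with the two inequalities $s \le \tfrac{k/n}{\mu}$ and $s \le \tfrac{(n-k)/n}{1-\mu}$. The paper's argument applies these bounds directly without your sign-based case split, and it does not explicitly treat the degenerate cases $\mu \in \{0,1\}$, but these are cosmetic differences rather than a genuinely different route.
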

This simple transformation is efficient to compute and maintains differentiability of $\mb{x}$ with respect to the neural network parameters.

\subsubsection{Decomposition for Partition Matroids, Spanning Trees, and Independent Sets}
\textbf{Partition Matroids.} We extend our result to general constraints beyond cardinality. Let $V_1,\dots, V_c$ be disjoint subsets with $V=\cup_{i=1}^d V_i$. The CO problem is $\max_{S: |S\cap V_i|=k_i} f(S)$. When $c=1$, this reduces to the cardinality case. In step 3 of \cref{alg:decomposition-generic}, we obtain a vertex in the support of $\bx_t$ by setting $\mb{1}_{S_t} = \argmax_{\mb{c}\in \{0,1\}^n, \|\mb{c}(V_i)\|_1=k_i } \mb{x}_t^\top \mb{c}$. Step 4 matches \cref{eq:step4cardinality}. The encoder can similarly be constrained to the polytope, with scaling factor $s$ being handled per partition.

\textbf{Spanning Trees.}  Let $G=(V,E)$ be a graph with $n$ nodes and $m$ edges. The CO problem is $\max_{S\subseteq E: S\in \mc{F}} f(S)$, where $\mc{F}$ is the set of full \emph{spanning forests} of $G$. The convex hull of feasible sets is the \emph{graphical matroid base polytope}. In Step 3 of \cref{alg:decomposition-generic}, we obtain a vertex in the support of $\bx_t$ by finding a maximum spanning forest. The encoder learns edge weights to perturb the uniform spanning tree distribution, where we use Kirchhoff’s Matrix–Tree Theorem \citep{kirchhoff1847ueber} and results from spectral graph theory. 

\textbf{Independent Sets.} The maximum independent set problem is an NP-hard problem where the goal is to find the largest subset of nodes $S$ in a graph $G=(V,E)$ such that no pair of nodes in $S$ is adjacent.  The strong optimization problem for the independent set polytope is NP-Hard. Furthermore, there is no compact description of the polytope so this is a particularly challenging case. We circumvent those difficulties by considering its relaxation, which admits a compact description and is known as the \emph{fractional stable set polytope} \citep{lovasz2003semidefinite}. We use a fast gradient-based approach to project to the polytope. Steps 3 and 4 are chosen according to the standard template set by the GLS decomposition.

\section{Experiments}
\subsection{Experimental setup}
We consider the Maximum Coverage problem \citep{khuller1999budgeted}, a fundamental combinatorial problem with many variants that have been extensively studied. It appears in numerous applications, including document summarization \citep{lin-bilmes-2011-class} and identifying influential users in social networks \citep{KempeMaxInfluence,Chen21q-InfluenceMax}.  Let $U=\{e_1,\dots,e_m\}$ be a set of $n$ elements and $V=\{T_1,\dots,T_n\}$ be $n$ subsets of $U$. Given $k \leq n$, the \texttt{Max Coverage} problem asks for $k$ subsets whose union covers the largest number of elements. In the weighted version, each $e_i$ has a weight, and the goal is to maximize the total weight of covered elements. This problem can be represented as a bipartite graph $G=(U,V,E)$, where $(e_j,T_i) \in E$ iff $e_j \in T_i$. A node $u \in U$ is covered by $S \subseteq V$ if it is adjacent to some vertex in $S$. Set $f(S,u)=1$ if $u$ is covered by $S$, and $f(S,u)=0$ otherwise. The optimization problem is $\max_{\substack{S\subseteq V,  |S|=k}} \, \sum_{u\in U} f(S,u)$. Next we provide details on the experimental results that are presented in Figure~\ref{tab:max-cover-experiments} and the code can be found in this \href{https://github.com/frankx2023/Neural_Combinatorial_Optimization_with_Constraints}{link}.

\textit{Problem Encoder:} To encode the bipartite graph, we use three layers of GraphSage \citep{hamilton2018inductiverepresentationlearninglarge} followed by a fully-connected layer with our \emph{noise perturbation layer} to map the output vector to the hypersimplex $\Delta_{n,k}$. We add normalization and residual connections in between the GraphSage layers.

\textbf{Baselines.} We adopt the same baselines as prior work, including: \texttt{random}, which samples $k$ candidates uniformly at random over multiple trials and selects the best within 240 seconds; \texttt{Greedy algorithm} \citep{NemhauserWF78} that iteratively adds the element with the largest marginal gain and achieves the well-known $(1-\tfrac{1}{\mathrm{e}})$-approximation guarantee; \texttt{Gurobi} for exact MIP-based solutions, with time limited to 120 seconds for each graph; \texttt{CardNN} \citep{wang2022towards} and its variants (with and without test-time optimization, following \cite{bu2024tackling} approach); a naive version of \texttt{EGN} with a naive probabilistic objective construction and iterative rounding \citep{karalias2020erdos}; \texttt{UCOM2} with variants \citep{bu2024tackling}; and a Reinforcement Learning (\texttt{RL}) baseline. For the RL baseline, we follow the instructions in \citep{kool2019attentionlearnsolverouting,dimes2022} to use GraphSage layers \citep{hamilton2018embedding} as policy network with Actor-Critic \citep{actor_critic} algorithm to train on the same problem instances. We classify \texttt{UCOM2} as a non-learning method because their incremental greedy de-randomization, when applied to a uniform noise or all-zero input vector, performs as well as—or in some cases better than—when applied to the output of their encoder network suggesting that its performance is mostly due to the greedy module.

\textbf{Dataset and training.} We use synthetic graphs for training and real-world graphs for testing. Two $k$ values are chosen per dataset to assess method generalizability. (Due to space constraints, only one value of $k$ is shown; the full set is in the appendix.) \texttt{Random graphs:} the number in front of “Random” indicates the size of $U$. Each group contains 100 graphs from either a uniform or Pareto distribution. \texttt{Real-world graphs:} Built from set systems like Twitch and Railway, each group includes multiple graphs from the same source for broader coverage. Due to limited large-scale data, real-world datasets are used only for testing, with training done on synthetic data. We point out that scaling the probability weights in \cref{eq:step4cardinality} generates diverse decompositions, exposing the model to more sets during training and inference. We use a list of \emph{scaling factors} to produce these decompositions in \emph{parallel}. See \Cref{sec:scaling-factor} for details.

\makeatletter
\@ifundefined{RowLab}{%
  \newcommand{\RowLab}[2]{%
    \makebox[0pt][r]{%
      \smash{\raisebox{#2}{\rotatebox{90}{\LARGE\textbf{#1}}}}%
      \hspace{10pt}}%
  }
}{}
\makeatother

\begin{table}[!htbp]  
\centering
\captionsetup{type=figure} 
\resizebox{\linewidth}{!}{%
\begin{tabular}{@{}m{0pt}|c c c c@{}}

\RowLab{Learning with no TTO}{1.0ex} &
\includegraphics{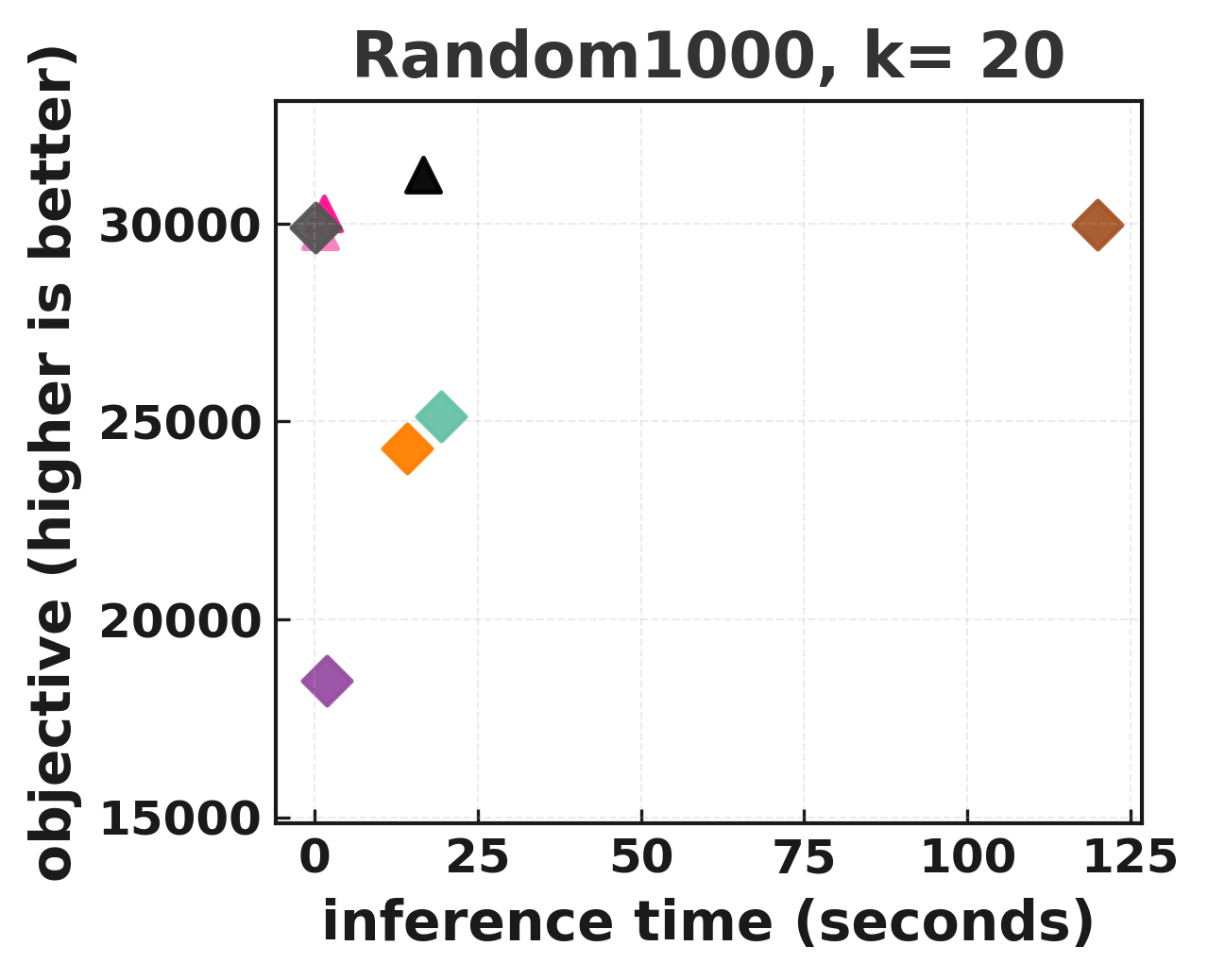} &
\includegraphics{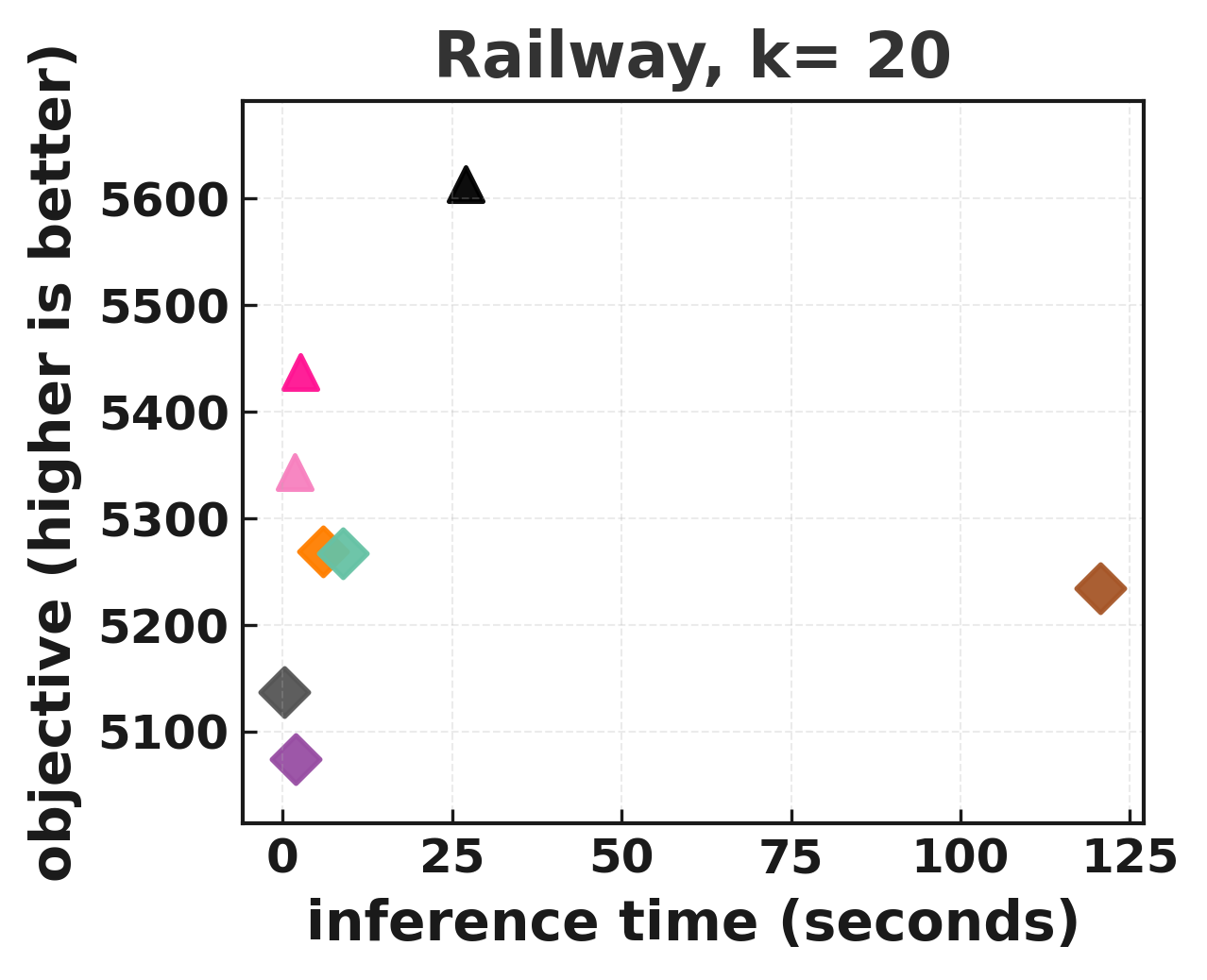} &
\includegraphics{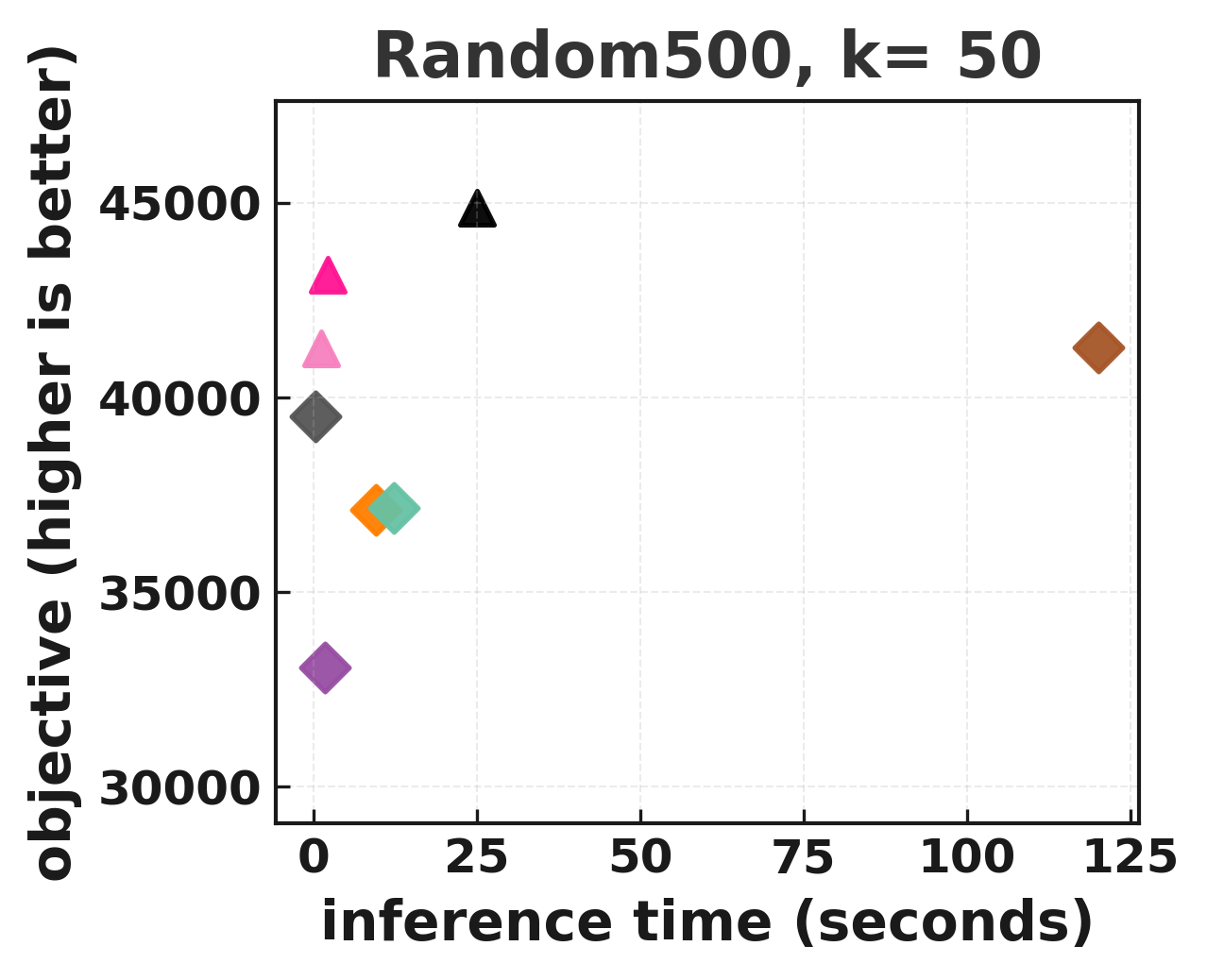} &
\includegraphics{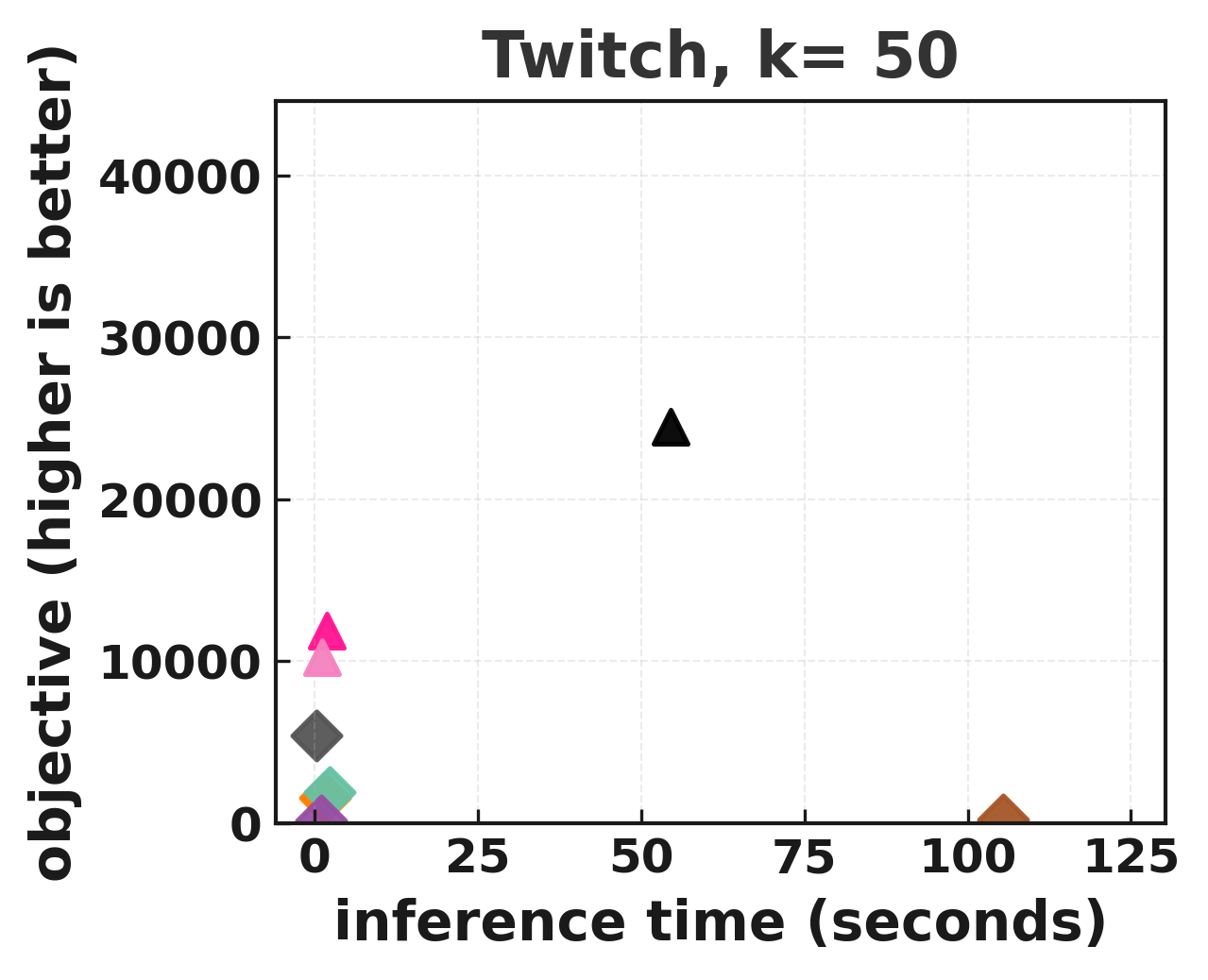} \\
\midrule

\RowLab{Learning with TTO}{0.6ex} &
\includegraphics{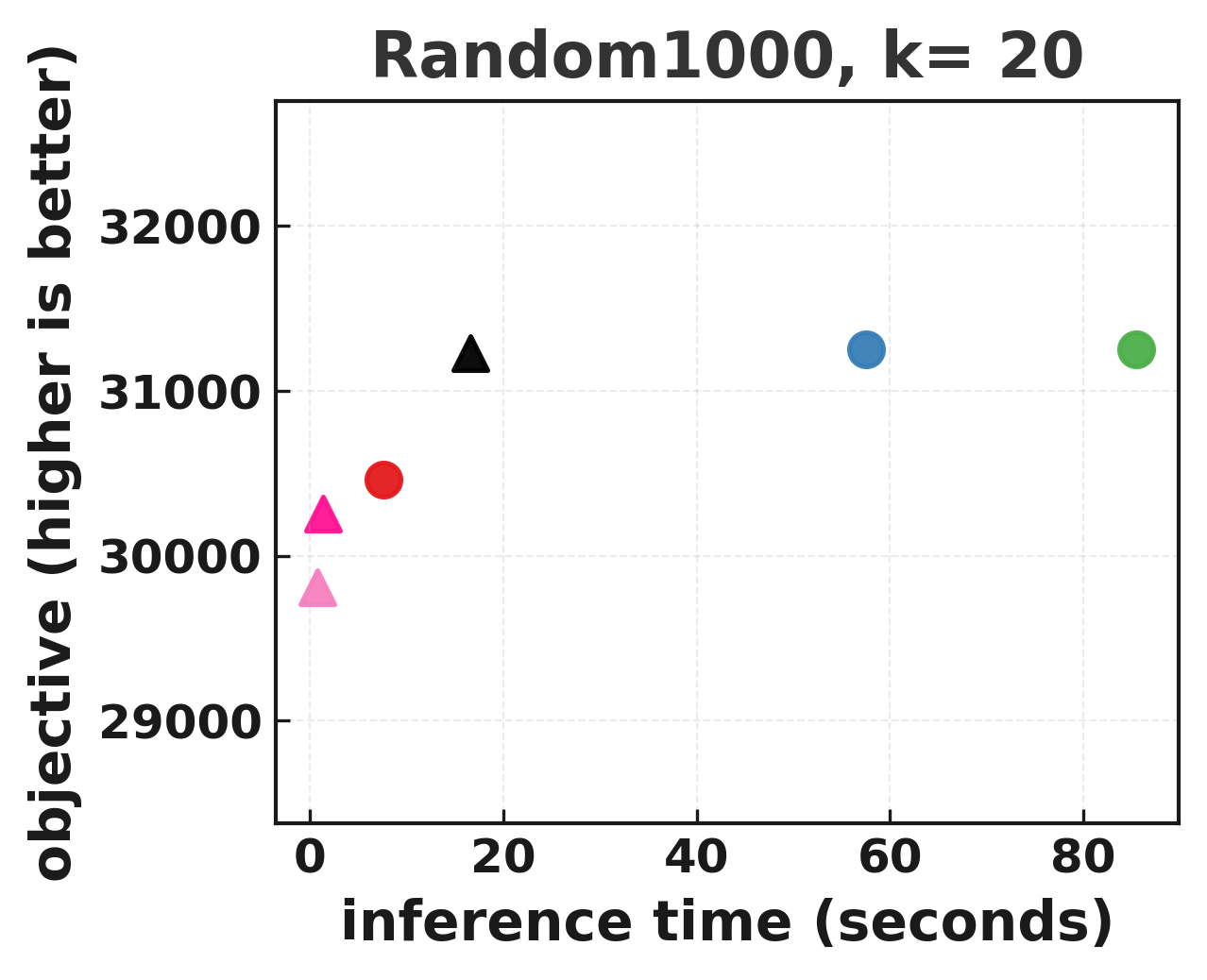} &
\includegraphics{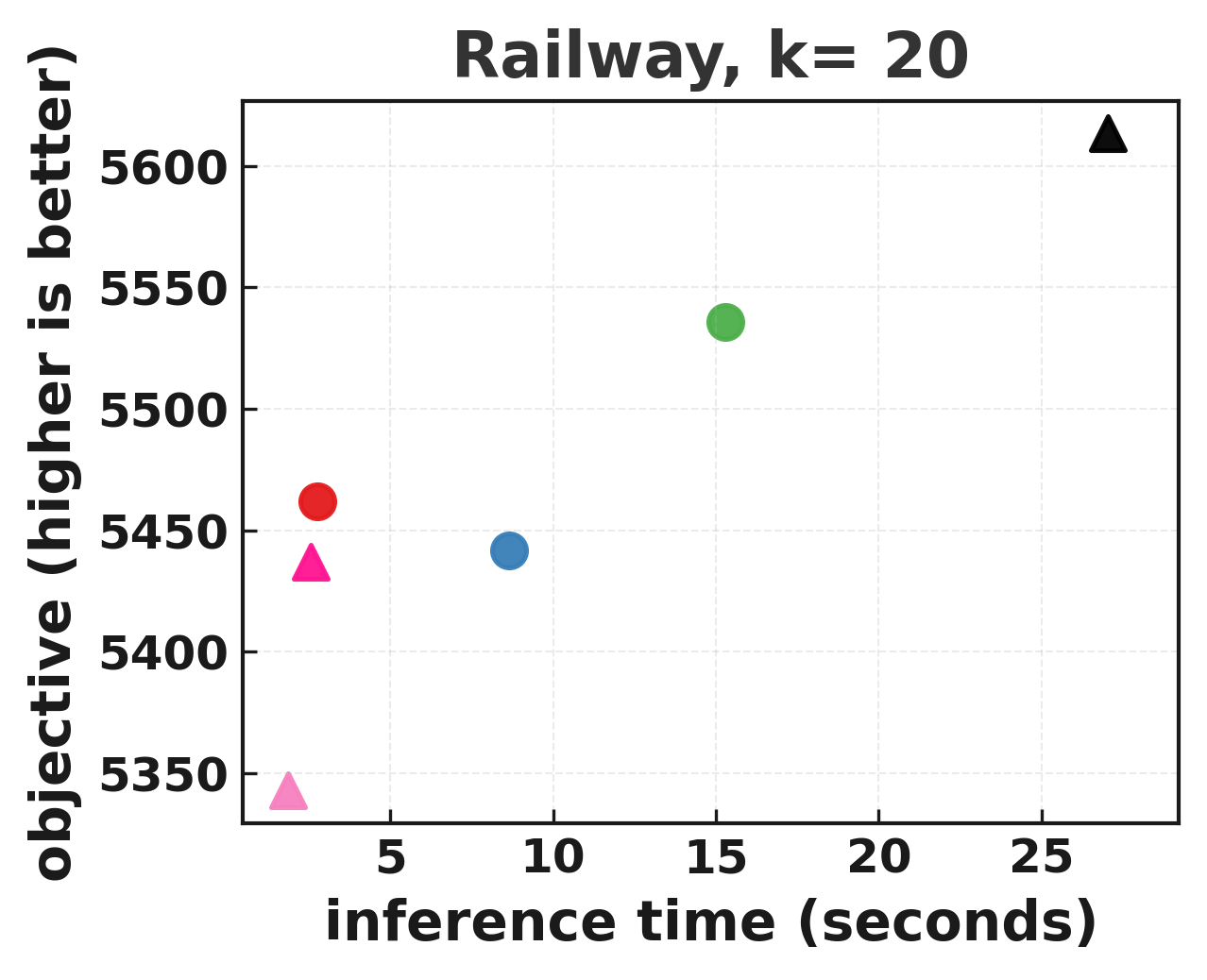} &
\includegraphics{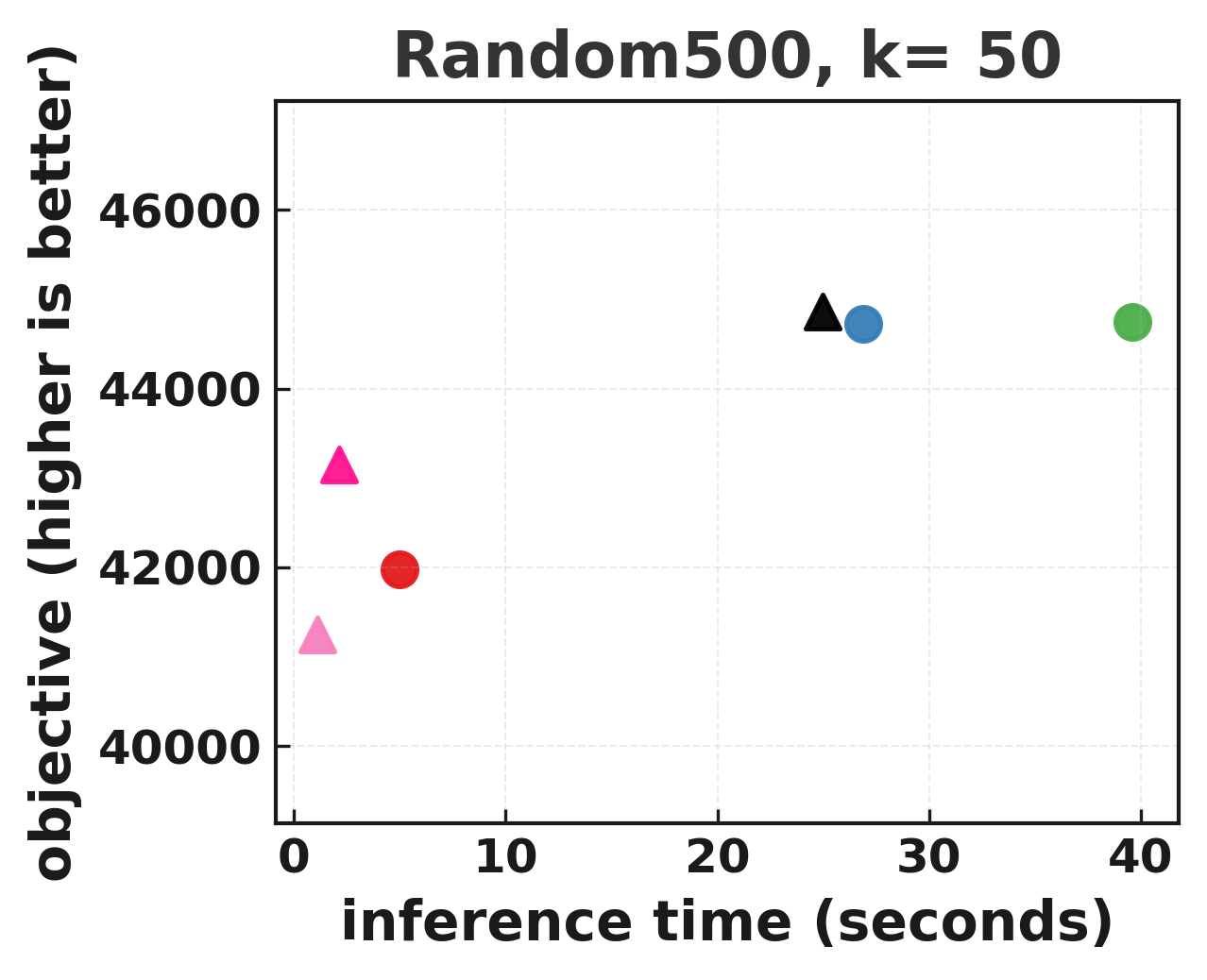} &
\includegraphics{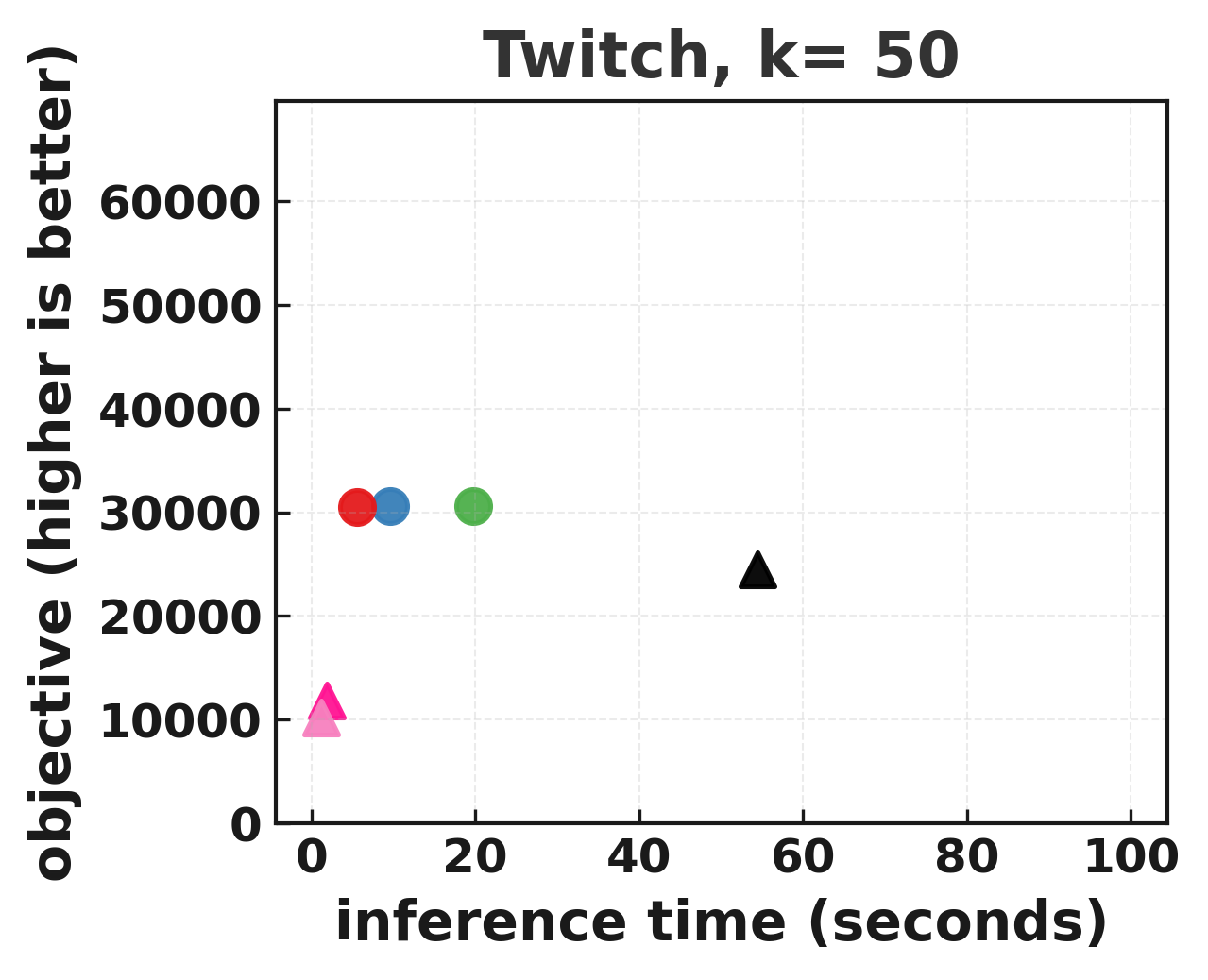} \\
\midrule

\RowLab{Non Learning baselines}{0.6ex} &
\includegraphics{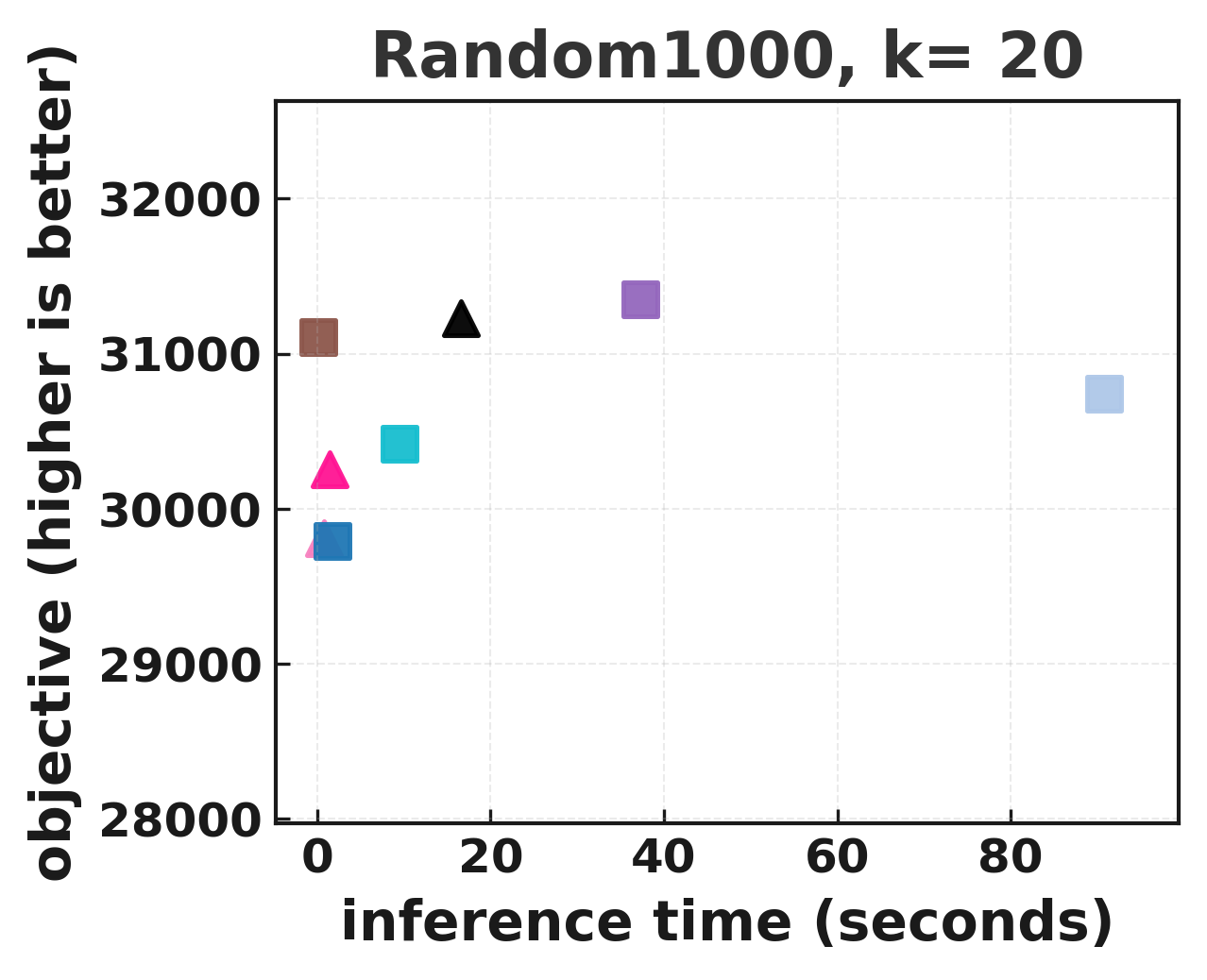} &
\includegraphics{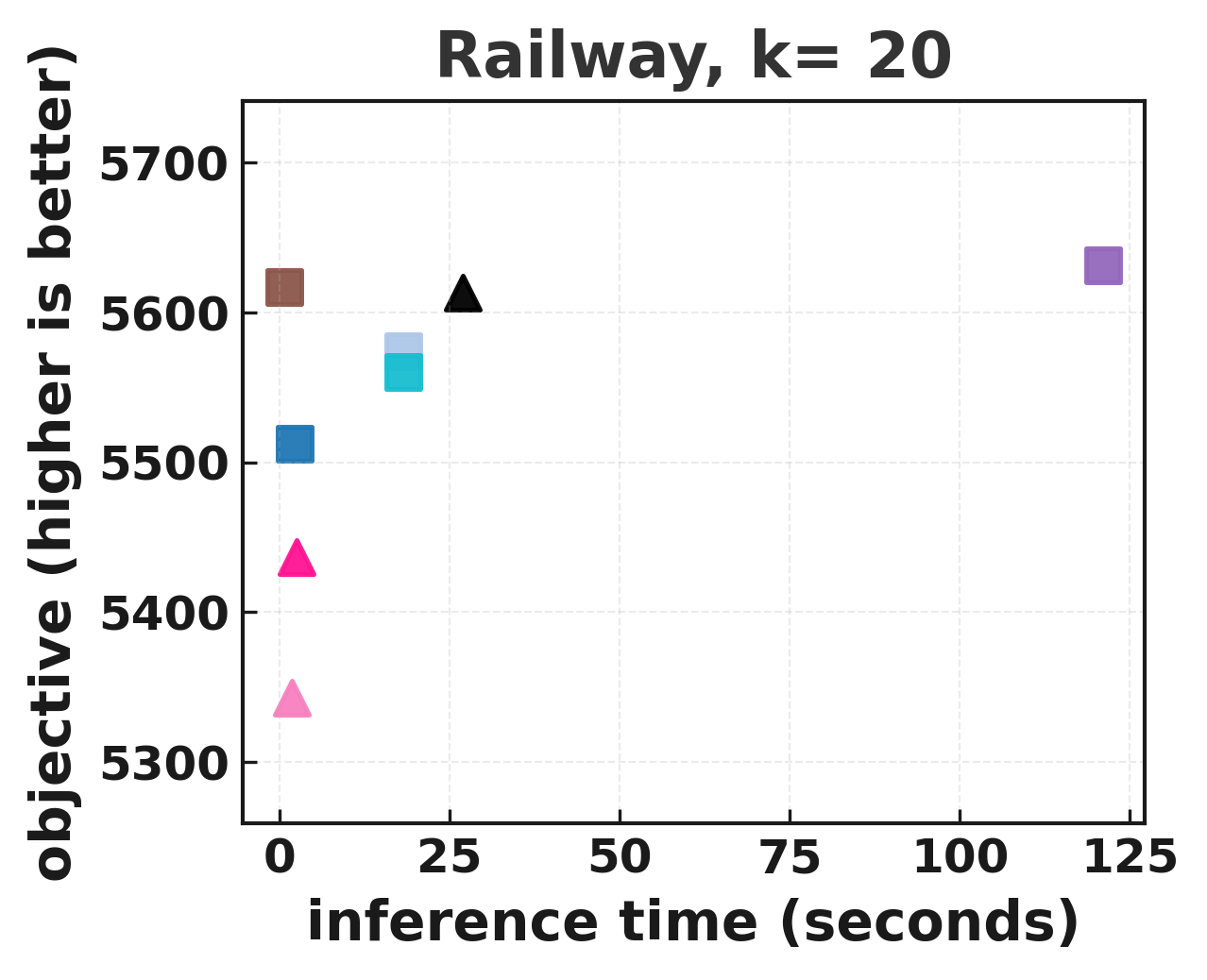} &
\includegraphics{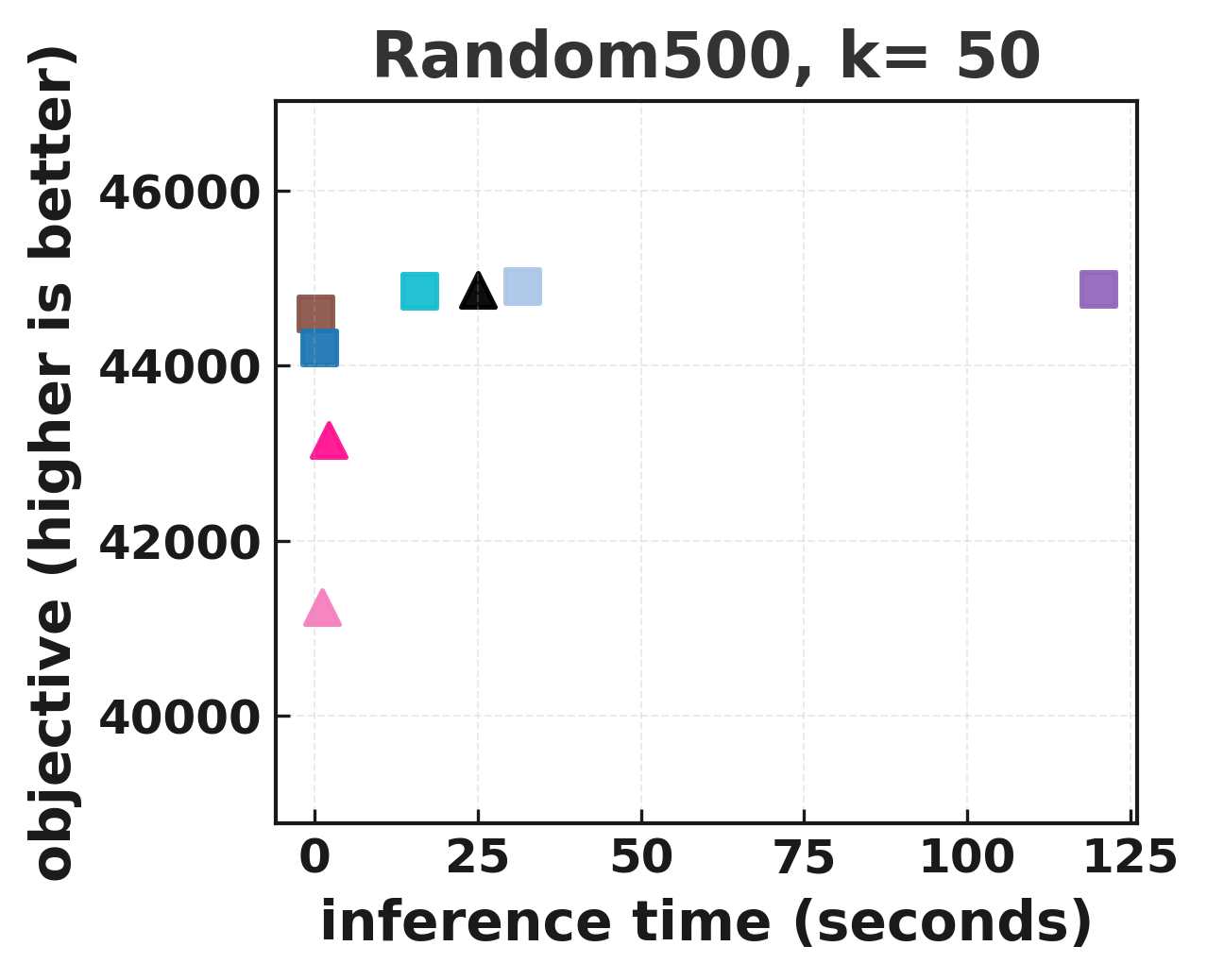} &
\includegraphics{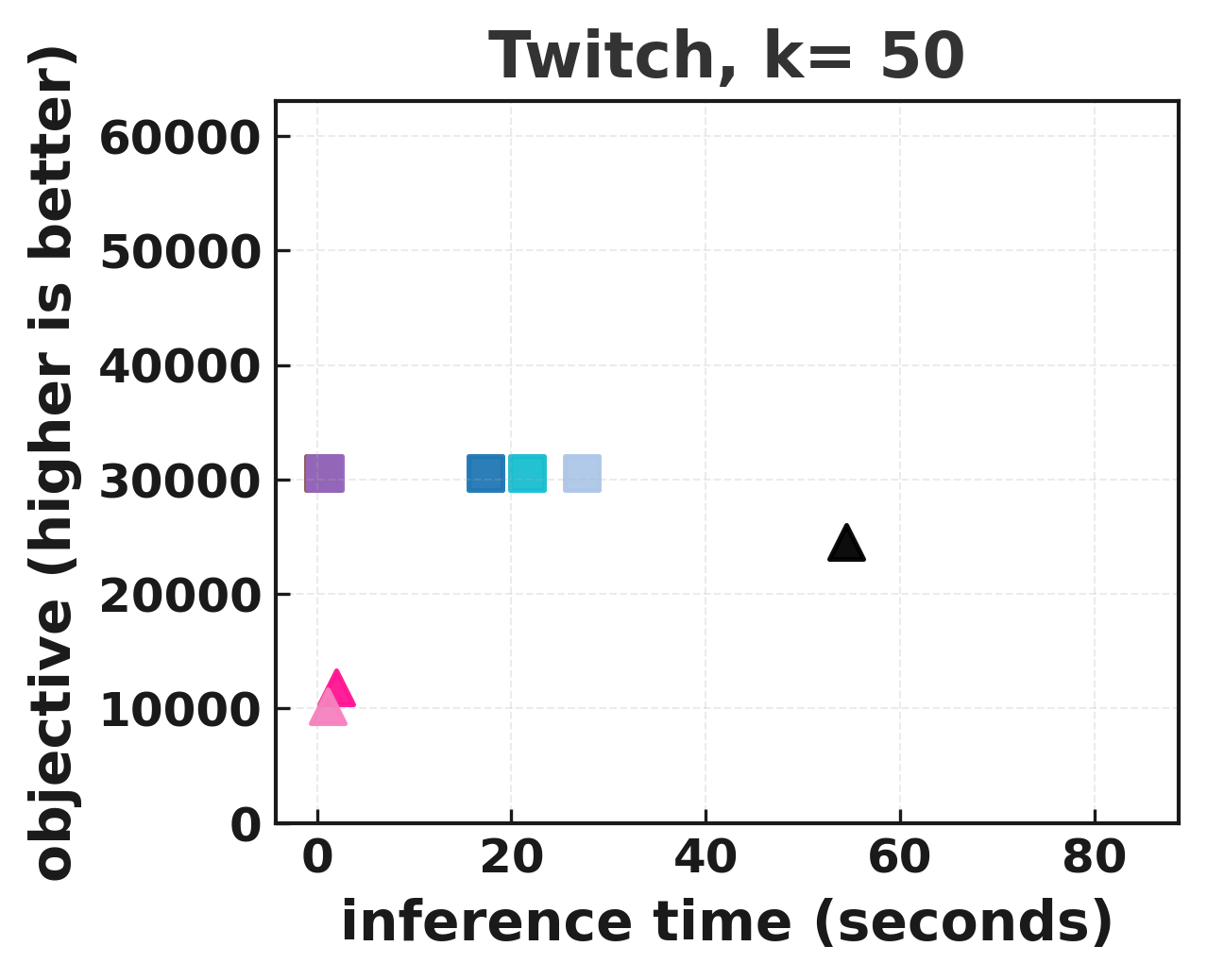} \\

\end{tabular}%
} 

\vspace{6pt}
\includegraphics[width=\linewidth]{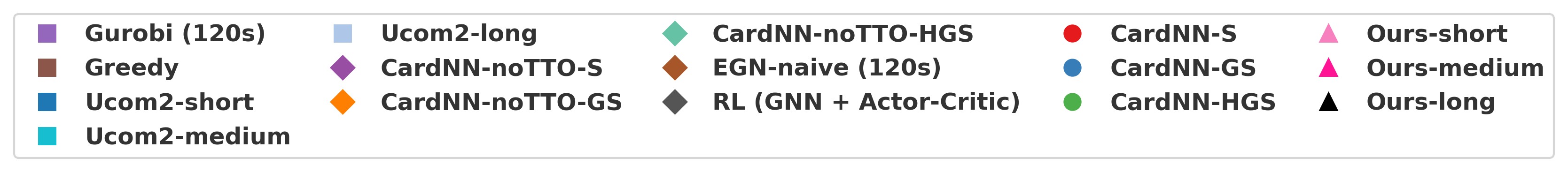}
\vspace{-6pt}

\caption{Performance comparison of our method against baseline approaches across three evaluation settings—Learning without Test-Time Optimization (TTO), Learning with TTO, and non-learning/traditional baselines—on multiple datasets. Metrics used are inference time (lower is better) and objective value (higher is better). In the Learning without TTO setting, our short version consistently outperforms all baselines in both inference time and objective value, demonstrating strong learning capability. When extended to medium and long versions, our method surpasses most TTO-based baselines across datasets, with the exception of the Twitch dataset. It also outperforms the greedy algorithm on multiple benchmarks. (Random(240s) is not included in the plots due to space constraints. Detailed result numbers can be found in \cref{sec:results}.)}
\label{tab:max-cover-experiments}
\end{table}
\makeatletter\@ifundefined{FloatBarrier}{}{\FloatBarrier}\makeatother

\subsection{Discussion}
\label{sec:experiment-discussion}
\textbf{Learnability results.} 
Our end-to-end pipeline shows a very strong learning ability in comparison to the existing learning baselines. In the setting where no method performs optimization at inference time, the \texttt{short} version of our method already shows strong performance compared to existing baselines. In this variant, we apply our decomposition procedure to the output of the neural encoder and select the best set from the decomposition. Based on the rounding guarantees of our extension, the resulting discrete solution is provably at least as good as the encoder's output. Compared to other baselines and \texttt{CardNN-noTTO} (across all three variants), our inference method achieves superior efficiency in both runtime and output quality. Note that we do not include the \texttt{UCOM2} method in this comparison, as it falls outside the scope of this setting; more details are provided later in the paper. Regarding the Twitch dataset, although our \texttt{short} method performs significantly better than other approaches in this setting, we observe that all learning-based methods perform poorly overall on this dataset. As mentioned in the introduction, this underperformance is largely attributed to the architecture of the encoder network and the specific characteristics of the data distribution. This highlights a need for further investigation and suggests that adapting the model architecture to better suit such data is an important direction for future work.

\textbf{Test-Time Optimization (TTO).} In this setting, additional computational time is allowed during inference to improve solution quality. Specifically, we perform a simple optimization procedure: after selecting the best set from the decomposition of the encoder output, we apply local improvement by swapping a few elements with those from other sets in the decomposition’s support. This lightweight optimization step demonstrates that the encoder has successfully learned meaningful structure from the data, as reflected in the quality of the decomposition's support. This is the \texttt{medium} variant of our method. In the \texttt{long} version we moreover perform data augmentation similar to \citep{jin2023empowering}. These procedures yield noticeable improvements on the \texttt{Random500} and \texttt{Random1000} datasets, and produce reasonably good results on the Railway dataset. On the twitch dataset, the \texttt{CardNN} methods directly optimize the neural network output on the test sample with Adam, therefore overcoming the model architecture problem. For our methods, performance remains poor on the Twitch dataset, consistent with observations discussed earlier. 

\textbf{Comparison with Non-Learning methods.} 
While greedy is an efficient baseline with a strong approximation guarantee, our method performs competitively and is capable of outperforming it on datasets like \texttt{Random500} for larger values of $k$. We are also able to outperform UCOM in several cases (e.g., \texttt{Random1000} and \texttt{Railway}), though there are instances where greedy and/or UCOM perform the best, such as the twitch dataset.

\textbf{Ablation.} We conduct an ablation study on the NP-hard problem of finding a cardinality-constrained maximum cut. Given a graph $G=(V,E)$, the goal is to find a set $S$ of $k$ nodes such that the number of edges with exactly one endpoint in $S$ is maximized. We compare the following baselines on two datasets and two settings for the value $k$. Our decomposition-based method consists of two main parts: a NN to predict a point in the polytope, and a loss function to optimize it and aid in learning. To probe the effect of the loss function, we compare the quality of solutions obtained by the decomposition when the input is \emph{i)} a random point projected onto the polytope \emph{ii)} a random point optimized with Adam on the loss. To probe the effect of the neural network, we test the solutions obtained by the decomposition of the predictions of a neural network that has optimized the loss on the same data. Finally, to assess generalization, we compare the decomposed sets obtained from the predictions of a neural net that has been trained on a separate training set. As a sanity check, we also include a greedy algorithm. The table and more details of the experiment are provided in \Cref{sec:max-cut}. We consistently observe that directly optimizing the extension on the test set with Adam significantly improves the objective compared to decomposing a randomly chosen point and reporting the result of its best performing set. Furthermore, the neural net optimized directly on the test set improves consistently over direct optimization with Adam and is competitive with greedy pointing to the benefits of parametrization. Finally, the performance of the SSL approach is close to the competing neural baseline, suggesting that the model generalizes well.


\section{Conclusion}
We proposed a novel geometric approach to neural combinatorial optimization with constraints that effectively tackles the challenges of loss function design and rounding. Our method achieved strong empirical results against both neural baselines and classical methods, and opens up promising avenues for further research on incorporating classical geometric algorithms into neural net pipelines. It is important to note that the model architecture can play an important role in results;  we have not focused on this aspect here but it merits further exploration in future work. Our framework is applicable whenever an oracle for the strong maximization problem exists, but its efficiency will vary based on the polytope. The structure of the polytope influences our ability to obtain interior points, to solve the strong optimization problem, and to compute intersections with its boundary.  Nevertheless, we believe that the blueprint described in this work paves the way towards new hybrid algorithms that can effectively tackle complex combinatorial problems.

\section{Acknowledgements}
Stefanie Jegelka acknowledges support from NSF AI Institute TILOS and the Alexander von Humboldt Foundation. Nikolaos Karalias acknowledges support from the SNSF, in the context of the project "General neural solvers for combinatorial optimization and algorithmic reasoning" (SNSF grant
number: P500PT\_217999) and from NSF AI Institute TILOS.



\newpage
\bibliographystyle{plainnat}
\bibliography{NeurIPS-25/references}

\begin{thebibliography}{95}
\providecommand{\natexlab}[1]{#1}
\providecommand{\url}[1]{\texttt{#1}}
\expandafter\ifx\csname urlstyle\endcsname\relax
  \providecommand{\doi}[1]{doi: #1}\else
  \providecommand{\doi}{doi: \begingroup \urlstyle{rm}\Url}\fi

\bibitem[Amizadeh et~al.(2018)Amizadeh, Matusevych, and Weimer]{amizadeh2018learning}
Saeed Amizadeh, Sergiy Matusevych, and Markus Weimer.
\newblock Learning to solve circuit-sat: An unsupervised differentiable approach.
\newblock In \emph{International Conference on Learning Representations (ICLR)}, 2018.

\bibitem[Amizadeh et~al.(2019)Amizadeh, Matusevych, and Weimer]{amizadeh2019pdp}
Saeed Amizadeh, Sergiy Matusevych, and Markus Weimer.
\newblock Pdp: A general neural framework for learning constraint satisfaction solvers.
\newblock \emph{arXiv preprint arXiv:1903.01969}, 2019.

\bibitem[Bach et~al.(2013)]{bach2013learning}
Francis Bach et~al.
\newblock Learning with submodular functions: A convex optimization perspective.
\newblock \emph{Foundations and Trends{\textregistered} in Machine Learning}, 6\penalty0 (2-3):\penalty0 145--373, 2013.

\bibitem[Barman(2015)]{barman2015approximating}
Siddharth Barman.
\newblock Approximating nash equilibria and dense bipartite subgraphs via an approximate version of caratheodory's theorem.
\newblock In \emph{Proceedings of the forty-seventh annual ACM symposium on Theory of Computing (STOC)}, pages 361--369, 2015.

\bibitem[Bello et~al.(2016)Bello, Pham, Le, Norouzi, and Bengio]{bello2016neural}
Irwan Bello, Hieu Pham, Quoc~V Le, Mohammad Norouzi, and Samy Bengio.
\newblock Neural combinatorial optimization with reinforcement learning.
\newblock \emph{arXiv preprint arXiv:1611.09940}, 2016.

\bibitem[Berthet et~al.(2020)Berthet, Blondel, Teboul, Cuturi, Vert, and Bach]{berthet2020learning}
Quentin Berthet, Mathieu Blondel, Olivier Teboul, Marco Cuturi, Jean-Philippe Vert, and Francis~R Bach.
\newblock Learning with differentiable pertubed optimizers.
\newblock In \emph{Proceedings of the 34th International Conference on Neural Information Processing Systems (NeurIPS)}, 2020.

\bibitem[Bianco et~al.(1987)Bianco, Rinaldi, and Sassano]{bianco1987combinatorial}
Lucio Bianco, Giovanni Rinaldi, and Antonio Sassano.
\newblock \emph{A combinatorial optimization approach to aircraft sequencing problem}.
\newblock Springer, 1987.

\bibitem[Bojja~Venkatakrishnan et~al.(2016)Bojja~Venkatakrishnan, Alizadeh, and Viswanath]{bojja2016costly}
Shaileshh Bojja~Venkatakrishnan, Mohammad Alizadeh, and Pramod Viswanath.
\newblock Costly circuits, submodular schedules and approximate carath{\'e}odory theorems.
\newblock In \emph{Proceedings of the 2016 ACM SIGMETRICS International Conference on Measurement and Modeling of Computer Science}, pages 75--88, 2016.

\bibitem[Boros and Hammer(2002)]{boros2002pseudo}
Endre Boros and Peter~L Hammer.
\newblock Pseudo-boolean optimization.
\newblock \emph{Discrete applied mathematics}, 123\penalty0 (1-3):\penalty0 155--225, 2002.

\bibitem[Bu et~al.(2024)Bu, Jo, Lee, Ahn, and Shin]{bu2024tackling}
Fanchen Bu, Hyeonsoo Jo, Soo~Yong Lee, Sungsoo Ahn, and Kijung Shin.
\newblock Tackling prevalent conditions in unsupervised combinatorial optimization: Cardinality, minimum, covering, and more.
\newblock In \emph{41st International Conference on Machine Learning (ICML)}, pages 4696--4729, 2024.

\bibitem[Calinescu et~al.(2011)Calinescu, Chekuri, P\'{a}l, and Vondr\'{a}k]{caliChVon11}
G.~Calinescu, C.~Chekuri, M.~P\'{a}l, and J.~Vondr\'{a}k.
\newblock Maximizing a submodular set function subject to a matroid constraint.
\newblock \emph{SIAM J. Computing}, 40\penalty0 (6), 2011.

\bibitem[Cappart et~al.(2023)Cappart, Ch{\'e}telat, Khalil, Lodi, Morris, and Velickovic]{cappart2023combinatorial}
Quentin Cappart, Didier Ch{\'e}telat, Elias~B Khalil, Andrea Lodi, Christopher Morris, and Petar Velickovic.
\newblock Combinatorial optimization and reasoning with graph neural networks.
\newblock \emph{J. Mach. Learn. Res.}, 24:\penalty0 130--1, 2023.

\bibitem[Carath{\'e}odory(1907)]{caratheodory1907variabilitatsbereich}
Constantin Carath{\'e}odory.
\newblock {\"U}ber den variabilit{\"a}tsbereich der koeffizienten von potenzreihen, die gegebene werte nicht annehmen.
\newblock \emph{Mathematische Annalen}, 64\penalty0 (1):\penalty0 95--115, 1907.

\bibitem[Chekuri and Kumar(2004)]{chekuri2004maximum}
Chandra Chekuri and Amit Kumar.
\newblock Maximum coverage problem with group budget constraints and applications.
\newblock In \emph{International Workshop on Randomization and Approximation Techniques in Computer Science}, pages 72--83. Springer, 2004.

\bibitem[Chen et~al.(2021)Chen, Sun, Zhang, and Zhang]{Chen21q-InfluenceMax}
Wei Chen, Xiaoming Sun, Jialin Zhang, and Zhijie Zhang.
\newblock Network inference and influence maximization from samples.
\newblock In \emph{Proceedings of the 38th International Conference on Machine Learning (ICML)}, pages 1707--1716, 2021.

\bibitem[Chen et~al.(2024)Chen, Bian, Han, and Cheng]{chen2024interpretable}
Yongqiang Chen, Yatao Bian, Bo~Han, and James Cheng.
\newblock Interpretable and generalizable graph learning via subgraph multilinear extension.
\newblock In \emph{ICLR 2024 Workshop on Machine Learning for Genomics Explorations}, 2024.

\bibitem[Combettes and Pokutta(2023)]{combettes2023revisiting}
Cyrille~W Combettes and Sebastian Pokutta.
\newblock Revisiting the approximate carath{\'e}odory problem via the frank-wolfe algorithm.
\newblock \emph{Mathematical Programming}, 197\penalty0 (1):\penalty0 191--214, 2023.

\bibitem[Donti et~al.(2021)Donti, Rolnick, and Kolter]{donti2021dc3}
Priya~L. Donti, David Rolnick, and J~Zico Kolter.
\newblock {DC}3: A learning method for optimization with hard constraints.
\newblock In \emph{International Conference on Learning Representations (ICLR)}, 2021.

\bibitem[Dwivedi et~al.(2021)Dwivedi, Luu, Laurent, Bengio, and Bresson]{dwivedi2021graph}
Vijay~Prakash Dwivedi, Anh~Tuan Luu, Thomas Laurent, Yoshua Bengio, and Xavier Bresson.
\newblock Graph neural networks with learnable structural and positional representations.
\newblock \emph{arXiv preprint arXiv:2110.07875}, 2021.

\bibitem[Elmachtoub and Grigas(2022)]{elmachtoub2022smart}
Adam~N Elmachtoub and Paul Grigas.
\newblock Smart “predict, then optimize”.
\newblock \emph{Management Science}, 68\penalty0 (1):\penalty0 9--26, 2022.

\bibitem[Ezzeldin et~al.(2019)Ezzeldin, Cardone, Fragouli, and Caire]{ezzeldin2019polynomial}
Yahya~H Ezzeldin, Martina Cardone, Christina Fragouli, and Giuseppe Caire.
\newblock Polynomial-time capacity calculation and scheduling for half-duplex 1-2-1 networks.
\newblock In \emph{2019 IEEE International Symposium on Information Theory (ISIT)}, pages 460--464. IEEE, 2019.

\bibitem[Frerix et~al.(2020)Frerix, Nie{\ss}ner, and Cremers]{frerix2020homogeneous}
Thomas Frerix, Matthias Nie{\ss}ner, and Daniel Cremers.
\newblock Homogeneous linear inequality constraints for neural network activations.
\newblock In \emph{Proceedings of the IEEE/CVF Conference on Computer Vision and Pattern Recognition Workshops}, pages 748--749, 2020.

\bibitem[Friedrich et~al.(2019)Friedrich, G{\"o}bel, Neumann, Quinzan, and Rothenberger]{friedrich2019greedy}
Tobias Friedrich, Andreas G{\"o}bel, Frank Neumann, Francesco Quinzan, and Ralf Rothenberger.
\newblock Greedy maximization of functions with bounded curvature under partition matroid constraints.
\newblock In \emph{Proceedings of the AAAI Conference on Artificial Intelligence}, volume~33, pages 2272--2279, 2019.

\bibitem[Gaile et~al.(2022)Gaile, Draguns, Ozoli{\c{n}}{\v{s}}, and Freivalds]{gaile2022unsupervised}
El{\=\i}za Gaile, Andis Draguns, Em{\=\i}ls Ozoli{\c{n}}{\v{s}}, and K{\=a}rlis Freivalds.
\newblock Unsupervised training for neural tsp solver.
\newblock In \emph{International Conference on Learning and Intelligent Optimization}, pages 334--346. Springer, 2022.

\bibitem[Gr{\"o}tschel et~al.(2012)Gr{\"o}tschel, Lov{\'a}sz, and Schrijver]{grotschel2012geometric}
Martin Gr{\"o}tschel, L{\'a}szl{\'o} Lov{\'a}sz, and Alexander Schrijver.
\newblock \emph{Geometric algorithms and combinatorial optimization}, volume~2.
\newblock Springer Science \& Business Media, 2012.

\bibitem[Grover et~al.(2018)Grover, Wang, Zweig, and Ermon]{grover2018stochastic}
Aditya Grover, Eric Wang, Aaron Zweig, and Stefano Ermon.
\newblock Stochastic optimization of sorting networks via continuous relaxations.
\newblock In \emph{International Conference on Learning Representations (ICLR)}, 2018.

\bibitem[Hamilton et~al.(2017)Hamilton, Ying, and Leskovec]{hamilton2018inductiverepresentationlearninglarge}
Will Hamilton, Zhitao Ying, and Jure Leskovec.
\newblock Inductive representation learning on large graphs.
\newblock \emph{Advances in Neural Information Processing Systems (NeurIPS)}, 30, 2017.

\bibitem[Hamilton et~al.(2018)Hamilton, Bajaj, Zitnik, Jurafsky, and Leskovec]{hamilton2018embedding}
Will Hamilton, Payal Bajaj, Marinka Zitnik, Dan Jurafsky, and Jure Leskovec.
\newblock Embedding logical queries on knowledge graphs.
\newblock \emph{Advances in Neural Information Processing Systems (NeurIPS)}, 31, 2018.

\bibitem[Hoeksma et~al.(2016)Hoeksma, Manthey, and Uetz]{hoeksma2016efficient}
Ruben Hoeksma, Bodo Manthey, and Marc Uetz.
\newblock Efficient implementation of carath{\'e}odory’s theorem for the single machine scheduling polytope.
\newblock \emph{Discrete applied mathematics}, 215:\penalty0 136--145, 2016.

\bibitem[Hong et~al.(2024)Hong, Kim, Jang, Yoon, Song, and Lee]{hong2024unsupervised}
Seong-Hyun Hong, Hyun-Sung Kim, Zian Jang, Deunsol Yoon, Hyungseok Song, and Byung-Jun Lee.
\newblock Unsupervised training of diffusion models for feasible solution generation in neural combinatorial optimization.
\newblock \emph{arXiv preprint arXiv:2411.00003}, 2024.

\bibitem[Huang et~al.(2006)Huang, Lai, and Jennings]{huang2006maximum}
Xiuzhen Huang, Jing Lai, and Steven~F Jennings.
\newblock Maximum common subgraph: some upper bound and lower bound results.
\newblock \emph{BMC bioinformatics}, 7:\penalty0 1--9, 2006.

\bibitem[H{\r{u}}la et~al.(2024)H{\r{u}}la, Moj{\v{z}}{\'\i}{\v{s}}ek, and Janota]{huula2024understanding}
Jan H{\r{u}}la, David Moj{\v{z}}{\'\i}{\v{s}}ek, and Mikol{\'a}{\v{s}} Janota.
\newblock Understanding gnns for boolean satisfiability through approximation algorithms.
\newblock In \emph{Proceedings of the 33rd ACM International Conference on Information and Knowledge Management}, pages 953--961, 2024.

\bibitem[Ichikawa(2024)]{ichikawa2024controlling}
Yuma Ichikawa.
\newblock Controlling continuous relaxation for combinatorial optimization.
\newblock \emph{Advances in Neural Information Processing Systems (NeurIPS)}, 37:\penalty0 47189--47216, 2024.

\bibitem[Jin et~al.(2023)Jin, Zhao, Ding, Liu, Tang, and Shah]{jin2023empowering}
Wei Jin, Tong Zhao, Jiayuan Ding, Yozen Liu, Jiliang Tang, and Neil Shah.
\newblock Empowering graph representation learning with test-time graph transformation.
\newblock In \emph{The Eleventh International Conference on Learning Representations (ICLR)}, 2023.

\bibitem[Karalias and Loukas(2020)]{karalias2020erdos}
Nikolaos Karalias and Andreas Loukas.
\newblock Erdos goes neural: an unsupervised learning framework for combinatorial optimization on graphs.
\newblock \emph{arXiv preprint arXiv:2006.10643}, 2020.

\bibitem[Karalias et~al.(2022)Karalias, Robinson, Loukas, and Jegelka]{karalias2022neural}
Nikolaos Karalias, Joshua Robinson, Andreas Loukas, and Stefanie Jegelka.
\newblock Neural set function extensions: Learning with discrete functions in high dimensions.
\newblock \emph{Advances in Neural Information Processing Systems (NeurIPS)}, 35:\penalty0 15338--15352, 2022.

\bibitem[Kempe et~al.(2003)Kempe, Kleinberg, and Tardos]{KempeMaxInfluence}
David Kempe, Jon Kleinberg, and \'{E}va Tardos.
\newblock Maximizing the spread of influence through a social network.
\newblock In \emph{Proceedings of the Ninth ACM SIGKDD International Conference on Knowledge Discovery and Data Mining (KDD)}, page 137–146. Association for Computing Machinery, 2003.

\bibitem[Keriven and Vaiter(2023)]{keriven2023functions}
Nicolas Keriven and Samuel Vaiter.
\newblock What functions can graph neural networks compute on random graphs? the role of positional encoding.
\newblock \emph{Advances in Neural Information Processing Systems (NeurIPS)}, 36:\penalty0 11823--11849, 2023.

\bibitem[Khuller et~al.(1999)Khuller, Moss, and Naor]{khuller1999budgeted}
Samir Khuller, Anna Moss, and Joseph~Seffi Naor.
\newblock The budgeted maximum coverage problem.
\newblock \emph{Information processing letters}, 70\penalty0 (1):\penalty0 39--45, 1999.

\bibitem[Kingma and Ba(2014)]{kingma2014adam}
Diederik~P Kingma and Jimmy Ba.
\newblock Adam: A method for stochastic optimization.
\newblock \emph{arXiv preprint arXiv:1412.6980}, 2014.

\bibitem[Kirchhoff(1847)]{kirchhoff1847ueber}
Gustav Kirchhoff.
\newblock Ueber die aufl{\"o}sung der gleichungen, auf welche man bei der untersuchung der linearen vertheilung galvanischer str{\"o}me gef{\"u}hrt wird.
\newblock \emph{Annalen der Physik}, 148\penalty0 (12):\penalty0 497--508, 1847.

\bibitem[Kletti et~al.(2022)Kletti, Renders, and Loiseau]{kletti2022introducing}
Till Kletti, Jean-Michel Renders, and Patrick Loiseau.
\newblock Introducing the expohedron for efficient pareto-optimal fairness-utility amortizations in repeated rankings.
\newblock In \emph{Proceedings of the Fifteenth ACM International Conference on Web Search and Data Mining}, pages 498--507, 2022.

\bibitem[Kollovieh et~al.(2024)Kollovieh, Charpentier, Z{\"u}gner, and G{\"u}nnemann]{kollovieh2024expected}
Marcel Kollovieh, Bertrand Charpentier, Daniel Z{\"u}gner, and Stephan G{\"u}nnemann.
\newblock Expected probabilistic hierarchies.
\newblock \emph{Advances in Neural Information Processing Systems (NeurIPS)}, 37:\penalty0 13818--13850, 2024.

\bibitem[Konda and Tsitsiklis(1999)]{actor_critic}
Vijay Konda and John Tsitsiklis.
\newblock Actor-critic algorithms.
\newblock In \emph{Advances in Neural Information Processing Systems (NeurIPS)}, volume~12. MIT Press, 1999.

\bibitem[Kool et~al.(2019)Kool, van Hoof, and Welling]{kool2019attentionlearnsolverouting}
Wouter Kool, Herke van Hoof, and Max Welling.
\newblock Attention, learn to solve routing problems!
\newblock In \emph{International Conference on Learning Representations (ICLR)}, 2019.

\bibitem[Krause et~al.(2006)Krause, Guestrin, Gupta, and Kleinberg]{krause2006near}
Andreas Krause, Carlos Guestrin, Anupam Gupta, and Jon Kleinberg.
\newblock Near-optimal sensor placements: Maximizing information while minimizing communication cost.
\newblock In \emph{Proceedings of the 5th international conference on Information processing in sensor networks}, pages 2--10, 2006.

\bibitem[Kwon et~al.(2020)Kwon, Choo, Kim, Yoon, Gwon, and Min]{pomo2020}
Yeong-Dae Kwon, Jinho Choo, Byoungjip Kim, Iljoo Yoon, Youngjune Gwon, and Seungjai Min.
\newblock Pomo: Policy optimization with multiple optima for reinforcement learning.
\newblock In \emph{Advances in Neural Information Processing Systems (NeurIPS)}, volume~33, pages 21188--21198. Curran Associates, Inc., 2020.

\bibitem[Li et~al.(2025)Li, Liang, and Chen]{li2025gauge}
Xinpeng Li, Enming Liang, and Minghua Chen.
\newblock Gauge flow matching for efficient constrained generative modeling over general convex set.
\newblock In \emph{ICLR 2025 Workshop on Deep Generative Model in Machine Learning: Theory, Principle and Efficacy}, 2025.

\bibitem[Li et~al.(2015)Li, Tarlow, Brockschmidt, and Zemel]{li2015gated}
Yujia Li, Daniel Tarlow, Marc Brockschmidt, and Richard Zemel.
\newblock Gated graph sequence neural networks.
\newblock \emph{arXiv preprint arXiv:1511.05493}, 2015.

\bibitem[Liang and Chen()]{liangefficient}
Enming Liang and Minghua Chen.
\newblock Efficient bisection projection to ensure nn solution feasibility for optimization over general set.

\bibitem[Liang et~al.(2023)Liang, Chen, and Low]{liang2023low}
Enming Liang, Minghua Chen, and Steven~H Low.
\newblock Low complexity homeomorphic projection to ensure neural-network solution feasibility for optimization over (non-) convex set.
\newblock In \emph{International Conference on Machine Learning (ICML)}, pages 20623--20649, 2023.

\bibitem[Liang et~al.(2024)Liang, Chen, and Low]{liang2024homeomorphic}
Enming Liang, Minghua Chen, and Steven~H Low.
\newblock Homeomorphic projection to ensure neural-network solution feasibility for constrained optimization.
\newblock \emph{Journal of Machine Learning Research}, 25\penalty0 (329):\penalty0 1--55, 2024.

\bibitem[Lim et~al.(2022)Lim, Robinson, Zhao, Smidt, Sra, Maron, and Jegelka]{lim2022sign}
Derek Lim, Joshua Robinson, Lingxiao Zhao, Tess Smidt, Suvrit Sra, Haggai Maron, and Stefanie Jegelka.
\newblock Sign and basis invariant networks for spectral graph representation learning.
\newblock \emph{arXiv preprint arXiv:2202.13013}, 2022.

\bibitem[Lin and Bilmes(2011)]{lin-bilmes-2011-class}
Hui Lin and Jeff Bilmes.
\newblock A class of submodular functions for document summarization.
\newblock In \emph{Proceedings of the 49th Annual Meeting of the Association for Computational Linguistics: Human Language Technologies}, pages 510--520, 2011.

\bibitem[Lov{\'a}sz(2003)]{lovasz2003semidefinite}
L{\'a}szl{\'o} Lov{\'a}sz.
\newblock Semidefinite programs and combinatorial optimization.
\newblock In \emph{Recent advances in algorithms and combinatorics}, pages 137--194. Springer, 2003.

\bibitem[Mena et~al.(2018)Mena, Belanger, Linderman, and Snoek]{mena2018learning}
Gonzalo Mena, David Belanger, Scott Linderman, and Jasper Snoek.
\newblock Learning latent permutations with gumbel-sinkhorn networks.
\newblock In \emph{International Conference on Learning Representations (ICLR)}, 2018.

\bibitem[Min et~al.(2022)Min, Wenkel, Perlmutter, and Wolf]{min2022can}
Yimeng Min, Frederik Wenkel, Michael Perlmutter, and Guy Wolf.
\newblock Can hybrid geometric scattering networks help solve the maximum clique problem?
\newblock \emph{Advances in Neural Information Processing Systems (NeurIPS)}, 35:\penalty0 22713--22724, 2022.

\bibitem[Mirhoseini et~al.(2020)Mirhoseini, Goldie, Yazgan, Jiang, Songhori, Wang, Lee, Johnson, Pathak, Bae, et~al.]{mirhoseini2020chip}
Azalia Mirhoseini, Anna Goldie, Mustafa Yazgan, Joe Jiang, Ebrahim Songhori, Shen Wang, Young-Joon Lee, Eric Johnson, Omkar Pathak, Sungmin Bae, et~al.
\newblock Chip placement with deep reinforcement learning.
\newblock \emph{arXiv preprint arXiv:2004.10746}, 2020.

\bibitem[Mirrokni et~al.(2017)Mirrokni, Leme, Vladu, and Wong]{mirrokni2017tight}
Vahab Mirrokni, Renato~Paes Leme, Adrian Vladu, and Sam Chiu-wai Wong.
\newblock Tight bounds for approximate carath{\'e}odory and beyond.
\newblock In \emph{International Conference on Machine Learning (ICML)}, pages 2440--2448. PMLR, 2017.

\bibitem[Mundinger et~al.(2025)Mundinger, Zimmer, Kiem, Spiegel, and Pokutta]{mundinger2025neural}
Konrad Mundinger, Max Zimmer, Aldo Kiem, Christoph Spiegel, and Sebastian Pokutta.
\newblock Neural discovery in mathematics: Do machines dream of colored planes?
\newblock \emph{arXiv preprint arXiv:2501.18527}, 2025.

\bibitem[Nemhauser et~al.(1978)Nemhauser, Wolsey, and Fisher]{NemhauserWF78}
George~L. Nemhauser, Laurence~A. Wolsey, and Marshall~L. Fisher.
\newblock An analysis of approximations for maximizing submodular set functions - {I}.
\newblock \emph{Math. Program.}, 14\penalty0 (1):\penalty0 265--294, 1978.

\bibitem[Nerem et~al.(2024)Nerem, Luo, Rafiey, and Wang]{nerem2024differentiable}
Robert~R Nerem, Zhishang Luo, Akbar Rafiey, and Yusu Wang.
\newblock Differentiable extensions with rounding guarantees for combinatorial optimization over permutations.
\newblock \emph{arXiv preprint arXiv:2411.10707}, 2024.

\bibitem[Ozolins et~al.(2022)Ozolins, Freivalds, Draguns, Gaile, Zakovskis, and Kozlovics]{ozolins2022goal}
Emils Ozolins, Karlis Freivalds, Andis Draguns, Eliza Gaile, Ronalds Zakovskis, and Sergejs Kozlovics.
\newblock Goal-aware neural sat solver.
\newblock In \emph{2022 International Joint Conference on Neural Networks (IJCNN)}, pages 1--8. IEEE, 2022.

\bibitem[Pardalos and Resende(2002)]{pardalos2002handbook}
Panos~M Pardalos and Mauricio~GC Resende.
\newblock Handbook of applied optimization.
\newblock \emph{(No Title)}, 2002.

\bibitem[Paulus et~al.(2021)Paulus, Rol{\'\i}nek, Musil, Amos, and Martius]{paulus2021comboptnet}
Anselm Paulus, Michal Rol{\'\i}nek, V{\'\i}t Musil, Brandon Amos, and Georg Martius.
\newblock Comboptnet: Fit the right np-hard problem by learning integer programming constraints.
\newblock \emph{arXiv preprint arXiv:2105.02343}, 2021.

\bibitem[Paulus et~al.(2024)Paulus, Martius, and Musil]{paulus2024lpgd}
Anselm Paulus, Georg Martius, and V{\'\i}t Musil.
\newblock Lpgd: A general framework for backpropagation through embedded optimization layers.
\newblock \emph{arXiv preprint arXiv:2407.05920}, 2024.

\bibitem[Paulus et~al.(2020)Paulus, Choi, Tarlow, Krause, and Maddison]{paulus2020gradient}
Max~Benedikt Paulus, Dami Choi, Daniel Tarlow, Andreas Krause, and Chris~J Maddison.
\newblock Gradient estimation with stochastic softmax tricks.
\newblock In \emph{Advances in Neural Information Processing Systems (NeurIPS)}, 2020.

\bibitem[Pogan{\v{c}}i{\'c} et~al.(2019)Pogan{\v{c}}i{\'c}, Paulus, Musil, Martius, and Rolinek]{vlastelica2019differentiation}
Marin~Vlastelica Pogan{\v{c}}i{\'c}, Anselm Paulus, Vit Musil, Georg Martius, and Michal Rolinek.
\newblock Differentiation of blackbox combinatorial solvers.
\newblock In \emph{International Conference on Learning Representations (ICLR)}, 2019.

\bibitem[Qiu et~al.(2022)Qiu, Sun, and Yang]{dimes2022}
Ruizhong Qiu, Zhiqing Sun, and Yiming Yang.
\newblock Dimes: A differentiable meta solver for combinatorial optimization problems.
\newblock In \emph{Advances in Neural Information Processing Systems (NeurIPS)}, volume~35, pages 25531--25546, 2022.

\bibitem[Rispoli(2008)]{rispoli2008graph}
Fred~J Rispoli.
\newblock The graph of the hypersimplex.
\newblock \emph{arXiv preprint arXiv:0811.2981}, 2008.

\bibitem[Sander et~al.(2023)Sander, Puigcerver, Djolonga, Peyr{\'e}, and Blondel]{sander2023fast}
Michael~Eli Sander, Joan Puigcerver, Josip Djolonga, Gabriel Peyr{\'e}, and Mathieu Blondel.
\newblock Fast, differentiable and sparse top-k: a convex analysis perspective.
\newblock In \emph{International Conference on Machine Learning (ICML)}, pages 29919--29936. PMLR, 2023.

\bibitem[Sanokowski et~al.(2024)Sanokowski, Hochreiter, and Lehner]{sanokowski2024diffusion}
Sebastian Sanokowski, Sepp Hochreiter, and Sebastian Lehner.
\newblock A diffusion model framework for unsupervised neural combinatorial optimization.
\newblock In \emph{Proceedings of the 41st International Conference on Machine Learning (ICML)}, pages 43346--43367, 2024.

\bibitem[Saraf and Slater(2006)]{saraf2006efficient}
Aditya~P Saraf and Gary~L Slater.
\newblock An efficient combinatorial optimization algorithm for optimal scheduling of aircraft arrivals at congested airports.
\newblock In \emph{2006 IEEE aerospace conference}, pages 11--pp. IEEE, 2006.

\bibitem[Sato et~al.(2019)Sato, Yamada, and Kashima]{sato2019approximation}
Ryoma Sato, Makoto Yamada, and Hisashi Kashima.
\newblock Approximation ratios of graph neural networks for combinatorial problems, 2019.

\bibitem[Schrijver(1998)]{schrijver1998theory}
Alexander Schrijver.
\newblock \emph{Theory of linear and integer programming}.
\newblock John Wiley \& Sons, 1998.

\bibitem[Schrijver(2000)]{schrijver2000combinatorial}
Alexander Schrijver.
\newblock A combinatorial algorithm minimizing submodular functions in strongly polynomial time.
\newblock \emph{Journal of Combinatorial Theory, Series B}, 80\penalty0 (2):\penalty0 346--355, 2000.

\bibitem[Schrijver et~al.(2003)]{schrijver2003combinatorial}
Alexander Schrijver et~al.
\newblock \emph{Combinatorial optimization: polyhedra and efficiency}, volume~24.
\newblock Springer, 2003.

\bibitem[Schuetz et~al.(2022{\natexlab{a}})Schuetz, Brubaker, and Katzgraber]{schuetz2022combinatorial}
Martin~JA Schuetz, J~Kyle Brubaker, and Helmut~G Katzgraber.
\newblock Combinatorial optimization with physics-inspired graph neural networks.
\newblock \emph{Nature Machine Intelligence}, 4\penalty0 (4):\penalty0 367--377, 2022{\natexlab{a}}.

\bibitem[Schuetz et~al.(2022{\natexlab{b}})Schuetz, Brubaker, Zhu, and Katzgraber]{schuetz2022graph}
Martin~JA Schuetz, J~Kyle Brubaker, Zhihuai Zhu, and Helmut~G Katzgraber.
\newblock Graph coloring with physics-inspired graph neural networks.
\newblock \emph{Physical Review Research}, 4\penalty0 (4):\penalty0 043131, 2022{\natexlab{b}}.

\bibitem[Sinkhorn(1964)]{sinkhorn1964relationship}
Richard Sinkhorn.
\newblock A relationship between arbitrary positive matrices and doubly stochastic matrices.
\newblock \emph{The annals of mathematical statistics}, 35\penalty0 (2):\penalty0 876--879, 1964.

\bibitem[Spielman and Srivastava(2008)]{spielman2008graph}
Daniel~A Spielman and Nikhil Srivastava.
\newblock Graph sparsification by effective resistances.
\newblock In \emph{Proceedings of the fortieth annual ACM symposium on Theory of computing}, pages 563--568, 2008.

\bibitem[Sun et~al.(2022)Sun, Guha, and Dai]{sun2022annealed}
Haoran Sun, Etash~K Guha, and Hanjun Dai.
\newblock Annealed training for combinatorial optimization on graphs.
\newblock \emph{arXiv preprint arXiv:2207.11542}, 2022.

\bibitem[Sutton et~al.(2000)Sutton, McAllester, Singh, and Mansour]{sutton2000policy}
Richard~S Sutton, David~A McAllester, Satinder~P Singh, and Yishay Mansour.
\newblock Policy gradient methods for reinforcement learning with function approximation.
\newblock In \emph{Advances in neural information Processing Systems (NeurIPS)}, pages 1057--1063, 2000.

\bibitem[Toenshoff et~al.(2021)Toenshoff, Ritzert, Wolf, and Grohe]{toenshoff2021graph}
Jan Toenshoff, Martin Ritzert, Hinrikus Wolf, and Martin Grohe.
\newblock Graph neural networks for maximum constraint satisfaction.
\newblock \emph{Frontiers in artificial intelligence}, 3:\penalty0 98, 2021.

\bibitem[Vinyals et~al.(2015)Vinyals, Fortunato, and Jaitly]{pointer_network}
Oriol Vinyals, Meire Fortunato, and Navdeep Jaitly.
\newblock Pointer networks.
\newblock In \emph{Advances in Neural Information Processing Systems (NeurIPS)}, volume~28. Curran Associates, Inc., 2015.

\bibitem[Vondr\'ak(2008)]{vondrak08}
J.~Vondr\'ak.
\newblock Optimal approximation for the submodular welfare problem in the value oracle model.
\newblock In \emph{Symposium on Theory of Computing (STOC)}, 2008.

\bibitem[Wang et~al.(2022{\natexlab{a}})Wang, Wu, Yang, Hao, and Li]{wang2022unsupervised}
Haoyu~Peter Wang, Nan Wu, Hang Yang, Cong Hao, and Pan Li.
\newblock Unsupervised learning for combinatorial optimization with principled objective relaxation.
\newblock In \emph{Advances in Neural Information Processing Systems (NeurIPS)}, 2022{\natexlab{a}}.

\bibitem[Wang et~al.(2022{\natexlab{b}})Wang, Shen, Chen, Yang, Tao, and Yan]{wang2022towards}
Runzhong Wang, Li~Shen, Yiting Chen, Xiaokang Yang, Dacheng Tao, and Junchi Yan.
\newblock Towards one-shot neural combinatorial solvers: Theoretical and empirical notes on the cardinality-constrained case.
\newblock In \emph{The Eleventh International Conference on Learning Representations (ICLR)}, 2022{\natexlab{b}}.

\bibitem[Wang et~al.(2023)Wang, Zhang, Guo, Chen, Yang, and Yan]{wang2023linsatnet}
Runzhong Wang, Yunhao Zhang, Ziao Guo, Tianyi Chen, Xiaokang Yang, and Junchi Yan.
\newblock Linsatnet: the positive linear satisfiability neural networks.
\newblock In \emph{International Conference on Machine Learning (ICML)}, pages 36605--36625. PMLR, 2023.

\bibitem[Xu et~al.(2020)Xu, Hui, Fu, and Zhang]{xu2020tilingnn}
Hao Xu, Ka-Hei Hui, Chi-Wing Fu, and Hao Zhang.
\newblock Tilingnn: learning to tile with self-supervised graph neural network.
\newblock \emph{ACM Transactions on Graphics (TOG)}, 39\penalty0 (4):\penalty0 129--1, 2020.

\bibitem[Xu et~al.(2019)Xu, Hu, Leskovec, and Jegelka]{xu2018how}
Keyulu Xu, Weihua Hu, Jure Leskovec, and Stefanie Jegelka.
\newblock How powerful are graph neural networks?
\newblock In \emph{International Conference on Learning Representations (ICLR)}, 2019.

\bibitem[Yanardag and Vishwanathan(2015)]{yanardag2015deep}
Pinar Yanardag and SVN Vishwanathan.
\newblock Deep graph kernels.
\newblock In \emph{Proceedings of the 21th ACM SIGKDD international conference on knowledge discovery and data mining}, pages 1365--1374, 2015.

\bibitem[Yau et~al.(2024)Yau, Karalias, Lu, Xu, and Jegelka]{yau2024graph}
Morris Yau, Nikolaos Karalias, Eric Lu, Jessica Xu, and Stefanie Jegelka.
\newblock Are graph neural networks optimal approximation algorithms?
\newblock \emph{Advances in Neural Information Processing Systems (NeurIPS)}, 37:\penalty0 73124--73181, 2024.

\bibitem[Yener et~al.(1997)Yener, Ofek, and Yung]{yener1997combinatorial}
B{\"u}lent Yener, Yoram Ofek, and Moti Yung.
\newblock Combinatorial design of congestion-free networks.
\newblock \emph{IEEE/ACM transactions on networking}, 5\penalty0 (6):\penalty0 989--1000, 1997.

\bibitem[Zeng et~al.(2024)Zeng, Yang, Zhou, Yang, and Guo]{zeng2024glinsat}
Hongtai Zeng, Chao Yang, Yanzhen Zhou, Cheng Yang, and Qinglai Guo.
\newblock Glinsat: The general linear satisfiability neural network layer by accelerated gradient descent.
\newblock \emph{Advances in Neural Information Processing Systems (NeurIPS)}, 37:\penalty0 122584--122615, 2024.

\end{thebibliography}

\newpage
\appendix

\section{Additional discussion of related work}
 Reinforcement learning (RL)-based construction heuristics have been one of the leading paradigms in neural CO. The first application to use RL with neural networks are \citep{bello2016neural}, where they combined a Pointer Network \citep{pointer_network} with actor-critic reinforcement algorithm \citep{actor_critic} to generate solutions for the Traveling Salesman Problem. Later works including \citep{kool2019attentionlearnsolverouting}, \citep{pomo2020}, and \citep{dimes2022} have gained great success in solving larger-scale CO problems with RL algorithms, and also showed the ability to generalize to large scale problem instances. 
There are similarities between our approach and the general approach followed in RL algorithms like policy gradient \citep{sutton2000policy} that are worth discussing. In RL, one learns a distribution that maximizes the expected reward. This is done by sampling solutions from the outputs of a neural network which are treated as the parameters of a probability distribution, then calculating their reward and weighing it by their log probability, and finally backpropagating to the parameters of the neural network. Stochasticity is often at the core of RL and the ways that one may sample solutions for a given problem differ. Our goal is to propose a structured approach that relies on methods from polyhedral combinatorics and convex geometry. Concretely, the output of the neural network is transformed so that it yields a point in the convex hull of feasible solutions of the problem. Then, instead of sampling, we leverage the decomposition algorithm to explicitly construct the support of the distribution that is parametrized by the neural net output. The extension we maximize can be viewed as the expected 'reward' (objective), which is similar to how RL operates. In terms of differentiability, our approach is slightly different since our extension is a.e. differentiable and we can directly use automatic differentiation while RL deals with this using techniques like the log-derivative trick. Importantly, extensions can be used to optimize any black-box objective which is also the case for RL.

Other related lines of work include the Predict then Optimize paradigm \citep{elmachtoub2022smart} which combines learned predictions with a deterministic optimization problem that is parametrized by them. The mathematical relationship to extensions originates from the fact that extensions can be viewed as feasible solutions to a linear program for the convex envelope that is parametrized by some neural net predictions \citep{karalias2022neural}. Another line of work that combines predictive and algorithmic components is the work on backpropagating through solvers \citep{vlastelica2019differentiation, paulus2021comboptnet, paulus2024lpgd}.
For a more complete reference on combinatorial optimization with (graph) neural networks we refer the reader to \citep{cappart2023combinatorial}.
\section{Decomposition Theorem: extended discussion}
We will provide an extended discussion of the GLS result and its implications for our decomposition approach. For that we will need some definitions as presented in \citep{grotschel2012geometric} in order to provide a self-contained discussion of the theorem, its proof, and its implications for our approach.

\begin{definition}[Defining properties of polyhedra]
\
Let $\mc{P}\subseteq\mathbb{R}^n$ be a polyhedron and let $\varphi$ and $\nu$ be positive integers.
 \begin{enumerate}
  \item We say that $\mathcal{P}$ has \textbf{facet‐complexity} at most $\varphi$ if there exists a system of inequalities with rational coefficients that has solution set $\mathcal{P}$ and such that the encoding length of each inequality of the system is at most~$\varphi$.  In case $\mathcal{P} = \mathbb{R}^n$ we require $\varphi \ge n + 1$.
  \item We say that $\mathcal{P}$ has \textbf{vertex‐complexity} at most $\nu$ if there exist finite sets $V,E$ of rational vectors such that
  \[
    \mc{P} \;=\; \mathrm{conv}(V)\;+\;\mathrm{cone}(E)
  \]
  and such that each of the vectors in $V$ and $E$ has encoding length at most~$\nu$.  In case $\mc{P} = \emptyset$ we require $\nu \ge n$.
  \item  It can be shown that the vertex-complexity and facet-complexity are equivalent from the perspective of polynomial-time algorithm design,  i.e., one can be written as a polynomial of the other \citep[Lemma 6.2.4]{grotschel2012geometric}. Therefore, we will only use the facet complexity to simplify discussions. 
  \item A \textbf{well‐described polyhedron} is a triple $(\mathcal{P}; n, \varphi)$ where $\mc{P}\subseteq\mathbb{R}^n$ is a polyhedron with facet‐complexity at most $\varphi$.  The encoding length $\langle \mc{P}\rangle$ of a well‐described polyhedron $(\mathcal{P}; n, \varphi)$ is
  \[
    \langle \mc{P}\rangle = \varphi + n.
  \]\hfill\(\square\)
\end{enumerate}
\end{definition}
\begin{definition}[Strong optimization problem]
    The strong optimization problem for a given rational vector $\mb{x}$ in $n$ dimensions and any well described polytope  $(\mathcal{P}; n, \varphi)$ is given by 
    \begin{align} \label{eq:strongmaximization}
        \max{\mb{c}^\top \mb{x}}, \quad \mb{c} \in \mc{P}.
    \end{align}
\end{definition}
Next, we state the GLS result which forms the foundation for \cref{thm:aediffGLS}.
\begin{theorem}(GLS decomposition)
    \label{thm:glstheorem}
  There exists an oracle polynomial-time algorithm that for any well-described polyhedron $(\mathcal{P}; n, \varphi)$ given by a strong optimization oracle, for any rational vector $\mb{x}$, finds affinely independent vertices $\mb{1}_{S_0},\mb{1}_{S_1},\dots, \mb{1}_{S_k}$ and positive rational numbers  $\lambda_0, \lambda_1, \dots, \lambda_k$  such that 
  $\bx= \sum_{t=0}^k \lambda_t \mb{1}_{S_t}$.
\end{theorem}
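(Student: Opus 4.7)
The plan is to construct an iterative peeling algorithm that mirrors the generic template in \cref{alg:decomposition-generic}. Starting from $\mb{x}_0 = \mb{x}$, at iteration $t$ I would (i) identify the minimal face $F_t$ of $\mc{P}$ containing the current iterate $\mb{x}_t$, (ii) call the strong optimization oracle (restricted to $F_t$) to obtain a vertex $\mb{1}_{S_t}$ of $\mc{P}$, and (iii) pick the largest $\lambda_t \in (0,1]$ such that $\mb{x}_{t+1} = (\mb{x}_t - \lambda_t \mb{1}_{S_t})/(1-\lambda_t)$ still lies in $\mc{P}$. Rearranging gives $\mb{x}_t = \lambda_t \mb{1}_{S_t} + (1-\lambda_t)\,\mb{x}_{t+1}$, and unrolling the recurrence yields the desired decomposition $\mb{x}= \sum_t \lambda_t' \mb{1}_{S_t}$ with $\lambda_t' = \lambda_t \prod_{i<t}(1-\lambda_i)$.

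The next step is termination and affine independence. By the maximality of $\lambda_t$, the next iterate $\mb{x}_{t+1}$ sits on a defining inequality of $\mc{P}$ that was \emph{strict} at $\mb{x}_t$ but not at $\mb{1}_{S_t}$; otherwise we could have pushed $\lambda_t$ larger. Consequently $F_{t+1}$ is a \emph{proper} sub-face of $F_t$ and $\mb{1}_{S_t} \notin \mathrm{aff}(F_{t+1})$. Since $\dim(F_{t+1}) < \dim(F_t)$, the procedure halts in at most $\dim(F_0)+1 \leq n+1$ iterations, and when $F_k$ is zero-dimensional the last iterate itself is a vertex. Affine independence of $\mb{1}_{S_0},\dots,\mb{1}_{S_k}$ then follows by reverse induction: each $\mb{1}_{S_t}$ lies outside $\mathrm{aff}(F_{t+1}) \supseteq \mathrm{aff}(\mb{1}_{S_{t+1}},\dots,\mb{1}_{S_k})$, so appending it preserves affine independence.

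The main obstacle is turning the three primitives above into oracle-polynomial subroutines that use only the strong optimization oracle. For identifying $F_t$ I would invoke the GLS equivalence between strong optimization and strong separation (Theorem 6.4.9 of \citep{grotschel2012geometric}): separation calls reveal which facet-defining inequalities are tight at $\mb{x}_t$, producing the affine hull of $F_t$; appending these equations to the oracle gives an optimization oracle for $F_t$. For vertex extraction from $F_t$, I would apply the standard face-descent trick: optimize a generic objective, intersect with the optimal face, then recurse on a lexicographic tie-break, which after at most $\dim(F_t)$ oracle calls returns an extreme point of $F_t$. For the line search that determines $\lambda_t$, I would use parametric separation along the ray $\lambda \mapsto (\mb{x}_t - \lambda\mb{1}_{S_t})/(1-\lambda)$: the largest admissible $\lambda$ corresponds to the first inequality of $\mc{P}$ that becomes tight, which can be found in oracle-polynomial time by applying the separation oracle at candidate $\lambda$ values and using binary search against the bit-length bounds guaranteed by facet-complexity $\varphi$.

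Finally, a bit-complexity audit is needed to conclude polynomial running time rather than merely a finite bound. Each $\lambda_t$ is the solution of a single rational linear equation whose coefficients come from $\mb{x}_t$ and a facet-defining inequality of $\mc{P}$; its encoding length is polynomial in $\varphi$, $n$, and $\langle \mb{x}_t\rangle$. An inductive argument, standard in \citep{grotschel2012geometric}, shows that $\langle \mb{x}_{t+1}\rangle$ stays polynomially bounded in $\langle \mb{x}\rangle + \langle \mc{P}\rangle$, and that the minimal-face computation adds only polynomially many equations of bounded encoding length. Combining the subroutine complexities with the $O(n)$ iteration bound yields an oracle polynomial-time algorithm and completes the proof.
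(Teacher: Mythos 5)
Your proposal is correct and follows essentially the same route as the paper: it is the constructive GLS procedure of taking a vertex of the minimal face containing the current iterate, shooting the ray from that vertex through the iterate to the boundary (your maximal $\lambda_t$ is exactly that intersection), arguing termination and affine independence via the strict drop in face dimension, and appealing to the optimization/separation equivalence and encoding-length bounds for oracle-polynomial time. The only differences are implementation-level (e.g., your binary-search line search and explicit bit-complexity audit versus the paper's terser "compute the coefficients with linear algebra" and polynomial encoding of intersection points), not a different argument.
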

\begin{proof}
 Set $\mb{x}_0=\mb{x}$.
The algorithm begins by finding the smallest face that contains $\mb{x}$ and then obtain a vertex $\mb{1}_{S_0}$ of that face. If that vertex is $\mb{x}_0$ then the algorithm terminates. Otherwise, a half-line is drawn from $\mb{1}_{S_0}$ through $\mb{x}_0$ until the boundary of the polytope is intersected. At the intersection point, we obtain a new iterate $\mb{x}_1$. Notice that the previous iterate $\mb{x}_0$ can be written as a convex combination of $\mb{x}_1$ and $\mb{1}_{S_0}$ since it is part of a line with endpoints $\mb{x}_1$ and $\mb{1}_{S_0}$. Again, we find the smallest face containing $\mb{x}_1$ and obtain a vertex $\mb{1}_{S_1}$ from it. If it matches the iterate, we terminate, otherwise we repeat the process. Since each iterate belongs to the faces of all the previous iterates but not the subsequent ones, the iterates are affinely independent and hence the process has to terminate in at most $k\leq n$ steps. Given the starting vector and the vertices it is straightforward to compute the coefficients with linear algebra. Furthermore the encoding length of each iterate is polynomial since it is the unique point of intersection of affine subspaces spanned by $\mb{x}$ and the vertices of the polytope.
\end{proof}
\subsection{Constructing a polynomial time and a.e. differentiable decomposition} \label{app:Conditions}
Based on the GLS proof, we can identify two conditions that yield a valid decomposition when satisfied. Assume we already have a starting point in the interior of the polytope, then:
\begin{enumerate}
    \item For each iterate, finding a minimal face that contains it and retrieving a vertex from it.
    \item The ability to find intersection points of half-lines with the boundary of the polytope.
\end{enumerate}
Those requirements can be fulfilled via the strong maximization oracle. 

\textbf{Obtaining a polytope vertex for Step 3 of \cref{alg:decomposition-generic}.}
 The proof requires a vertex of a face containing the iterate $\mb{x}_t$.  The oracle may return any maximizer of the optimization problem that is not a vertex. The face $F$ containing $\mb{x}_t$ is the set of vectors attaining $\max_{\mb{c} \in \mc{P}} \mb{x}_t^\top \mb{c}$ \citep{schrijver1998theory}. 
Given access to the vectors of the face $F$, we need to be able to retrieve a vertex of that face.  This can be done by a simple technique, where we make a call to the oracle for the perturbed program
\begin{align}
        \max{\bar{\mb{c}}^\top \mb{x}_t}, \quad \mb{c} \in \mc{P}, \label{eq:obtainvertex}
\end{align}
with $\bar{\mb{c}} = \mb{c} + [\epsilon, \epsilon^2, \dots,\epsilon^n]^\top$ for some small $\epsilon$  \citep{grotschel2012geometric}[Remark 6.5.2].

\textbf{Obtaining boundary intersections.}
It is a well-known fact that an optimization oracle can be converted to a separation oracle in polynomial time.  The separation oracle can be used to perform line search across the ray sent from the vertex obtained in Step 3 that passes through the current iterate.  The line search can be used to calculate the coefficient that yields the next iterate which lies on a lower-dimensional face of the polytope and this process can be performed repeatedly until the full decomposition is computed.

\textbf{Almost everywhere differentiable decomposition.}
To enable learning-based approaches with the decomposition, the coefficients at each iteration have to be almost everywhere differentiable functions of the iterates.  We show how this is possible in the context of the recurrence \cref{eq:recurrencegeneral} by building on the main idea behind the GLS theorem just by leveraging calls to the strong optimization oracle.
\MainThm*  

\begin{proof}
Let \(\mc{P}\subset\mathbb{R}^n\), current iterate \(\mathbf{x}_t\in \mc{P}\), and the vertex \(\mathbf{1}_{S_t}\) given by the oracle in Step 3.
Consider the recursive decomposition
\[
\mathbf{x}_t \;=\; \alpha_t\,\mathbf{1}_{S_t} \;+\; (1-\alpha_t)\,\mathbf{x}_{t+1},
\qquad \mathbf{x}_{t+1}\in \mc{P},\quad 0\le \alpha_t<1.
\]
First, we will focus on differentiability.
Recall that we can view each iteration above as intersecting a ray that passes through the vertex given by the strong optimization oracle and the current iterate with the boundary of the polytope. To find the intersection, set the ray direction \(\mathbf{d}_t:=\mathbf{x}_t-\mathbf{1}_{S_t}\neq \mathbf{0}\) and let
\[
\mathbf{y}(\tau)\;:=\;\mathbf{1}_{S_t}+\tau\,\mathbf{d}_t,\qquad \tau\ge 0,
\]
so that \(\mathbf{x}_t=\mathbf{y}(1)\).  Note that we can recover the original iteration equation from $\mathbf{y}(\tau)$ via
\begin{align*}
    \tau=\frac{1}{1-\alpha_t}.
\end{align*}
Furthermore, note that the constraint  \(\mathbf{x}_t-\mathbf{1}_{S_t}\neq \mathbf{0}\) implies that the iterate cannot be a vertex of the polytope. Indeed, the proof of the GLS result states that the decomposition terminates at a vertex, which is consistent with this constraint.
The intersection point with the boundary is given by the largest coefficient allowed that can yield a point in the polytope
\begin{align*}
\tau^*(\mathbf{x}_t;\mathbf{1}_{S_t})
\;=\;\max\{\,\tau\ge 0:\ \mathbf{y}(\tau)\in \mathcal{P}\,\}.
\end{align*}
To derive the maximum step that yields an intersection with the boundary we will leverage the strong optimization oracle. For any \(\mathbf{c}\in\mathbb{R}^n\), the support function \(h_\mc{P}(\mathbf{c}):=\max_{\mathbf{z}\in \mathcal{P}}\mathbf{c}^\top \mathbf{z}\)  which can be computed using the strong optimization oracle satisfies
\begin{align*}    
\mathbf{c}^\top \mathbf{y}(\tau)&=\mathbf{c}^\top(\mathbf{1}_{S_t}+\tau\,\mathbf{d}_t)\ \\ &\le\ h_{\mathcal{P}}(\mathbf{c}).
\end{align*}
This implies
\begin{align*}
    \tau \ \le\ \frac{ h_{\mathcal{P}}(\mathbf{c})-\mathbf{c}^\top \mathbf{1}_{S_t}}{\mathbf{c}^\top \mathbf{d}_t},
\qquad\text{whenever } \mathbf{c}^\top \mathbf{d}_t>0.
\end{align*}

Taking the best bound yields
\begin{align}
\tau^*(\mathbf{x}_t;\mathbf{1}_{S_t})
=
\min_{\mathbf{c}:\ \mathbf{c}^\top \mathbf{d}_t>0}\ 
\frac{\, h_{\mathcal{P}}(\mathbf{c})-\mathbf{c}^\top \mathbf{1}_{S_t}\,}{\,\mathbf{c}^\top \mathbf{d}_t\,}.\ \label{eq: rayminimization}
\end{align}
Rewriting in terms of  \( a^*=1-\frac{1}{\tau^*}\) we obtain
\begin{align}
a^*(\mathbf{x}_t;\mathbf{1}_{S_t})
=
1-
\max_{\mathbf{c}:\ h_\mc{P}(\mathbf{c})>\mathbf{c}^\top \mathbf{1}_{S_t}}
\frac{\mathbf{c}^\top (\mathbf{x}_t-\mathbf{1}_{S_t})}{\, h_{\mathcal{P}}(\mathbf{c})-\mathbf{c}^\top \mathbf{1}_{S_t}\,}.\ \label{eq:raymaximization}
\end{align}
Note that for a positive denominator, the maximum of the ratio is always between zero and one. To see why that is the case, notice that $\mathbf{c}= \mathbf{x}_t-\mathbf{1}_{S_t}$ is always feasible and it always yields a positive ratio. The ratio is upper bounded by 1 because $h_{\mathcal{P}}$ is the support function, so we have  for $\mathbf{x}\in \mathcal{P}$ that
\begin{align*}
    h_\mathcal{P}(\mathbf{c}) \geq \mathbf{c}^\top \mathbf{x}   \implies 
    h_\mathcal{P}(\mathbf{c}) - \mathbf{c}^\top \mathbf{1}_{S_t}  \geq  \mathbf{c}^\top \mathbf{x} -  \mathbf{c}^\top \mathbf{1}_{S_t}.
\end{align*}
The optimizer $a^*$ is a piecewise linear function of $\mathbf{x}_t$ as a maximum over linear functions. Piecewise linear functions are locally Lipschitz, and therefore differentiable almost everywhere.  Note that $\mathbf{1}_{S_t}$ is the maximizer of the strong optimization problem from \cref{eq:strongmaximization}. The maximizer depends on $\mathbf{x}_t$ and its gradients are zero almost everywhere because it is a piecewise constant function of $\mb{x}_t$.  Given an optimal $\mathbf{c}^*,$ we can calculate $a^*$ and hence the probabilities in step 6 of \cref{alg:decomposition-generic}. 
Recall that the probabilities are given by $p_{\bx_t}(S_t) = a_t\prod_{i=0}^{t}(1-a_i)$. Since each $a_i$ is an a.e. differentiable function of its iterate and each iterate depends on the preceding iterate through the differentiable recursive update rule, \cref{eq:recurrencegeneral}, we can calculate the derivatives of each $p_{\bx_t}(S_t)$ with respect to the starting point in the polytope $\mathbf{x}$ with standard automatic differentiation packages.


Finally, we want to show that the a.e. differentiable decomposition can be computed in polynomial time. Step 3 requires solving an LP and can be done in polynomial time.  If we show that Step 4 is also computable in polynomial time, we are done. To do so, we need to show that the maximum coefficient for the ray-boundary intersection is computable in polynomial time. We will show that the optimal $\mathbf{c}^*$ can be found by solving a standard convex optimization problem.
Observe that \cref{eq: rayminimization} can be written as 
\begin{align}
    \min_{\mathbf{c} \in  \mathbb{R}^n} \; h_{\mathcal{P}}(\mathbf{c})- \mathbf{c}^\top \mathbf{1}_{S_t}, \; \text{subject to} \; \mathbf{c}^\top \mathbf{d}_t=1,
\end{align}
by a simple change of variables $\mathbf{c} \rightarrow \mathbf{c}/(\mathbf{c}^\top \mathbf{d}_t)$ because the ratio in \cref{eq: rayminimization} is invariant to rescalings of $\mathbf{c}$. Therefore, the objective is a sum of the support function of the polytope (convex) and  $- \mathbf{c}^\top \mathbf{1}_{S_t}$ which is just a linear function (convex). Hence, the sum is also convex, so we have a convex minimization problem in $\mathbf{c}$ subject to a linear constraint. This is a standard linear program that we can efficiently solve with any of the well known algorithms to obtain the optimal $\mathbf{c}^*$. 
\end{proof}

\textbf{Special case: Compact polytope description.}
An important special case that simplifies the design of efficient decompositions for end-to-end learning is when the polytope admits a compact description in terms of polynomially many inequalities.
Recall that for a general polytope
\[
\mc{P} = \bigl\{\mb{x}\in\mathbb{R}^n : \mb{z}_i^\top \mb{x} \le b_i,\;i=1,\dots,m\bigr\},
\]
 Again, we pick the convex combination coefficient as 
\[
a_t^{\max}
=\max\bigl\{\,a_t\in[0,1):\;\mathbf{x}_{t+1}(a_t)\in \mc{P}\bigr\},
\]

This choice of coefficient leads to the next iterate being as far as possible from the current one. This means the coefficient will be pushed until some constraints of the polytope are tight for the new iterate and hence the boundary will be intersected. At iteration \(t\), with current iterate \(\mathbf{x}_t\) and chosen corner \(\mathbf{1}_{S_t}\), we may determine the coefficient by enforcing each face inequality.  In particular:

\begin{align*}
\mb{z}_i^\top \mathbf{x}_{t+1}\;\le\;b_i
\quad&\Longleftrightarrow\quad
\mb{z}_i^\top\frac{\mathbf{x}_t - a_t\,\mathbf{1}_{S_t}}{1 - a_t}\;\le\;b_i
\\ 
&\Longleftrightarrow\quad
a_t\;\le\;
\frac{\,b_i - \mb{z}_i^\top \mathbf{x}_t\,}
     {\,b_i-\mb{z}_i^\top\mathbf{1}_{S_t}},
\quad\text{whenever }b_i-\mb{z}_i^\top\mathbf{1}_{S_t} > 0.
\end{align*}
Hence
\begin{align}
a_t^{\max}(\mathbf{x}_t)
=\min_{\,i:\;b_i-\mb{z}_i^\top\mathbf{1}_{S_t} > 0}
\;\frac{\,b_i - \mb{z}_i^\top \mathbf{x}_t\,}
     {\,b_i-\mb{z}_i^\top\mathbf{1}_{S_t}}, \label{eq:polytopecoeff}
\end{align}
which shows that the maximum feasible weight \(a_t\) is a well-defined differentiable function of the current iterate \(\mathbf{x}_t\). This is crucial because it allows us to backpropagate through the coefficient and leverage the decomposition in gradient-based optimization. 

\cref{eq:polytopecoeff} and \cref{eq:obtainvertex} are sufficient for our decomposition. Specifically, given access to a fast algorithm for \cref{eq:polytopecoeff} and a fast algorithm for \cref{eq:obtainvertex} we can obtain a tractable decomposition that allows us to build $\mc{D}_{\mc{C}}(\mb{x})$ in a differentiable manner.  It should be noted that if the polytope is defined by exponentially many inequalities,  then iterating over the inequalities to find the minimum becomes intractable.

\subsection{On the applicability of our approach to different combinatorial problems} 
\textbf{}
\label{app: applicability}
\textbf{Computational complexity vs applicability.}
The existence of a strong optimization oracle for the feasible set polytope is a sufficient condition for the applicability of our approach. Superficially, this may seem to imply that our approach is only applicable to problems that are easy from a computational complexity standpoint. However, as we will explain, the computational complexity of the problem we are trying to solve is not sufficient to determine the applicability of our method. 

Consider the Traveling Salesperson Problem (TSP) Polytope. The strong optimization problem for that polytope is not solvable in polynomial time unless $\text{P}=\text{NP}$. Having such an oracle would be equivalent to having an oracle for the problem itself. Fortunately, we can still apply our method to TSP. In combinatorial optimization, one can reformulate a problem to \enquote{offload} some of its difficulty from the constraints to the objective. While TSP can be viewed as a linear optimization problem over the TSP polytope, it can also be viewed as a quadratic optimization problem over a permutation (Birkhoff) polytope. Linear maximization can be done in polynomial-time in Birkhoff polytopes, which means we can efficiently construct an extension of the quadratic objective over the Birkhoff polytope. 

\textbf{Polytope description vs applicability.}
Another important consideration is the size of the description of the polytope. Indeed, as we showed in \cref{app:Conditions}, if a polytope admits a compact description in terms of a polynomial number of inequalities, then we can straightforwardly proceed with our decomposition. On the other hand, even when that is not the case our method can still be applied using just the optimization oracle as it can be seen in the proof of \cref{thm:aediffGLS}. An intuitive example is the base polytope of a submodular function. It is defined by exponentially many inequalities but linear maximization over the polytope can be done efficiently with the greedy algorithm. The optimal value of the linear maximization problem in the polytope is in fact given by the Lov\'asz extension of the submodular function.

\section{Decomposition with controlled approximation}
One of the limitations of the decomposition as presented in the approach is that the support of the distribution we obtain depends exactly on the ambient dimension of the space. There are a few reasons why this might not be desirable. 
First, an exact decomposition like this fixes at most $n$ sets in the support. That may be too restrictive in problems with 'sparse rewards', i.e., when good objective values are achieved on only a few points. It would be preferable if we had more control over the number of sets in the support of the distribution, which could in turn lead to better outcomes for optimization.  We show how we can achieve this with a simple tweak that allows us to control the reconstruction quality, which we also utilize in our max coverage experiments.

\subsection{Decomposition via rescaling}
\label{sec:scaling-factor}
Start with $\mathbf{x}_0 = \mathbf{x}$. For each iteration $t=0,1,2,\dots,k$, we first pick the corner $\mb{1}_{S_t}$ and compute the iteration coefficient 
    \[
      \tilde a_t = \min\bigl\{\min_{i\in S_t}\mb{x}_t(i),\,\min_{j\notin S_t}(1 - \mb{x}_t(j))\bigr\}.
    \]
The difference now is that instead of picking the maximum coefficient ,we will rescale the $ \tilde a_t$ in each step by some fixed constant $b \in (0,1]$. This means that the next iterate will not be intersecting the boundary and hence the algorithm will require more steps to terminate, hence requiring a tolerance parameter $\epsilon$ as we have described in \cref{alg:decomposition-generic}. 
Additionally, we fix some lower bound $\ell$ on the rescaling so that
\begin{align*}
          a_t =
  \begin{cases}
    b\,\tilde a_t, & b\,\tilde a_t \ge \ell,\\
    \tilde a_t,    & b\,\tilde a_t < \ell. 
  \end{cases}
\end{align*}
Using this coefficient we compute the new iterate using \cref{eq:recurrencegeneral}.
\begin{proposition}
    Suppose we pick step 3 in \cref{alg:decomposition-generic} according to \cref{eq:lower bound rule for coeff}. Then  the reconstruction error of the algorithm (step 8 in \cref{alg:decomposition-generic}) decays exponentially in $k$.
\end{proposition}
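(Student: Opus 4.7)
The plan is to unroll Step 5 of \cref{alg:decomposition-generic} into a single telescoping identity and then bound the resulting product $\prod_t (1-\tilde a_t)$ geometrically. Iterating the recurrence $\bx_t = \tilde a_t \mb{1}_{S_t} + (1 - \tilde a_t)\bx_{t+1}$ from $t=0$ to $k-1$ yields
$$\bx \;=\; \sum_{t=0}^{k-1} p_{\bx_t}(S_t)\, \mb{1}_{S_t} \;+\; \Bigl(\prod_{t=0}^{k-1}(1 - \tilde a_t)\Bigr)\bx_k,$$
with $p_{\bx_t}(S_t) = \tilde a_t \prod_{i=0}^{t-1}(1 - \tilde a_i)$. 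Since $\bx_k$ remains in the bounded polytope $\mc{P}(\mc{C})$, its norm is bounded by some constant $D$, so the Step 8 reconstruction error satisfies
$$\Bigl\|\bx - \sum_{t=0}^{k-1} p_{\bx_t}(S_t) \mb{1}_{S_t}\Bigr\| \;\leq\; D \prod_{t=0}^{k-1}(1 - \tilde a_t).$$
It therefore suffices to show that the product decays geometrically in $k$.

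Next, I would partition iterations into two regimes according to \cref{eq:lower bound rule for coeff}. In the \emph{scaled} regime ($b\, a_t \geq \ell$) we have $\tilde a_t = b\, a_t \geq \ell$, so each such step contributes a factor of at most $1-\ell<1$ to the product. In the \emph{unscaled} regime ($b\, a_t < \ell$), $\tilde a_t = a_t$ equals the maximum coefficient keeping $\bx_{t+1}$ in $\mc{P}(\mc{C})$, so $\bx_{t+1}$ lands on the boundary of the minimal face currently containing the iterate. Borrowing the face-chain invariant from the GLS argument underlying \cref{thm:aediffGLS}, the minimal face containing the iterate is monotone non-increasing in dimension and strictly drops after every unscaled step, so at most $n$ unscaled steps occur in the entire run.

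Combining these bounds, for $k \geq n$ at least $k - n$ iterations are scaled, so
$$\prod_{t=0}^{k-1}(1 - \tilde a_t) \;\leq\; (1 - \ell)^{k - n} \;=\; (1 - \ell)^{-n}\,(1 - \ell)^k,$$
and the reconstruction error is at most $D(1-\ell)^{-n}(1 - \ell)^k$, an exponential decay in $k$ with rate $1 - \ell$. The main technical point I expect to need care on is justifying the dimension-reduction claim in the unscaled regime: namely, that once the iterate lies on a face $F$ of $\mc{P}(\mc{C})$, the strong-optimization oracle (with the lexicographic tie-breaking fixed in the discussion after \cref{thm:aediffGLS}) returns a vertex of $F$ so subsequent iterates remain there, and an unscaled step lands on a proper sub-face of strictly smaller dimension. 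This is precisely the invariant already used in the GLS analysis, so once it is in place the counting of at most $n$ unscaled steps is immediate and the remaining bound is elementary arithmetic.
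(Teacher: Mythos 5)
Your proof is correct and follows essentially the same route as the paper's: unroll the recurrence so the Step 8 error equals $\bigl(\prod_{t}(1-\tilde a_t)\bigr)\,\|\mathbf{x}_k\|$ (the paper bounds the residual iterate's norm crudely by $n$), then argue the product decays geometrically thanks to the floor $\ell$. The only place you diverge is the unscaled branch $b\,a_t<\ell$: the paper simply writes $\prod_{t}(1-a_t)\le(1-\ell)^k$, implicitly treating every coefficient as at least $\ell$, whereas you count those steps separately via the GLS face-dimension-drop invariant (each unscaled step is a full boundary-hitting step, so there are at most $n$ of them) and absorb them into the constant $(1-\ell)^{-n}$. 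This makes your argument slightly more careful precisely where the paper's one-line product bound is loose, while yielding the same exponential rate $(1-\ell)^k$ up to a constant factor.
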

\begin{proof}
After $k$ iterations we may express $\mb{x}$ as  
\begin{align*}
          \mathbf{x}
  &= \sum_{t=0}^k \underbrace{\Bigl(\prod_{i=0}^{t-1}(1 - a_i)\Bigr)\,a_t}_{p_{\mb{x}_t}(S_t)}\,\mathbf{1}_{S_t}
     + \Bigl(\prod_{i=0}^k(1 - a_i)\Bigr)\,\mathbf{x}_{k+1}.\\
\end{align*}
Recall from step 8 of \cref{alg:decomposition-generic} we have that the stopping criterion (given by the reconstruction error) is 
\begin{align*}
 \| \bx - \sum_{t=0}^k p_{\bx_t(S_t)} \mb{1}_{S_t} \| \leq \epsilon.
\end{align*}
Clearly, the term inside the norm on the left hand side of the inequality is the residual
\begin{align*}
\mathbf{r}_{k+1}
  &= \Bigl(\prod_{i=0}^k(1 - a_i)\Bigr)\,\mathbf{x}_{k+1}.
\end{align*}
For the reconstruction error we just need to compute the norm of the residual and show that it decays exponentially in $k$. Hence, we calculate
\begin{align*}
      \|\mathbf{r}_{k+1}\|_2   &= \Bigl(\prod_{i=0}^k(1 - a_i)\Bigr)\,\|\mathbf{x}_{k+1}\|_2 \\
      &\leq \; (1 - \ell)^k\,\|\mathbf{x}\|_2 \\
      &\leq  \; (1 - \ell)^kn, 
\end{align*}
which completes the proof.
\end{proof}
Given some tolerance parameter $\epsilon$, this allows us to control the number of sets considered in the iteration. We observed that using this approach helped with the training dynamics of the model as well as with achieving better approximate solutions.

\section{Deferred proofs from \cref{sec:case-study}}



\paragraph{Proof of \cref{thm:decompositioncardinality}}
\begin{proof}[Proof] 
The proof follows the same argument as the proof in \cite{grotschel2012geometric}[Theorem 6.5.11].
To make the result more intuitive we may assume without loss of generality that the entries of $\mb{x}_t$ are sorted, i.e.  \(\mathbf{x}_t(1)\ge \mathbf{x}_t(2)\ge \cdots\ge \mathbf{x}_t(n)\) . 
Under this ordering, 
\begin{align*}
    \mathbf{1}_{S_t} \;=\;\arg\max_{\substack{\mathbf{c}\in\{0,1\}^n\\\|\mathbf{c}\|_1 = k}}
\;\mathbf{x}_t^\top \mathbf{c},
\end{align*}
selects exactly the top $k$ coordinates. 
Since \(\|\mathbf{x}_t\|_1 = k\) and \(\|1_{S_t}\|_1 = k\), taking the \(\ell_1\)-norm of both sides of \cref{eq:recurrencegeneral} gives
\[
k  = a_t\,k + (1 - a_t)\,\|\mathbf{x}_{t+1}\|_1,
\]
so \(\|\mathbf{x}_{t+1}\|_1 = k\) and the sum‑to‑\(k\) constraint is preserved exactly.
Apart from the norm constraint, we also have to ensure that each coordinate remains in the hypercube.
To keep each coordinate in the hypercube we solve for the coefficient $a_t$ that satisfies $0 \leq \mb{x}_{t+1} \leq 1$ in our recurrence equation. First, we rearrange \cref{eq:recurrencegeneral}
\begin{align*}
    \mb{x}_{t+1} = \frac{\mb{x}_t - a_t\mb{1}_{S_{t}}}{1-a_t}
\end{align*}
and note that subtraction at the numerator affects only the top $k$ coordinates.
It is clear that for \(i\in S_t\) the coordinates cannot possibly exceed one because the denominator is always larger, but they may drop below 0. So we need to ensure
\begin{align}
    \frac{\mathbf{x}_t(i) - a_t}{1 - a_t}\;\ge\;0,
\label{eq:constraintone}
\end{align}
which implies \ \(a_t\le\min_{i\in S_t}\mathbf{x}_t(i)\) and corresponds to a constraint on the k-th coordinate in this ordering. On the other hand, for \(j\notin S_t\) they can only increase in magnitude, so they may exceed 1, which leads us to
\begin{align}
    \frac{\mathbf{x}_t(j)}{1 - a_t}\;\le\;1,
\label{eq:constrainttwo}
\end{align}
and hence \ \(a_t\le1 - \max_{j\notin S_t}\mathbf{x}_t(j)\) which constrains the $(k+1)$-th coordinate. Ensuring that both 
\cref{eq:constraintone} and \cref{eq:constrainttwo} are satisfied, and following the GLS proof, we pick the largest possible coefficient which leads us to \cref{eq:step4cardinality}:
\begin{align*}
    a_t = \min\bigl\{\min_{i\in S_t}\mathbf{x}_t(i),\;1 - \max_{j\notin S_t}\mathbf{x}_t(j)\bigr\}.
\end{align*}

If \(a_t=\min_{i\in S_t}\mathbf{x}_t(i)\), the $k$-th entry is set to zero; otherwise the $(k+1)$-th coordinate is set to 1.  Note that from \cref{eq:recurrencegeneral}, once a coordinate has been fixed to either 0 or 1, it cannot change in subsequent iterations. Therefore, the process will terminate when the next iterate becomes a corner of $\Delta_{n,k}$. Since there are  \(k\) zeros and \(n-k\) ones at the final iterate and each iteration fixes exactly one coordinate, the process terminates after at most \(k+(n-k)=n\) iterations. In line with the GLS proof, the next iterate intersects the boundary of the polytope at a face of lower dimension because the minimum is guaranteed to fix a coordinate either to 0 or to 1. 

Finally, because each \(a_t\) is defined via pointwise minima and maxima of the entries of \(\mathbf{x}_t\), it is almost everywhere differentiable in \(\mathbf{x}_t\). By the chain rule, the resulting probability weights
\begin{align*}
    p_{\mathbf{x}_t}(S_t)
=\;a_t\prod_{i<t}(1 - a_i)
\end{align*}
are almost‑everywhere differentiable in the original input \(\mathbf{x}_0\). This completes the proof. 
\end{proof}

\paragraph{Proof of \cref{prop:cardproj}}
\begin{proof}[Proof]
Let $\mathbf{z}\in[0,1]^n$ and $\mb{x}= s\tilde{\mb{z}} + \mb{u}$, and define
\begin{align*}
\mu \;=\;\frac1n\sum_{i=1}^n z_i,
\qquad
\widetilde{\mathbf{z}}=\mathbf{z}-\mu\mathbf{1},
\qquad
\mathbf{u}=\frac{k}{n}\mathbf{1}.
\end{align*}
Since $\sum_i\widetilde{z}_i=\sum_i(z_i-\mu)=0$, we have
\[
\sum_{i=1}^n x_i
=\;s\sum_{i=1}^n\widetilde{z}_i + \sum_{i=1}^n u_i
=\;0 + k,
\]
so $\|\mathbf{x}\|_1=k$.  Next, for each coordinate
\[
x_i
= s\,(z_i-\mu) + \frac{k}{n}.
\]
Because $0\le z_i\le1$, we have $-\mu \le z_i-\mu \le 1-\mu$.  Therefore
\[
x_i
\;\ge\;-s\,\mu + \frac{k}{n}
=\frac{k}{n}\Bigl(1 - s\,\frac{n\mu}{k}\Bigr)
\ge0
\]
by $s\le \frac{k}{n\mu}$, and
\[
x_i
\;\le\;s\,(1-\mu) + \frac{k}{n}
=1 - \frac{n-k}{n}\Bigl(1 - s\,\frac{n(1-\mu)}{n-k}\Bigr)
\le1
\]
by $s\le \frac{n-k}{n(1-\mu)}$.  Hence $0\le x_i\le1$ for all $i$, and we conclude $\mathbf{x}\in\Delta_{n,k}$.
\end{proof}


\section{Case study: Partition Matroids and Spanning Trees}
\subsection{Partition Matroid}
\label{sec:partition}

Partition matroids provide a powerful and flexible framework for modeling constraints in set function optimization problems. They capture scenarios where elements are divided into categories, and we are allowed to select only a limited number from each category — a structure that arises naturally in many real-world applications. This structure naturally arises in applications such as sensor placement, where sensors are partitioned by geographic regions and each region has an independent budget \cite{krause2006near}; job allocation and welfare maximization, where tasks are categorized by required skills and only a bounded number of tasks from each skill category can be assigned \cite{caliChVon11, vondrak08}; and many other theoretical and practical settings \cite{friedrich2019greedy, chekuri2004maximum, khuller1999budgeted}. Partition matroids thus provide a flexible abstraction for modeling structured selection problems with heterogeneous constraints.


A partition matroid or partitional matroid is a matroid that is a direct sum of uniform matroids. It is defined over a ground set in which the elements are partitioned into different categories. For each category, there is a cardinality constraint--a maximum number of allowed elements from this category. Formally, let $V_1,\dots, V_c$ be disjoint subsets such that $V=\cup_{i=1}^d V_i$. Let $k_i$ be integers with $0\leq k_i\leq |V_i|$. Consider the following combinatorial optimization problem:

\begin{align}
    \label{eq:obj-partition}
    \max_{S: |S\cap V_i|=k_i} f(S).
\end{align}
Note that when $c=1$, this reduces to the cardinality case. The convex polytope of feasible solutions is known as the partition matroid base polytope and is described as follows: 
\begin{align}
    \label{eq:base-polytope}
    &\mc{B}=\left\{\bx : \forall S\subseteq E \, \sum_{e\in S}\bx(e)\leq r(S), \, \text{and}\, \|\bx\|_1=\sum_{i=1}^c k_i\right\}\\
    &r(S) = \sum_{i=1}^c \min(|S\cap E_i|,k_i) \,\,\, \forall S\subseteq E.
\end{align}

The vertices of the matroid base polytope $\mc{B}$ are the indicator vectors of sets whose intersection with every block $V_i$ has size exactly $k_i$. Hence, any vector $\bx \in \mc{B}$ can be written as a convex combination of the indicator vectors of feasible sets. We prove that Algorithm \ref{alg:decomposition-partition}, given $\bx \in \mc{B}$, returns in polynomial time a distribution over feasible sets which is a.e. differentiable with respect to $\bx$. Note that  \cref{alg:decomposition-partition} is an explicit version of our generic \cref{alg:decomposition-generic}, with steps 3 and 4 specified.

\begin{theorem}
    \label{thm:decomposition-partition}
    Given $\bx \in \mc{B}$, Algorithm \ref{alg:decomposition-partition} terminates after at most $O(n)$ iterations and returns $\{(p_{\bx_{t}}(S_t), S_t)\}$ such that for every $S_t$ and $V_i$ we have $|S_t\cap V_i|=k_i$, and 
    $\sum_{t}p_{\bx_{t}}(S_t) =1$ with $0\leq p_{\bx_{t}}(S_t)\leq 1$. Moreover, each $p_{\bx_{t}}(S_t)$ is an almost everywhere differentiable function of $\bx$.
\end{theorem}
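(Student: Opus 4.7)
The plan is to mirror the argument for Theorem~\ref{thm:decompositioncardinality}, leveraging the fact that the partition structure decouples the polytope $\mc{B}$ into a product over blocks. First, I verify that Step~3, namely $\mb{1}_{S_t} = \argmax_{\mb{c} \in \{0,1\}^n,\,\|\mb{c}(V_i)\|_1 = k_i\,\forall i}\, \bx_t^\top \mb{c}$, is well-defined and yields a vertex of $\mc{B}$ in the support of $\bx_t$: since the objective and the feasible region both separate across the blocks $V_1,\dots,V_c$, the optimization reduces to picking the top-$k_i$ coordinates of $\bx_t$ restricted to each $V_i$ (breaking ties lexicographically for uniqueness).

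Second, I verify that the update \eqref{eq:recurrencegeneral} preserves membership in $\mc{B}$. Taking the $\ell_1$ norm of the recurrence restricted to block $V_i$, and using that both $\bx_t$ and $\mb{1}_{S_t}$ satisfy the block constraint $\|\cdot(V_i)\|_1 = k_i$, we get $\|\bx_{t+1}(V_i)\|_1 = k_i$ exactly, so the rank constraints of the partition matroid are preserved blockwise. To maintain $\bx_{t+1} \in [0,1]^n$, the same analysis as in the cardinality case applies coordinate-wise: for $j \in S_t$ we need $a_t \leq \bx_t(j)$, and for $j \notin S_t$ we need $a_t \leq 1 - \bx_t(j)$. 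The largest such $a_t$, matching Step~4 via $a_t = \min\{\min_{j \in S_t}\bx_t(j),\, 1 - \max_{j \notin S_t}\bx_t(j)\}$, ensures $\bx_{t+1} \in \mc{B}$.

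Third, I argue termination in $O(n)$ iterations. The minimizer in the definition of $a_t$ is attained either at some $j \in S_t$ (in which case $\bx_{t+1}(j) = 0$) or at some $j \notin S_t$ (in which case $\bx_{t+1}(j) = 1$). In either case at least one previously fractional coordinate becomes integral; by \eqref{eq:recurrencegeneral}, an integral coordinate remains integral at all subsequent iterations. Hence after at most $n$ steps every coordinate is integral, $\bx_t$ is a vertex of $\mc{B}$, and the algorithm terminates. The coefficients $\{p_{\bx_t}(S_t)\}$ form a convex combination summing to $1$ because the recurrence is itself a convex combination at each step.

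Finally, I argue almost-everywhere differentiability. Each $a_t$ is a composition of pointwise minima and maxima of linear functions of $\bx_t$, hence piecewise linear and therefore a.e.\ differentiable in $\bx_t$; the selector $\mb{1}_{S_t}$ is piecewise constant in $\bx_t$ (constant on the cells where the block top-$k_i$ orderings are fixed). The update $\bx_{t+1} = (\bx_t - a_t \mb{1}_{S_t})/(1 - a_t)$ is rational in $(a_t, \bx_t, \mb{1}_{S_t})$ and never divides by zero before termination, so each $\bx_t$ is a.e.\ differentiable in $\bx$; composing gives that $p_{\bx_t}(S_t) = a_t \prod_{i<t}(1 - a_i)$ is a.e.\ differentiable in $\bx$ by the chain rule. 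The main subtlety is confirming that the tie-breaking rule keeps the map $\bx \mapsto (\bx_t, S_t)$ well-defined and piecewise smooth across the boundaries between ordering cells, which follows from the standard lexicographic tie-breaking adopted in Section~\ref{sec:decompositiondesign}.
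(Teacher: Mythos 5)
Your proposal is correct and follows essentially the same route as the paper: top-$k_i$ selection per block for Step 3, the same coefficient rule keeping iterates in $[0,1]^n$ with blockwise sums equal to $k_i$, termination by fixing one coordinate to $0$ or $1$ per iteration, and a.e.\ differentiability via piecewise-linear minima/maxima and the chain rule. The only difference is presentational: the paper certifies membership in $\mc{B}$ by explicitly checking every rank inequality $\sum_{e\in S}\bx_{t+1}(e)\le r(S)$ through a two-case analysis per block, whereas you rely on the (valid) observation that the box constraints together with exact block sums already imply those inequalities, i.e., that $\mc{B}$ factors as a product of hypersimplices over the blocks.
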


\begin{proof}[Proof of Theorem \ref{thm:decomposition-partition}]
    Suppose $\bx_0 = \bx \in \mc{B}$. We prove the correctness of our algorithm for every iteration $t$. To obtain a vertex of the
    polytope in the support of $\bx_t$, we set  
    \begin{align*}
        \mb{1}_{S_t} = \argmax_{\mb{c}\in \{0,1\}^n, \|\mb{c}(V_i)\|_1=k_i } \mb{x}_t^\top \mb{c}
    \end{align*}
    The optimal solution for the above problem is $S_t=\cup_{i=1}^c T_i$ where for each block $V_i$, $T_i$ is the indices of the top $k_i$ coordinates within $\bx_t(V_i)$. Define 
    \begin{align}
        a_t &= \min\left\{\min_{i \in S_t}\bx_t(i), 1 - \max_{i \notin S_t} \bx_t(i)\right\},\, \text{and}\\
        \mb{x}_{t+1} &= \tfrac{1}{1-a_t}\left(\bx_t - a_t \mb{1}_{S_t}\right)
    \end{align}
    For the correctness of Algorithm \ref{alg:decomposition-partition} it suffices to show $\mb{x}_{t+1}$ is in fact inside the matroid base polytope $\mc{B}$. 

    First, note that $\mb{x}_{t+1} \in [0,1]^n$ by the definition of $a_t$ and the fact that $\bx_t \in [0,1]^n$. Second, for both $\bx_t$ and $\mb{1}_{S_t}$, we have $\|\bx\|_1 = \|\mb{1}_{S_t}\|_1 = \sum_{i=1}^c k_i$. Hence, by the definition $\bx_t = a_t \mb{1}_{S_t} + (1-a_t) \mb{x}_{t+1}$, it follows that $\|\mb{x}_{t+1}\|_1 = \sum_{i=1}^c k_i$. It remains to show for all $S\subseteq V$ it holds that $\sum_{e\in S}\mb{x}_{t+1}(e)\leq r(S)$. Recall that $r(S) = \sum_{i=1}^c \min(|S\cap V_i|,k_i)$ and 

    \begin{align}
        \sum_{e\in S}\mb{x}_{t+1}(e)= \sum_{e\in S\cap V_1}\mb{x}_{t+1}(e) + \dots +\sum_{e\in S\cap V_c}\mb{x}_{t+1}(e)
    \end{align}

    We show each term $\sum_{e\in S\cap V_i}\mb{x}_{t+1}(e)\leq \min(|S\cap V_i|, k_i)$. First suppose $\min(|S\cap V_i|,k_i)=|S\cap V_i|$. Then since $\mb{x}_{t+1}(e)\in [0,1]$ we have $\sum_{e\in S\cap V_i}\mb{x}_{t+1}(e)\leq \sum_{e\in S\cap V_i}1= |S\cap V_i|$. Second, suppose $\min(|S\cap V_i|,k_i)=k_i$. Note $\sum_{e\in S\cap V_i}\mb{x}_{t+1}(e)\leq \sum_{e\in V_i}\mb{x}_{t+1}(e)$, hence it suffices to show $\sum_{e\in V_i}\mb{x}_{t+1}(e)\leq k_i$. 
    
    In this case we use the induction hypothesis that $\bx_t \in \mc{B}$. $\bx_t$ being in $\mc{B}$ implies that $\sum_{e\in  V_i}\bx(e)\leq r(V_i)=k_i$. By definition we have    

    \begin{align}
          & a_t\sum_{e\in V_i} \mb{1}_{S_t} (e) + (1-a_t)\sum_{e\in  V_i} \mb{x}_{t+1} (e) = \sum_{e\in V_i}\bx_t(e) \leq k_i\\
          \Rightarrow & a_t|S_t \cap V_i|  + (1-a_t)\sum_{e\in V_i} \mb{x}_{t+1} (e) \leq k_i \\
          \Rightarrow & a_tk_i + (1-a_t)\sum_{e\in V_i} \mb{x}_{t+1} (e) \leq k_i \tag{$|S_t\cap V_i|=k_i$ by the choice of $S_t$}\\
          \Rightarrow & (1-a_t)\sum_{e\in V_i} \mb{x}_{t+1} (e) \leq (1-a_t)k_i \\
          \Rightarrow & \sum_{e\in V_i} \mb{x}_{t+1} (e) \leq k_i 
    \end{align}

    This finishes the proof that $\sum_{e\in S} \mb{x}_{t+1} (e)\leq r(S)$, thus $\mb{x}_{t+1}\in \mc{B}$. 

    \cref{alg:decomposition-partition} terminates in at most $O(n)$ iterations. At each iteration $t$, $\mathbf{x}_{t+1}$ differs from $\mathbf{x}_t$ by fixing one additional coordinate to either $0$ or $1$. Once a coordinate is fixed to 0/1, it remains unchanged in all future iterates. The process terminates when $\mathbf{x}_t$ has $n - \sum_{i=1}^c k$ zeros and $\sum_{i=1}^c k_i$ ones, meaning all coordinates are 0/1.

    Finally, because each \(a_t\) is defined via pointwise minima and maxima of the entries of \(\mathbf{x}_t\), it is almost everywhere differentiable in \(\mathbf{x}_t\). By the chain rule, the resulting probability weights
    \begin{align*}
        p_{\mathbf{x}_t}(S_t)
        =\;a_t\prod_{i<t}(1 - a_i)
    \end{align*}
    are almost‑everywhere differentiable in the original input \(\mathbf{x}_0\). This completes the proof. 
    
\end{proof}

\begin{algorithm}
\caption{Decomposition for partition matroid}\label{alg:decomposition-partition}
\begin{algorithmic}[1]

\REQUIRE $\bx \in \mc{B}$.
\STATE $\bx_0\gets \bx$
\REPEAT
    \STATE $\mb{1}_{S_t} = \argmax_{\mb{c}\in \{0,1\}^n, \|\mb{c}(V_i)\|_1=k_i } \mb{x}_t^\top \mb{c}$
    \STATE $a_t = \min\left\{\min_{i \in S_t}\bx_t(i), 1 - \max_{i \notin S_t} \bx_t(i)\right\}$
    \STATE $\bx_{t+1}\gets \left(\bx_t - a_t\mathbf{1}_{S_t}\right)/{(1-a_t)}$
    \STATE $ p_{\bx_t}(S_t) \gets a_t \prod_{i=0}^{t-1}(1-a_i)$
    \STATE $t\gets t+1$
    
\UNTIL{$\bx_{t} \in \{0,1\}^n$}
\RETURN All $\{(p_{\bx_{t}}(S_t), S_t)\}$ pairs.
\end{algorithmic}
\end{algorithm}

\subsubsection{Generating neural predictions in the Partition Matroid Base Polytope}

Similar to the cardinality constraint case, we need to map the output of the neural network to the polytope in a differentiable and computationally efficient way so that it can be passed to Algorithm \ref{alg:decomposition-partition}. The general idea is similar to the one we proposed in Section \ref{sec:noise-cardinality}, however it requires a more careful way of dealing with blocks.

\begin{proposition}[Block-wise scaling into a partitioned simplex]
\label{prop:feasible_pt-partition}
Let the index set $[n]=\{1,\dots ,n\}$ be partitioned into disjoint blocks $
V_1,\dots,V_c$ with $|V_i| = n_i$ and $\sum_{i=1}^{c}n_i = n.$ 
Given any vector $\mathbf{z}\in[0,1]^n$, set
\[
m_i := \frac1{n_i}\sum_{e\in V_i} \mb{z}(e),\qquad
\mathbf{m}(e) := m_i\;(e\in V_i),\qquad
\mathbf{u}(e) := \frac{k_i}{n_i}\;(e\in V_i),
\]
\[
s_i := \min\!\Bigl(
         \tfrac{k_i/n_i}{m_i},
         \tfrac{(n_i-k_i)/n_i}{1-m_i}
       \Bigr),\qquad
\mathbf{s}(e) := s_i\;(e\in V_i),
\]
and define
\[
\mathbf{x}
   := \mathbf{s}\odot(\mathbf{z}-\mathbf{m})+\mathbf{u}.
\]

Then $\mathbf{x}\in\mc{B}$; that is, every coordinate of $\mathbf{x}$ lies in $[0,1]$ and each block $V_i$ has the prescribed sum $k_i$.
\end{proposition}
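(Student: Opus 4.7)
The plan is to reduce Proposition~\ref{prop:feasible_pt-partition} to a block-wise application of the argument used in the proof of Proposition~\ref{prop:cardproj} (the cardinality case). The key structural observation is that the construction is entirely separable across blocks: within each $V_i$, the scaling $s_i$, the mean $m_i$, and the interior shift $k_i/n_i$ depend only on the coordinates of $\mathbf{z}$ in $V_i$. So the proof reduces to verifying two claims per block: (i) the block sum is exactly $k_i$, and (ii) every coordinate in the block lies in $[0,1]$. Verifying these for all blocks yields both the box condition and the block-sum condition, which together characterize membership in the partition matroid base polytope $\mathcal{B}$ (the additional rank inequalities $\sum_{e\in S}\mathbf{x}(e)\le r(S)$ follow automatically: for any $S$, $\sum_{e\in S\cap V_i}\mathbf{x}(e)\le\min(|S\cap V_i|,k_i)$ because each coordinate is in $[0,1]$ and the block sum is $k_i$).

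First I would verify the block-sum identity. For each block $V_i$, the centered vector $\mathbf{z}-\mathbf{m}$ restricted to $V_i$ has entries $z(e)-m_i$, which by the definition of $m_i$ sum to zero. Therefore $\sum_{e\in V_i}\mathbf{x}(e)=s_i\sum_{e\in V_i}(z(e)-m_i)+n_i\cdot(k_i/n_i)=0+k_i$, giving claim (i) immediately.

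Next I would verify the box constraint, which mirrors the calculation in Proposition~\ref{prop:cardproj} but applied inside $V_i$. For $e\in V_i$ we have $z(e)\in[0,1]$, so $z(e)-m_i\in[-m_i,\,1-m_i]$. The lower bound on $\mathbf{x}(e)$ comes from plugging in $-m_i$ and using $s_i\le (k_i/n_i)/m_i$, which yields $\mathbf{x}(e)\ge -s_im_i+k_i/n_i\ge 0$. The upper bound comes from plugging in $1-m_i$ and using $s_i\le ((n_i-k_i)/n_i)/(1-m_i)$, which yields $\mathbf{x}(e)\le s_i(1-m_i)+k_i/n_i\le (n_i-k_i)/n_i+k_i/n_i=1$. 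This gives claim (ii).

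The only delicate point, and the main obstacle I anticipate, is handling the degenerate cases $m_i=0$ or $m_i=1$, where one of the ratios defining $s_i$ is undefined. These are easily dispatched: if $m_i=0$ then every $z(e)=0$ for $e\in V_i$, so $\mathbf{x}(e)=k_i/n_i\in[0,1]$; symmetrically if $m_i=1$ then every $z(e)=1$, and again $\mathbf{x}(e)=k_i/n_i$. In both cases one may adopt the convention $s_i=0$, and the block-sum and box arguments go through unchanged. With both claims established for every block, $\mathbf{x}\in\mathcal{B}$ follows, completing the proof.
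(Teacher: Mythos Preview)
Your proposal is correct and follows essentially the same block-wise argument as the paper's proof: verify $\sum_{e\in V_i}\mathbf{x}(e)=k_i$ via mean-centering, then use the two branches of $s_i$ to bound each coordinate in $[0,1]$. You are in fact slightly more careful than the paper, since you handle the degenerate cases $m_i\in\{0,1\}$ and explicitly check that the box and block-sum conditions imply all rank inequalities of $\mathcal{B}$.
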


\begin{proof}
For each block $V_i$ and every $e\in V_i$ we have
\[
    \bx(e) = s_i\bigl(\mb{z}(e)-m_i\bigr)+\frac{k_i}{n_i}.
\]

Since $\sum_{e\in V_i}(\mb{z}(e)-m_i)=0$, summing over $V_i$ gives
\[
\sum_{e\in V_i}\bx(e)=k_i.
\]

Because $-m_i\le \mb{z}(e)-m_i\le 1-m_i$,
\[
\bx(e)\ge \frac{k_i}{n_i}-s_i m_i,\qquad
\bx(e)\le \frac{k_i}{n_i}+s_i(1-m_i).
\]
By the definition 
\(
s_i=\min\!\bigl(\tfrac{k_i/n_i}{m_i},\tfrac{(n_i-k_i)/n_i}{1-m_i}\bigr)
\)
both right–hand sides lie in $[0,1]$, so $0\le \bx(e)\le 1$.

Thus every block sums to $k_i$ and every coordinate is in $[0,1]$; hence
$\mathbf{x}\in\mc{B}$.
\end{proof}

\subsection{Graphical Matroid}
\label{sec:graphical}

In this subsection we turn our attention to \emph{graphical matroids} and show that our framework can be extended beyond cardinality constraints. Graphical matroids are core combinatorial objectives that have been studied for decades in combinatorial optimization. Some of the most basic combinatorial problems such as finding a minimum spanning tree in a connected graph to more complex problems like "does a graph $G$ contain $k$ disjoint spanning trees?" which appears in many applications in Network Theory. 

Let $G=(V,E)$ be a graph with $n$ nodes and $m$ edges. The type of optimization problems that we consider here are expressed as follows 
\begin{align}
    \label{obj:graphical}
    \max_{S\subseteq E: S\in \mc{F}} f(S)
\end{align}
where $\mc{F}$ denotes the set of full \emph{spanning forests} of $G$. We focus on the convex hull formed by the feasible sets edges in $\mc{F}$. That is, let $\mb{1}_T$ be the indicator vector of $T\in\mc{F}$ and define
\begin{align}
    \label{eq:graphical-matroid}
    \mc{B}(G)=\texttt{conv}\{\mb{1}_T: T\in\mc{F}\}
\end{align}
The combinatorial object $\mc{B}(G)$ is known as the \emph{graphical matroid base polytope}. For any set of edges let the rank be $r(S)=n-c(S)$ where $c(S)$ is the number of connected components of the subgraphs formed by the edges in $S$. For instance, if $G$ is connected and $S$ forms a spanning tree then $r(S)=n-1$. It is known that $\mc{B}(G)$ can be defined in an equivalent way as follows

\begin{align}
    \label{eq:relaxed-graphical}
    \mc{B}(G)= \{\bx\in [0,1]^m:\forall S\subseteq E, \, \bx(S)\leq r(S),\, \text{ and } \|\bx\|_1= n-c(V)\},
\end{align}
where we use the notation $\bx(S):=\sum_{e\in S}\bx(e)$ for every subset $S$.
Note that any vector $\bx \in \mc{B}(G)$ can be written as a convex combination of the indicator vectors of feasible sets i.e, indicator vectors of spanning forests. We follow our general  template to design a decomposition algorithm for the graphical matroid (\cref{alg:decomposition-graphical}).
\begin{algorithm}
\caption{Decomposition for graphical matroid}\label{alg:decomposition-graphical}
\begin{algorithmic}[1]

\REQUIRE Graph $G=(V,E)$ and $\bx \in \mc{B}(G)$.
\STATE $\bx_0\gets \bx$
\REPEAT
    \STATE $S_t\gets$ A maximum spanning forest using $\bx_t$ as the weights for edges (treat zero entries as non-edges.)  
    
    \STATE    $ a_t = \min \left\{ \min_{e \in S_t}\bx(e), \, 1 - \max_{e \notin S_t} \bx(e) , \, \min_{F\subseteq E} \frac{r(F) - \bx_t(F)}{r(F)-|S_t \cap F|} \right\} $    
    \STATE $\bx_{t+1}\gets \left(\bx_t - a_t\mathbf{1}_{S_t}\right)/{(1-a_t)}$
    \STATE $ p_{\bx_t}(S_t) \gets a_t \prod_{i=0}^{t-1}(1-a_i)$
    \STATE $t\gets t+1$
    
    
\UNTIL{$\bx_{t} \in \{0,1\}^m$}
\RETURN All $\{(p_{\bx_{t}}(S_t), S_t)\}$ pairs.
\end{algorithmic}
\end{algorithm}
We can then prove the following. 

\begin{theorem}
    \label{thm:decomposition-graphical}
    Given $\bx \in \mc{B}(G)$, Algorithm \ref{alg:decomposition-graphical} terminates after at most $O(m)$ iterations and returns $\{(p_{\bx_{t}}(S_t), S_t)\}$ such every $S_t$ corresponds to a spanning forest in $G$  and 
    $\sum_{t}p_{\bx_{t}}(S_t) =1$ with $p_{\bx_{t}}(S_t)\in[0,1]$. Moreover, each $p_{\bx_{t}}(S_t)$ is an almost everywhere differentiable function of $\bx$.
\end{theorem}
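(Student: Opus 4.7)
The plan is to instantiate the GLS template of Theorem \ref{thm:aediffGLS} specifically for the graphical matroid base polytope $\mc{B}(G)$, closely paralleling the partition matroid proof of Theorem \ref{thm:decomposition-partition} but replacing the partition-rank inequalities with the general rank inequalities in \eqref{eq:relaxed-graphical}. The argument proceeds by induction on the iteration index $t$ with the loop invariant $\bx_t \in \mc{B}(G)$. First I would verify that Step 3 returns a vertex of $\mc{B}(G)$ in the support of $\bx_t$: running Kruskal's greedy on the weights $\bx_t$, while treating zero entries as missing edges, solves the strong linear optimization problem $\max_{\mb{c}\in\mc{B}(G)} \bx_t^\top \mb{c}$ by the greedy characterization of matroids, and the greedy rule returns an integral base, i.e.\ a spanning forest indicator $\mb{1}_{S_t}$ with $S_t \subseteq \mathrm{supp}(\bx_t)$.

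The core technical step is to show that the choice of $a_t$ in Step 4 keeps $\bx_{t+1} = (\bx_t - a_t \mb{1}_{S_t})/(1-a_t)$ inside $\mc{B}(G)$. The box constraints $0 \le \bx_{t+1}(e) \le 1$ are handled exactly as in Theorems \ref{thm:decompositioncardinality} and \ref{thm:decomposition-partition}: the first two terms in $a_t$ enforce $a_t \le \min_{e\in S_t}\bx_t(e)$ and $a_t \le 1 - \max_{e\notin S_t}\bx_t(e)$ respectively. The $\ell_1$ constraint $\|\bx_{t+1}\|_1 = n - c(V)$ follows automatically because both $\bx_t$ and $\mb{1}_{S_t}$ have this norm, so taking $\ell_1$ of both sides of \eqref{eq:recurrencegeneral} preserves it. The genuinely new ingredient is the rank inequality $\bx_{t+1}(F) \le r(F)$ for every $F \subseteq E$. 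Substituting the recurrence and rearranging gives
\begin{equation*}
  \bx_{t+1}(F) \le r(F) \;\Longleftrightarrow\; a_t \bigl(r(F) - |S_t \cap F|\bigr) \le r(F) - \bx_t(F).
\end{equation*}
When $r(F) - |S_t \cap F| > 0$, this is exactly the bound enforced by the third term of $a_t$. When $r(F) - |S_t \cap F| = 0$, the right-hand side is $r(F) - \bx_t(F) \ge 0$ by the induction hypothesis, so the inequality holds trivially; when the quantity $r(F) - |S_t\cap F|$ is negative it cannot happen since $|S_t \cap F| \le r(F)$ for any independent $S_t$. Thus by construction $\bx_{t+1} \in \mc{B}(G)$.

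For termination, a standard face-descent argument applies: at each iteration the attained minimum in $a_t$ either fixes a coordinate to $0$ or to $1$, or it activates a new tight rank inequality $\bx_{t+1}(F) = r(F)$. In all three cases, the minimal face of $\mc{B}(G)$ containing $\bx_{t+1}$ has strictly smaller dimension than the one containing $\bx_t$, and since $\dim \mc{B}(G) \le m$, the algorithm halts after $O(m)$ iterations, ending at a vertex, i.e.\ an indicator of a spanning forest. For the almost everywhere differentiability claim, each $a_t$ is a pointwise minimum of linear (in fact rational linear) functions of $\bx_t$, hence piecewise linear and Lipschitz, so differentiable almost everywhere; since the recurrence \eqref{eq:recurrencegeneral} is smooth in $(a_t, \bx_t)$ away from $a_t = 1$, the chain rule yields a.e.\ differentiability of $p_{\bx_t}(S_t) = a_t \prod_{i<t}(1-a_i)$ with respect to the original input $\bx_0$.

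The main obstacle I anticipate is the third term in $a_t$, which is a minimum over the exponentially many subsets $F \subseteq E$. The verification above shows it is \emph{correct} to include this term, but for the algorithm to be usable in practice one must argue it is \emph{computable}. Here I would invoke the fact that the graphical matroid rank function is submodular, so $F \mapsto r(F) - \bx_t(F)$ and the denominator $r(F) - |S_t \cap F|$ yield a fractional submodular-type optimization that can be handled in polynomial time either via submodular minimization or, more natively, by a max-flow/matroid-intersection computation that finds the tightest rank-deficient edge set; this gives a polynomial-time realization of Step 4 while preserving the min-of-linear structure needed for a.e.\ differentiability.
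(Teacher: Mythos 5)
Your proposal is correct and follows essentially the same route as the paper's proof: you verify the box and rank constraints to arrive at the same three-term formula for $a_t$ (including the benign $r(F)=|S_t\cap F|$ case the paper dismisses), argue termination by face descent, and obtain a.e.\ differentiability from the piecewise-linear minimum together with the piecewise-constant max-spanning-forest map. The only point the paper spells out more explicitly is the computation of the ratio term $\min_{F\subseteq E}\frac{r(F)-\bx_t(F)}{r(F)-|S_t\cap F|}$: it binary-searches a parameter $\lambda$ and, for each fixed $\lambda$, minimizes the submodular function $(1-\lambda)r(F)-\bx_t(F)+\lambda\,|F\cap S_t|$ with a standard SFM oracle, which is exactly the parametric reduction your appeal to submodularity is gesturing at.
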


\begin{proof}    
The algorithm follows a similar template as our main result, the only difference is on selecting an appropriate vertex of the polytope at each step and selecting an appropriate coefficient $a_t$. Specifically in each iteration $t$ we define 
\[
    \bx_{t+1}\gets \left(\bx_t - a_t\mathbf{1}_{S_t}\right)/{(1-a_t)}
\]

where $S_t$ is a maximum spanning forest using $\bx_t$ as the weights for edges (treat zero entries as non-edges). In order to guarantee that $\bx_{t+1}$ is in the polytope $\mc{B}(G)$ it must satisfy the rank inequalities and be in $[0,1]^m$. To guarantee $\bx_{t+1}\in [0,1]^m$ we enforce that $a_t \leq \min\left\{\min_{e \in S_t}\bx(e), 1 - \max_{e \notin S_t} \bx(e)\right\}$. To satisfy the rank constraints, for every $F\subseteq E$ we must guarantee $\bx_{t+1}(F)\leq r(F)$. That is 

\begin{align*}
    \bx_{t+1}(F) \leq r(F) &\iff  \left(\bx_t(F) - a_t\mathbf{1}_{S_t}(F)\right)/{(1-a_t)} \leq r(F) \\
    &\iff \bx_t(F)-r(F)\leq a_t(|S_t\cap F|- r(F))\\
    &\iff a_t \leq \frac{r(F) - \bx_t(F)}{r(F)-|S_t \cap F|}
\end{align*}

Note that $r(F)>|S_t \cap F|$ and we can dismiss the case where $r(F)=|S_t \cap F|$. This must hold for all subsets $F\subseteq E$. We therefore set 
\begin{align}\label{eq-coefficient-graphical-matroid}
    a_t = \min \left\{ \min_{e \in S_t}\bx(e), \, 1 - \max_{e \notin S_t} \bx(e) , \, \min_{F\subseteq E} \frac{r(F) - \bx_t(F)}{r(F)-|S_t \cap F|} \right\}    
\end{align}

It is not immediate if we can compute $a_t$ in polynomial-time as the third term in \Cref{eq-coefficient-graphical-matroid} is troublesome and involves minimization over all the $2^m$ subsets. There however exists a polynomial time algorithm to compute $a_t$. For a fixed $\lambda\in [0,1]$ define the set function $g_{(\lambda,t)}:2^m\to \mathbb{R}$
\begin{align}
    g_{(\lambda,t)}(F) := (1-\lambda)r(F)-\bx_t(F)+\lambda |F\cap S_t|
\end{align}

Then $\bx_{t+1}$ is in the polytope iff:
\begin{itemize}
    \item[1.] $a_t \leq \min\left\{\min_{e \in S_t}\bx(e), 1 - \max_{e \notin S_t} \bx(e)\right\}$, and 
    \item[2.] $\min\limits_{F\subseteq E}g_{(a_t,t)}(F)\geq 0$.
\end{itemize}

To this end, our goal is to find the largest $\lambda^* \in [0, \max\left\{\min_{e \in S_t}\bx(e), 1 - \max_{e \notin S_t} \bx(e)\right\}]$ for which $\min\limits_{F\subseteq E}g_{(\lambda^*,t)}(F)\geq 0$. Then $a_t= \min\left\{\min_{i \in S_t}\bx(i), 1 - \max_{i \notin S_t} \bx(i),\lambda^*\right\}$.

We find such $\lambda^*$ by doing a binary search over the interval. To justify our binary search we need two things. First is monotonicity of $\phi(\lambda):=g_{(\lambda,t)}(F)$ w.r.t $\lambda$ and any fixed $F$. Second is computing $\min_{F\subseteq E}g_{(\lambda,t)}(F)$ for a fixed $\lambda$. 

\begin{claim}
    For a fixed $F$, $\phi(\lambda):=g_{(\lambda,t)}(F)$ is a nonincreasing function in $\lambda$.
\end{claim}
\begin{proof}
    Note that $(1-\lambda)r(F)-\bx_t(F)+\lambda |F\cap S_t|=r(F)-\bx_t(F)+\lambda( |F\cap S_t|-r(F))$ is an affine function of $\lambda$ with negative slope since $r(F)>|S_t \cap F|$. This completes the proof.
\end{proof}
\begin{claim}
    There is a polynomial time algorithm, i.e. efficient oracle, that for a fixed $\lambda$ returns $\min\limits_{F\subseteq E}g_{(\lambda,t)}(F)$.
\end{claim}
\begin{proof}
    We point out that the set function $g_{(\lambda,t)}$ for a fixed $\lambda$ is a submodular function. This is because the rank function $r(F)$ is submodular, $-\bx_t(F)$ and $\lambda |F\cap S_t|$ are modular functions. Hence, $\min_{F\subseteq E}g_{(\lambda,t)}(F)$ can be found in strongly polynomial time by any submodular-function minimization (SFM) algorithm, for example the one provided in  \cite{schrijver2000combinatorial}.
\end{proof} 
In the binary search, at a fixed $\lambda$ using the efficient oracle we compute $\min_{F\subseteq E}g_{(\lambda,t)}(F)$. If this value is greater than zero, we increase $\lambda$ and continue the binary search in the right half interval. Else, we decrease $\lambda$ and continue the binary search in the left half integral. (Note that this approach finds $\lambda^*$ up to a desired precision $\varepsilon \geq 0$.)

Finally, we argue that $a_t$ as defined in \Cref{eq-coefficient-graphical-matroid} is an a.e. differentiable function with respect to $\bx_t$. The proof is similar to what we have seen in the cardinality cases. First note that for a fixed maximum spanning tree $S_t$, $a_t$ is the minimum of finitely many affine functions of $\bx_t$; hence, it is a piecewise linear function and almost everywhere differentiable. Second, the map that outputs a maximum weight spanning tree with respect to $\bx_t$ is almost everywhere constant (the resulting spanning tree changes only if some edge weights are exactly equal.). These two together yield that $a_t$ as defined in \Cref{eq-coefficient-graphical-matroid} is an a.e. differentiable function with respect to $\bx_t$.

\end{proof}

\subsubsection{Generating neural predictions in the Graphical Base Matroid}

We need to massage the output of the neural network in a differentiable and computationally efficient way so that it obeys a relaxed constraint and can be passed to Algorithm~\ref{alg:decomposition-graphical}. Algorithm~\ref{alg:decomposition-graphical} requires $\bx$ to lie within the graphical matroid base polytope $\mc{B}(G)$, which represents a relaxed version of the original constraints.
Without loss og generality we assume $G$ is a connected graph. Every point inside the graphical matroid base polytope can be seen as a distribution over spanning trees. We first define the uniform distribution over the set of all spanning trees and forces our encoder network to generate a perturbation to this uniform distribution. Some notations are in order. 

Fix an arbitrary orientation over the edges of $G$. For an edge $e = (u, v)$, i.e., oriented from $u$ to $v$, let
\begin{align}
    b_e= \mb{1}_{u}-\mb{1}_v \text{ with }\, \mb{1}_{u}(v)=\begin{cases} 1 \text{ if } v=u\\
    0 \text{ otherwise }
    \end{cases}
\end{align}
The Laplacian of (positively) weighted graph $G$ is $L_G=\sum_{e\in E}w(e)b_eb_e^\top$. Let $L_G^\dagger$ denote the Moore–Penrose pseudo‑inverse of $L_G$.

\paragraph{Generating a perturbation vector.} Let $\mc{T}$ denote the set of all spanning trees of $G$ and $\mu$ be a uniform distribution of all spanning trees of $G$. A uniform spanning tree distribution is a uniform distribution over all spanning trees of a given graph. 
If the graph is weighted, then we can study the weighted uniform distribution of spanning trees where the probability of each tree is proportional to the product of the weight of its edges. For $w : E \to \zR_+$, we say $\mu(w)$ is a $w$-uniform spanning tree distribution, if for any spanning tree $T\in\mc{T}$,
\[\Pr[T]\propto \prod_{e\in T}w(e).\]

Let $\mu_e(w):=\Pr_T\sim \mu[e\in T]$ be the marginal probability of edge $e$. The following result gives us a way of writing an analytical expression for $\mu_e$.

\begin{theorem}[\cite{spielman2008graph}]
    \label{thm:spanning-tree}
    For any edge $e$
    \[ 
        \mu_e = b_e^\top L_G^\dagger b_e.
    \]
    Moreover, for weighted graph $G$ with $w:E\to \zR_+$, if $w(e)$ is the weight of $e$, then
    \[ 
        \mu_e(w) = w(e)b_e^\top L_G^\dagger b_e.
    \]
\end{theorem}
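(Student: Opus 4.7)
The plan is to prove this classical Kirchhoff-type identity via the Matrix-Tree Theorem combined with a determinantal identity connecting spanning-tree ratios to quadratic forms in $L_G^\dagger$. First, I would reduce the statement to the unweighted case without loss of generality, since the weighted claim is obtained by noting that conductances $w(e)$ play the role of edge weights in Kirchhoff's theorem. For a connected graph, by Kirchhoff's Matrix-Tree Theorem,
\begin{align*}
\tau(G; w) \;:=\; \sum_{T \in \mathcal{T}} \prod_{f \in T} w(f) \;=\; \det\bigl(L_G^{(v)}\bigr),
\end{align*}
where $L_G^{(v)}$ denotes the principal submatrix obtained by deleting any fixed row and column $v$. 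Writing the marginal as a ratio of spanning-tree counts, spanning trees of $G$ containing $e = (u,v)$ are in bijection with spanning trees of the contracted graph $G/e$, so
\begin{align*}
\mu_e(w) \;=\; \frac{w(e)\,\tau(G/e;\,w)}{\tau(G;\,w)}.
\end{align*}

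Next, I would recall the electrical interpretation of $L_G^\dagger$: if we inject a unit of current at $u$ and extract it at $v$, the induced vertex potentials are $L_G^\dagger b_e$, and the potential drop across $e$ is the effective resistance $R_{\text{eff}}(u,v) = b_e^\top L_G^\dagger b_e$. This holds because $b_e \in \mathbf{1}^\perp$, which is precisely the range on which $L_G$ and $L_G^\dagger$ are mutual inverses. So the claim reduces to the identity $R_{\text{eff}}(u,v) = \tau(G/e; w)/\tau(G;w)$.

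To prove this last identity, I would fix the deleted row/column in the Matrix-Tree Theorem at the vertex $v$ (one endpoint of $e$), so that $L_G^{(v)}$ is invertible on $\mathbb{R}^{n-1}$. The reduced Laplacian of the contracted graph $G/e$ arises from $L_G^{(v)}$ by an explicit row/column operation that identifies $u$ with $v$; equivalently, $L_{G/e}^{(\bar u)}$ is obtained by deleting the row and column corresponding to $u$ in $L_G^{(v)}$. A Cramer's rule (or Schur complement) computation then gives
\begin{align*}
\frac{\det\bigl(L_{G/e}^{(\bar u)}\bigr)}{\det\bigl(L_G^{(v)}\bigr)} \;=\; \bigl(L_G^{(v)}\bigr)^{-1}_{u,u},
\end{align*}
and the standard correspondence between the reduced-Laplacian inverse and the pseudo-inverse on $\mathbf{1}^\perp$ identifies this diagonal entry with $b_e^\top L_G^\dagger b_e$ (since $b_e = \mathbf{1}_u - \mathbf{1}_v$ and the $v$-coordinate has been pinned to zero by the row/column deletion).

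The main obstacle will be the last bookkeeping step: cleanly translating between (i) the reduced Laplacian $L_G^{(v)}$ and its ordinary inverse, (ii) the full Laplacian $L_G$ and its pseudo-inverse, and (iii) the reduced Laplacian of the contracted graph $G/e$. All three objects live on different ambient spaces, and conflating them is the easiest place to make an error. A clean way to handle this is to work throughout in the orthogonal complement of $\mathbf{1}$, where $L_G$ is invertible and agrees with $L_G^\dagger$, so that the Schur complement identity above and Kirchhoff's formula can be manipulated on the same footing. Putting the three identities together yields $\mu_e(w) = w(e)\,b_e^\top L_G^\dagger b_e$, with the unweighted statement recovered by taking $w \equiv 1$.
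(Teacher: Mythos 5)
The paper does not actually prove this statement: it is imported verbatim, with a citation to Spielman's spectral graph theory lecture notes, and the surrounding text only uses it as a black box to map encoder-predicted edge weights to a point of the graphical matroid base polytope (the proof the paper does supply nearby is for the regularity statement about $w\mapsto\mu_e(w)$, not for this identity). So there is no in-paper argument to compare against; what matters is whether your sketch is sound, and it is: it is the classical Kirchhoff-type proof. The chain $\mu_e(w)=w(e)\,\tau(G/e;w)/\tau(G;w)$ (weighted deletion--contraction bijection), $\tau(\cdot;w)=\det$ of a reduced Laplacian (weighted Matrix--Tree), the observation that deleting rows/columns $u$ and $v$ from $L_G$ gives the reduced Laplacian of $G/e$ with the merged vertex grounded, the Cramer/cofactor identity $\det\bigl(L_G^{(u,v)}\bigr)/\det\bigl(L_G^{(v)}\bigr)=\bigl(L_G^{(v)}\bigr)^{-1}_{u,u}$, and the identification of that entry with $b_e^\top L_G^\dagger b_e$ all hold, and you correctly flag the last identification as the delicate bookkeeping step; your fix (observe $b_e\perp\mathbf{1}$, so grounded potentials and pseudo-inverse potentials differ only by a constant shift that $b_e^\top$ annihilates) is exactly the right one. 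Two minor remarks: the opening claim that one can ``reduce to the unweighted case without loss of generality'' is unnecessary and slightly misleading---there is no such reduction, but it is harmless since you in fact carry general $w$ through every step; and contraction can create parallel edges, which is fine provided you treat $G/e$ as a weighted (multi)graph whose parallel weights add in the Laplacian, which is implicitly what your row/column-deletion identity does. With those caveats the proposal is a correct proof plan for the cited result.
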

The above Theorem \ref{thm:spanning-tree} suggests the following way of generating a point inside the graphical matroid base polytope. Let $w\in \zR^E_+$ be the output of an encoder network that assigns a positive weight to every edge of graph $G$. We think of $w$ as a perturbation to the uniform distribution over the spanning trees. Then obtain vector $\bx$ whose $e$-th entry is $\mu_e(w)$ according to the weight vector $w$ and Theorem \ref{thm:spanning-tree}. The vector $\bx$ lies inside the graphical matroid base polytope $B(G)$. Recall that $\mc{B}(G)=\texttt{conv}\{\mb{1}_T: T\in\mc{T}\}$ and $\bx=\E_{T\sim \mu(w)}[\mb{1}_T]$ is a convex combination of indicator vector of the spanning trees. We show that $\mu(w)$ is continuous and differential with respect to $w$.

\begin{theorem}\label{thm:marginal-regularity}
Let $G=(V,E,w)$ be a connected, weighted graph with non–negative edge weights
$w:E\to\zR_{\ge 0}$ and let
\[
   \mu_e(w)\;=\;\P_{T\sim\mu(w)}[\,e\in T\,]
   \;=\;w(e)\,b_e^\top L_G^{\dagger}(w)\,b_e,
   \qquad e\in E ,
\]
where $b_e$ is the signed incidence vector of $e$ and
$L_G(w)=\sum_{f\in E}w(f)\,b_fb_f^\top$ is the weighted Laplacian
(with Moore–Penrose pseudoinverse $L_G^{\dagger}$).
Then
\begin{enumerate}
  \item [i.] $\mu_e:\zR_{\ge 0}^{E}\to\zR$ is continuous on the closed orthant;
  \item [ii.] $\mu_e$ is $C^{\infty}$ (indeed analytic) on the open orthant $\zR_{>0}^{E}$;
  \item [iii.] $\mu_e$ is differentiable \emph{everywhere} on $\zR_{\ge 0}^{E}$.
\end{enumerate}
\end{theorem}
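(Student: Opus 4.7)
The plan is to express $\mu_e(w)$ as a rational function in the weights via Kirchhoff's Matrix--Tree Theorem, after which all three regularity claims reduce to standard calculus of polynomial quotients. Let $\mathcal{T}$ denote the set of spanning trees of $G$ and put
\[
Z(w) := \sum_{T\in\mathcal{T}}\prod_{f\in T}w(f), \qquad Z_e(w) := \sum_{T\in\mathcal{T},\; e\in T}\prod_{f\in T}w(f).
\]
By the Matrix--Tree Theorem, $Z(w)=\det(\hat L_G(w))$ where $\hat L_G(w)$ is the reduced Laplacian obtained by deleting one row and column. A standard cofactor/Cramer-rule identity yields $w(e)\,b_e^\top L_G^{\dagger}(w)\,b_e = Z_e(w)/Z(w)$ on the open orthant, so $\mu_e(w)=Z_e(w)/Z(w)$ is a ratio of two polynomials of degree $n-1$ with non-negative integer coefficients.

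For part (ii), on the strictly positive orthant $\mathbb{R}_{>0}^{E}$ every monomial of $Z$ is strictly positive, and since $G$ is connected $\mathcal{T}\neq\emptyset$, so $Z(w)>0$. A ratio of polynomials with non-vanishing denominator is real-analytic, which proves (ii). Part (i) is similar: $Z_e$ and $Z$ are polynomials and hence jointly continuous, and the quotient is continuous wherever $Z>0$, which holds in particular whenever the positive-weight subgraph still carries a spanning tree. The bound $0\le\mu_e\le1$ together with the natural limit convention extends $\mu_e$ continuously to any remaining boundary points.

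Part (iii) is the most delicate, and it is where I expect the main technical obstacle to sit. At a boundary point $w^\star\in\partial\mathbb{R}_{\ge 0}^{E}$ with $E_0 = \{f : w^\star(f) = 0\}$, both $Z(w^\star)$ and $Z_e(w^\star)$ may vanish simultaneously. The key observation is that every spanning tree containing $e$ is also a spanning tree of $G$, so any monomial in the variables $\{w(f):f\in E_0\}$ that divides $Z_e$ also appears, with at least the same multiplicity, among the divisors of $Z$. My strategy is to factor out the largest common monomial in these variables from both $Z$ and $Z_e$, locally reducing $\mu_e$ around $w^\star$ to a ratio of polynomials whose denominator is strictly positive at $w^\star$, and then invoke the analytic case to conclude differentiability at $w^\star$. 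Making this cancellation argument uniform across all possible zero-support patterns of $w^\star$ is the only nontrivial step; once it is in place, differentiability follows immediately from the quotient rule, and (iii) drops out.
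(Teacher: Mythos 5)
Your starting point is the same as the paper's: via the Matrix--Tree theorem, $\mu_e(w)=Z_e(w)/Z(w)$ with $Z=\sum_{T}\prod_{f\in T}w(f)$ and $Z_e$ the sum over trees containing $e$ (the paper writes the equivalent identity $\mu_e=1-\tau(w-e)/\tau(w)$), and your argument for (ii) — a quotient of polynomials with strictly positive denominator on the open orthant is analytic — matches the paper's interior argument in substance. The problem is the boundary, which is exactly the part you leave open. For (i), ``the bound $0\le\mu_e\le1$ together with the natural limit convention'' is not an argument: boundedness does not give existence of a limit, and the theorem defines $\mu_e$ at boundary points through the pseudoinverse formula, not through a limiting convention, so you would additionally have to show the two agree. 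For (iii), your proposed monomial-cancellation cannot work in general. Take $G$ a triangle with edges $e_1,e_2,e_3$ and $w^\star=(1,0,0)$, so $E_0=\{e_2,e_3\}$. Then $Z=w_1w_2+w_1w_3+w_2w_3$ and $Z_{e_2}=w_1w_2+w_2w_3$; the largest monomial in the variables $\{w_2,w_3\}$ dividing every term of $Z$ is $1$, so nothing can be factored out, yet $Z(w^\star)=0$. Worse, along the rays $w=(1,t,ct)$ one finds $Z_{e_2}/Z\to 1/(1+c)$, which depends on $c$; the quotient has no limit at $w^\star$ at all, so no algebraic cancellation can exhibit $\mu_e$ as a smooth quotient near such a point.

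These troublesome points are precisely those where the subgraph of strictly positive edges is disconnected, and the paper does not attempt cancellation there. Its proof of (iii) is a dichotomy: if $G$ minus the zero-weight edges is still connected, the reduced Laplacian remains positive definite at $w_0$ and the interior smoothness argument applies verbatim (this is the only situation your factoring idea would need to cover, and there it is unnecessary, since $Z(w^\star)>0$ already); if it is disconnected, the paper gives a separate direct argument about the behaviour of $\mu_e$ in a neighbourhood of $w_0$ (its Case B) rather than manipulating the rational expression. So your proposal reproduces the easy parts (ii) and the interior of (i), but the disconnected-support boundary case — the actual content of (iii) and of continuity up to the boundary — is explicitly left as an ``obstacle,'' and the specific route you sketch for closing it would fail on examples as small as a triangle.
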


\begin{proof}
Let $\widetilde L_G(w)$ denotes the reduced Laplacian with any single row/column removed. By Kirchhoff’s Matrix–Tree Theorem \cite{kirchhoff1847ueber} we have 
\[
   \tau(w)\;=\;\det\!\bigl(\widetilde L_G(w)\bigr)
        \;=\!\!\sum_{\text{spanning trees }T}\;
             \prod_{f\in T}w(f),
\]
For an edge $e$ we have,

\begin{equation}\label{eq:mue-ratio}
   \mu_e(w)
   \;=\;1-\frac{\tau(w-e)}{\tau(w)}
\end{equation}
where $w-e$ denotes the weight vector with $w(e)=0$.
Both numerator and denominator in~\eqref{eq:mue-ratio} are (multivariate)
polynomials; moreover $\tau(w)>0$ whenever at least one spanning tree has
positive total weight, which is guaranteed by the connectivity of $G$.
Hence the ratio is continuous on $\zR_{\ge 0}^{E}$, proving part~(i).

On the interior $\zR_{>0}^{E}$ the reduced Laplacian
$\widetilde L_G(w)$ is positive definite (PD).
The maps
\[
   w\longmapsto\widetilde L_G(w)\quad\text{(affine map)},\qquad
   A\longmapsto A^{-1}\quad(C^{\infty}\hbox{ on PD}),
\]
compose smoothly, so the formula
\(
   \mu_e(w)=w(e)\,b_e^\top\widetilde L_G(w)^{-1}b_e
\)
is infinitely differentiable i.e., is $C^\infty$ and indeed real‐analytic on $\zR_{>0}^{E}$, giving~(ii).

It remains to show (iii) at boundary points where some coordinates of $w$
equal $0$. Let $w_0$ be a weight vector with at least one zero entry, and $Z$ be the set of zero weight edges. There are two cases:

\noindent
\underline{Case A: $G-Z$ is connected.}
In this case the reduced Laplacian $\widetilde L_G(w_0)$ positive
definite since $G-Z$ has at least one spanning tree, so the same smoothness argument used for~(ii) applies; thus
$\mu_e(w_0)$ is differentiable (in fact is $C^\infty$).

\noindent
\underline{Case B: $G-Z$ is disconnected.}
Then every spanning tree of any nearby weight vector must contain the
cut–edge whose weight vanished, so the term in \cref{eq:mue-ratio} equals~1
in a whole neighbourhood; hence $\mu_e$ is locally constant and therefore
differentiable at $w_0$ with derivative~$0$.

In either case one‐sided (classical) derivatives exist and coincide with
the interior derivative, completing~(iii).
\end{proof}

\section{Case study: Maximum Independent Set}
For the maximum independent set problem, we want to find the largest set $S$ of nodes in a graph $G=(V,E)$, where no pair of nodes in the set is adjacent. Unfortunately, there is no known compact description in terms of inequalities for the independent set polytope \citep{lovasz2003semidefinite}. This immediately presents a challenge for the design of the decomposition. Furthermore, linear optimization over the  independent set polytope is NP-Hard since it amounts to solve the the maximum independent set problem. Therefore, in this case, we will consider a relaxation of the polytope which has a known description in terms of inequalities and for which the optimization problem can be solved efficiently.

First, let us assume that $\emptyset \in \mc{C}$, and that the graph $G$ has no isolated nodes. In order to be able to optimize over independent sets we are going to need access to a polytope description. The constraints $0 \leq x_i\leq 1 $ for all $i$ and $\mathbf{x}(i)+\mathbf{x}(j)\leq 1 $ for $(i,j) \in E$ are defining the polytope referred to in the literature as FSTAB \citep{lovasz2003semidefinite}. FSTAB is a relaxation of the independent set polytope since it includes half-integral vertices. We will explain how we will leverage FSTAB to explore the space of independent sets.

For step 3 we use \cref{eq:obtainvertex} to obtain a vertex of the polytope FSTAB.  Note that now the vertex may contain half-integral coordinates. This will affect our coefficient selection.
We need to select a coefficient that does not violate the polytope inequalities. The constraints $0 \leq \mb{x} \leq 1$ imply the same rule for the coefficient $a_t$. However, we have an additional consideration. If we just choose the coefficients as in the cardinality constraint case, when the coordinates $ i,j \notin S$ and $(i,j) \in E$ get rescaled by $1-a_t$, we may break the constraint for $\mathbf{x}(i)+\mathbf{x}(j)\leq 1$. This leads us to the following result:

\begin{theorem}    Let \begin{equation}\label{eq:indsetcoeff}
a_t = \min\Bigg\{
\underbrace{\min_{i\in S_t}\ \mathbf{x}_t(i)}_{\mathbf{x}(i)\ge 0},\quad
\underbrace{\min_{j\notin S_t}\ \bigl(1-\mathbf{x}_t(j)\bigr)}_{ \mathbf{x}(j)\le 1},\quad
\underbrace{\min_{(u,v)\in E:\ u,v\notin S_t}\ \bigl(1-\mathbf{x}_t(u)-\mathbf{x}_t(v)\bigr)}_{\mathbf{x}(u)+\mathbf{x}(v)\le 1}
\Bigg\}
\end{equation}
be the coefficient chosen at each iteration in the decomposition.  \cref{alg:decomposition-generic} on FSTAB using coefficients from \cref{eq:indsetcoeff} and \cref{eq:obtainvertex} for step 3, obtains an exact decomposition in at most $n+1$ steps, and each coefficient is an a.e. differentiable function of its corresponding iterate in the decomposition.
\end{theorem}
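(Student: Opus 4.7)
The plan is to mirror the GLS-style argument used for \cref{thm:decompositioncardinality} and \cref{thm:aediffGLS}, but adapted to the three families of facet inequalities that define FSTAB, namely $\mathbf{x}(i)\ge 0$, $\mathbf{x}(j)\le 1$, and $\mathbf{x}(u)+\mathbf{x}(v)\le 1$ for each edge $(u,v)\in E$. First I would verify that with $a_t$ chosen as in \eqref{eq:indsetcoeff}, the next iterate $\mathbf{x}_{t+1}=(\mathbf{x}_t-a_t\mathbf{1}_{S_t})/(1-a_t)$ satisfies all three families of inequalities. This reduces to a case analysis: for each facet inequality one substitutes the recurrence, rearranges, and observes that the bound on $a_t$ imposed by that inequality matches exactly one of the three terms in the min. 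The only nontrivial case is the edge inequality for $u,v\notin S_t$: because $\mathbf{1}_{S_t}$ is itself the indicator of an independent set (a vertex of FSTAB), its coordinates on $u$ and $v$ satisfy $\mathbf{1}_{S_t}(u)+\mathbf{1}_{S_t}(v)\le 1$, and if both are $0$ the constraint on $\mathbf{x}_{t+1}(u)+\mathbf{x}_{t+1}(v)$ is controlled precisely by the third term in the min.

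Next I would establish termination. Following the GLS template described in the excerpt, the perturbed oracle from \eqref{eq:obtainvertex} returns a vertex $\mathbf{1}_{S_t}$ of the minimal face $F_t$ of FSTAB containing $\mathbf{x}_t$. By construction $a_t$ is the largest scalar such that $\mathbf{x}_{t+1}\in\mathrm{FSTAB}$, so at least one facet inequality that was strict at $\mathbf{x}_t$ becomes tight at $\mathbf{x}_{t+1}$: either a new coordinate is driven to $0$ or $1$, or a new edge inequality becomes tight. Because $\mathbf{x}_{t+1}$ continues to satisfy every facet inequality that was tight at $\mathbf{x}_t$ (the oracle picks $\mathbf{1}_{S_t}\in F_t$, so those tight constraints are preserved under the convex combination), $\mathbf{x}_{t+1}$ lies on a strictly lower-dimensional face $F_{t+1}\subsetneq F_t$. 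Since $\dim\mathrm{FSTAB}\le n$, this descent yields termination in at most $n+1$ iterations, and the resulting convex combination reconstructs $\mathbf{x}_0$ exactly.

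Finally, for a.e. differentiability, each of the three quantities inside the min of \eqref{eq:indsetcoeff} is an affine function of $\mathbf{x}_t$, and the index sets over which the inner mins are taken are determined by $S_t$, which itself is locally constant in $\mathbf{x}_t$ away from the measure-zero set where ties in the lexicographic tie-break would switch the oracle's output (this is the same argument as in the proof of \cref{thm:aediffGLS} and \cref{thm:decompositioncardinality}). Hence $a_t$ is locally a pointwise minimum of finitely many affine functions, i.e.\ piecewise linear, and therefore differentiable almost everywhere with respect to $\mathbf{x}_t$. By the chain rule applied through the recurrence $\mathbf{x}_{t+1}=(\mathbf{x}_t-a_t\mathbf{1}_{S_t})/(1-a_t)$, the probability weights $p_{\mathbf{x}_t}(S_t)=a_t\prod_{i<t}(1-a_i)$ are a.e.\ differentiable in the original input $\mathbf{x}_0$.

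The main obstacle I anticipate is the edge-inequality case in the feasibility step: one must carefully argue that $\mathbf{1}_{S_t}$ being the indicator of an independent set actually interacts correctly with the perturbed oracle, since the oracle is called on FSTAB (which has half-integral vertices) rather than on the true independent set polytope. Ensuring that the vertex returned by the lexicographic tie-break is genuinely a vertex of FSTAB (possibly half-integral) and that the induced coefficient formula \eqref{eq:indsetcoeff} still correctly accounts for both integral and half-integral $\mathbf{1}_{S_t}$ will be the delicate bookkeeping step, but it reduces to plugging a half-integral vertex into the three facet inequalities and observing that the derivation of $a_t$ goes through unchanged.
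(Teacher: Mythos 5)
Your proposal follows essentially the same route as the paper's own proof: feasibility of the next iterate by matching each FSTAB facet inequality to a term of the min, persistence of previously tightened constraints (coordinates fixed at $0$/$1$ and tight edge inequalities preserved because the oracle returns a vertex of the minimal face containing the current iterate), a face-dimension descent giving termination in at most $n+1$ steps, and a.e.\ differentiability of $a_t$ as a pointwise minimum of affine functions propagated through the recurrence. The half-integral-vertex bookkeeping you flag at the end is in fact also left implicit in the paper's proof (its coefficient formula and persistence computation are written as if vertices were set indicators, with only the tight-edge computation accommodating fractional vertices), so your treatment is at the same level of rigor as the original and, if anything, more explicit about where the care is needed.
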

\begin{proof}
  At each step, a coordinate is set to either 0, 1, or an edge inequality is tightened. Since we use  the same recurrence as with the cardinality constraint, if at any iteration a coordinate is set to 0 or 1, then it cannot change. For the edge inequalities, suppose we have $\mathbf{x}_{t+1}(u)+\mathbf{x}_{t+1}(v)=1$,   and $\mathbf{v}_{t+1}(u)+\mathbf{v}_{t+1}(v)=1$ where $\mathbf{v}_{t+1}$ is the vertex we obtain by the oracle at iteration $t+1$.  
Thus
\begin{align}
\mathbf{x}_{t+2}(u)+\mathbf{x}_{t+2}(v)
&=\; \frac{\mathbf{x}_{t+1}(u)+\mathbf{x}_{t+1}(v)-a_{t+1}\bigl(\mathbf{v}_{t+1}(u)+\mathbf{v}_{t+1}(v)\bigr)}{1-a_{t+1}} \\
&=\; \frac{1-a_{t+1}\cdot 1}{1-a_{t+1}} \\
&=\; 1,
\end{align}
so the edge equality persists.  Each time we tighten an inequality, the dimension of the minimal face containing the current iterate is decreased by one. Thus, in at most $n$ steps the algorithm will land on a vertex (face of  dimension 0) of the polytope, at which point the algorithm terminates. Finally, the coefficients are always a.e. differentiable functions of $\mathbf{x_t}$ since the minimum function over $\mathbf{x}_t$ is also everywhere differentiable.
\end{proof}

\textbf{Obtaining a point on FSTAB.}
Since we would like to build a support that avoids half-integral points as much as possible we follow a simple gradient-based projection scheme for FSTAB. We begin by defining the total edge-violation of a point \(\mathbf{x}\in\mathbb{R}^n\) with slack parameter \(s\ge0\) as
\[
V(\mathbf{x})
=
\sum_{(i,j)\in E}
\max\bigl\{0,\;\mathbf{x}(i)+ \mathbf{x}(j) + s - 1\bigr\},
\]
which measures how much each edge \((u,v)\) exceeds the bound \(\mathbf{x}(i) + \mathbf{x}(j) + s \le 1\).  To correct all violations at once, we compute  \(\nabla V(\mathbf{x}) = (d_1, \dots, d_n)\), where
\[
d_i
=
\sum_{\{j:\,(i,j)\in E\}}
\mathbf{1}\bigl[\mathbf{x}(i) +\mathbf{x}(j) + s - 1 > 0\bigr]
\]
counts how many incident edges on node \(i\) are violated.  Next, for each violated edge \((u,v)\) with violation amount
\[
b_{uv} = \mathbf{x}(u) + \mathbf{x}(v) + s - 1 > 0,
\]
we require a step-size \(\eta\) satisfying $
b_{uv} - \eta\,(d_u + d_v) \le 0 $,
which implies 
\begin{align*}
    \eta \ge \frac{b_{uv}}{d_u + d_v}.
\end{align*}
Taking the maximum of these ratios over all violated edges gives the minimal \(\eta\) that fixes every violation:
\[
\eta
=
\max_{(u,v)\,\colon\,\mathbf{x}(u) + \mathbf{x}(v) + s > 1}
\frac{\mathbf{x}(u) + \mathbf{x}(v) + s - 1}{\,d_u + d_v\,}.
\]
Finally, we compute $
{\mathbf{x}}'
=
\sigma(\mathbf{x}
-
\eta\,\nabla V(\mathbf{x})),$ where $\sigma$ is a ReLU. This guarantees \(\mathbf{x}'(u) + \mathbf{x}'(v) + s \le 1\) for every \((u,v)\in E\) and \(\mb{x}'\ge0\), which ensures that we have a point in FSTAB.

\paragraph{Dealing with half-integral corners.} The decomposition may still yield several half-integral corners which can  affect the performance of our extension, since we need the support sets to be independent. 
There are two ways of dealing with half integral corners. One approach is to penalize them in the objective i.e., the function evaluated at half-integral corners returns 0.  Another approach
is by tightening the relaxation with additional inequalities commonly found in the literature, including clique constraints and odd cycle constraints \citep{lovasz2003semidefinite}.  In practice, many of those come down to imposing an L1 norm constraint on the point in the polytope. We may enforce both constraints (FSTAB and L1) by alternating projections. It is easy to see that if the norm is $1$ and the constraints are all satisfied, then we can always obtain a decomposition into independent sets, so a valid decomposition always exists.

\section{Experiments}
Here, we provide detailed experimental settings and some additional experimental results. All the datasets are public and our code is available at \url{https://github.com/frankx2023/Neural_Combinatorial_Optimization_with_Constraints}.

\paragraph{Hardware.}
All experiments are run on 16 cores (32 threads) of Intel(R) Xeon(R) Platinum 8268 CPU (24 cores, 48 threads in total), 32 GB ram, with a single Nvidia RTX8000 48GB GPU.

\paragraph{Datasets.}
Following previous work \citep{bu2024tackling,wang2022towards}, we evaluate our methods on both synthetic and real-world bipartite graphs $(U, V, E)$, $V$ is the ground set and the goal is to select $k$ nodes from $V$ so that we cover maximum number of nodes from $U$.

\begin{itemize}
    \item \textbf{Random Uniform.} The \emph{Random Uniform} datasets include both the \texttt{Random500} and \texttt{Random1000} settings, and are used for both training and testing. Each dataset consists of $100$ independently generated bipartite graphs, where $|V|=500$, $|U|=1000$ (\texttt{Random500}) or $|V|=1000$, $|U|=2000$ (\texttt{Random1000}). Each $u \in U$ is assigned a weight uniformly at random from $1$ to $100$. Each $v \in 
    V$ covers a random subset of $U$, with the number of covered nodes for each $v$ chosen uniformly at random between $10$ and $30$.

    \item \textbf{Random Pareto.} The \texttt{Random Pareto} dataset consists of $100$ independently generated bipartite graphs with $|U|=|V|=1000$, used exclusively for training in the Twitch experiments. In each graph, the number of covered nodes per covering node $v\in V$ follows a Pareto distribution, with the $\alpha$ parameter randomly selected between $1$ and $2$, resulting in a heavy-tailed degree distribution (typically, $20\%$ of covering nodes account for $80\%$ of the edges). Each $u \in U$ is assigned a weight uniformly at random from $1$ to $100$, and every $u \in U$ is covered by at least one $v \in V$.

    \item \textbf{Twitch.} The Twitch datasets model social networks of streamers grouped by language, with $|U|=|V|$ set to the number of streamers. The dataset includes: DE ($|U|=|V|=9498$), ENGB ($7126$), ES ($4648$), FR ($6549$), PTBR ($1912$), and RU ($4385$). The objective is to maximize the sum of logarithmic viewer counts over $U$.

    \item \textbf{Railway.} The Railway datasets are derived from real-world Italian railway crew assignments. We evaluate on three graphs: rail507 ($|V|=507$, $|U|=63009$),  rail516 ($|V|=516$, $|U|=47311$), and rail582 ($|V|=582$, $|U|=55515$).
\end{itemize}

\paragraph{Baselines.}
We adopt the same baselines as prior work, including: 
\begin{itemize}
    \item \textbf{Random.} Samples $k$ candidates uniformly at random over multiple trials, selects the best within 240 seconds.

    \item \textbf{Greedy algorithm \citep{NemhauserWF78}.} Iteratively adds the element with the largest marginal gain, achieving a $(1-1/e)$ approximation.

    \item \textbf{Gurobi.} Exact MIP solver, time-limited to 120 seconds per instance. The version used is Gurobi 12.01.

    \item \textbf{EGN \citep{karalias2020erdos}.} Optimizes a probabilistic objective with naive rounding; does not guarantee feasibility during optimization.

    \item \textbf{CardNN \citep{wang2022towards}.} One-shot neural solvers for cardinality-constrained problems. Main variants:
        CardNN-S (Sinkhorn), CardNN-GS (Gumbel-Sinkhorn), CardNN-HGS (Homotopy Gumbel-Sinkhorn).
        For each, a CardNN-noTTO variant omits test-time optimization.

    \item \textbf{UCOM2 \citep{bu2024tackling}.} Combinatorial optimization using greedy incremental derandomization. Three variants differ only by test-time augmentation and running time:
        UCOM2-short (no augmentation, fastest),
        UCOM2-middle (moderate augmentation, medium time),
        UCOM2-long (maximal augmentation, longest time).

    \item \textbf{RL.}  For the RL baseline, we follow the instructions in \citep{kool2019attentionlearnsolverouting,dimes2022} to use GraphSage layers \citep{hamilton2018embedding} as policy network with Actor-Critic \citep{actor_critic} algorithm to train on the same problem instances. 
\end{itemize}

\subsection{Training}

To enhance optimization stability and solution quality, we incorporate several training techniques. 

A rescaling coefficient in \cref{sec:scaling-factor} is referred to as a \emph{scaling factor}. A set of \emph{scaling factors} :$[1.0, 0.8, 0.6, 0.4, 0.2, 0.1, 0.05, 0.02, 0.01]$ are used in the training and inference parallelly, helping to get stable decompositions across different data distributions.

An \emph{entropy regularization} term is applied with a cosine-decayed weight~$\lambda_t$, initialized at $0.05$ and annealed to zero by the 30th epoch, promoting exploration within the hypersimplex.

Convergence speed is controlled via a \emph{sharpness factor} applied to the noise perturbation layer when the sigmoid activation function is used in that layer (when using min-max scaling, it is ignored). The sharpness factor is linearly increased from $0.3$ at the onset of training to $1.0$ by epoch 80.

Gaussian noise is injected into the model logits throughout training to foster exploration and enhance robustness. The standard deviation of this noise is initially set to $0.05$ and is reduced to zero according to a cosine decay schedule as training progresses.

The other hyperparameters are listed below:
\begin{itemize}
    \item {Dropout:} $0.1$ (applied only during training).
    \item {Optimizer:} AdamW.
    \item {Learning Rate:} $5\times 10^{-3}$.
    \item {Learning Rate Scheduler:} \texttt{warmup\_cosine}, with $50$ warmup epochs, decaying to a minimum of $5\times 10^{-5}$.
    \item {Weight Decay:} $1\times 10^{-4}$.
    \item {Epochs:} $80$.
    \item {Batch Size:} $4$.
    \item {Seed:} $42$.
    \item {Workers:} $16$ CPU worker threads for data loading, preprocessing, and decomposition with multiple scaling factors.
\end{itemize}

\subsection{Inference strategies and local improvement}

We evaluate our model using three distinct inference strategies---\emph{Short}, \emph{Medium}, and \emph{Long}---each balancing computational efficiency and local improvement:

\textbf{Local improvement.}  
To further refine the candidate solution, we employ a local search algorithm based on iterative element replacement. At each step, the algorithm considers replacing a single element in the current solution set with one from outside the set (within a defined candidate pool), accepting only replacements that yield a strictly improved objective value. The procedure terminates when no improving replacements are found or when the maximum iteration count is reached.

\begin{itemize}
    \item \textbf{Short:} For each instance, $50$ decomposed sets are generated from each scaling factor and the best among them is chosen as the result, with no further local improvement applied.
    \item \textbf{Medium:} For each instance, $100$ decomposed sets are generated from each scaling factor and the best among them is chosen as the base set. The base set then undergoes the local improvement procedure , performed for up to $10$ iterations. The candidate pool consists of the first $k$ nodes produced by the scaling factor $0.1$ that is not present in the base set .
    \item \textbf{Long:} For each instance, $5$ additional augmented graphs are generated (with $0$--$30\%$ feature noise and $0$--$30\%$ random edge dropout). For each graph, decomposed sets are generated from each scaling factor and the best among them is chosen as the base set. Local improvement is performed for up to $k$ iterations on the original graph, with the candidate pool includes all nodes that appears in decomposed sets across from the scaling factor where the base set is selected. The best solution found among all augmented graphs is selected as the final result for the instance.
\end{itemize}

\subsection{Ablation study: local improvement candidate pool selection}
To demonstrate that the strategic selection of nodes in our candidate pool provides meaningful benefits, we conducted an ablation study comparing our decomposition-based candidate pool against a random alternative of equal size. In this ablation test, we maintain the same pre-local improvement solution (the base set) as the starting point for both approaches. The original candidate pool is constructed using our medium and long methods. The alternative candidate pool retains the same base set but replaces the remaining candidates with randomly selected nodes, ensuring both pools have identical size for fair comparison.

\begin{table}[H]
\centering
\caption{Ablation study comparing decomposition-based candidate pool versus random alternative. 
Improvement percentages over the base set.}
\label{tab:ablation-candidate-pool}
\resizebox{\textwidth}{!}{
\begin{tabular}{p{4cm} || c c || c c || c c || c c}
\toprule
\multirow{2}{*}{Method} &
\multicolumn{2}{c||}{rand500} &
\multicolumn{2}{c||}{rand1000} &
\multicolumn{2}{c||}{rand500} &
\multicolumn{2}{c}{rand1000}\\
& \multicolumn{2}{c||}{k=10} & \multicolumn{2}{c||}{k=20} & \multicolumn{2}{c||}{k=50} & \multicolumn{2}{c}{k=100}\\
\cmidrule(lr){2-3} \cmidrule(lr){4-5} \cmidrule(lr){6-7} \cmidrule(lr){8-9}
& Medium & Long & Medium & Long & Medium & Long & Medium & Long \\
\midrule
Decomposition-based (Ours) & 2.41\% & 3.28\% & 5.15\% & 8.09\% & 2.11\% & 3.89\% & 3.83\% & 8.07\% \\
Random Alternative & 0.23\% & 1.02\% & 3.03\% & 5.74\% & 0.43\% & 2.39\% & 2.84\% & 5.79\% \\
\bottomrule
\end{tabular}
} 
\end{table}

We evaluate both local improvement strategies across medium and long inference modes. The results, \cref{tab:ablation-candidate-pool}, demonstrate that our strategic candidate selection provides consistent improvements over random selection, validating that our local improvement procedure benefits specifically from the candidate pool construction from our decomposition rather than simply having access to additional nodes for local improvement.

\subsection{Ablation study on short+Gurobi}
We also tested two versions of the “short+Gurobi” approach on the random datasets, one with a time limit near our "medium" method, and for some settings, an version given the time close to our "long" method. When the instance size and the cardinality constraint are small (k = 10 on Random 500), "short+Gurobi" performs slightly better than other methods. For larger instances like Random 500 with k = 50 and Random 1000, our "medium" and "long" approaches show their advantages and achieved similar or better results, indicating that our local improvement technique effectively constrains the solution space and works better even compared to the well-optimized Gurobi Solver. In addition, the short+Gurobi method shows much better cost-efficiency compared to the pure Gurobi method, showing the potential of our method to serve as an efficient initial solution for structured solvers, significantly reducing overall computation time. The results of these tests can be found in \cref{tab:mc-rand500} and \cref{tab:mc-rand1000}.

\subsection{Ablation tests for UCOM2}
As stated in \cref{sec:experiment-discussion}, we categorize UCOM2 as a non-learning method and conduct several ablation studies to substantiate this classification. Specifically, we evaluate two variants: setting the neural network output to zero (UCOM2-zerostart-short), and to random uniform values between $0$ and $1$ (UCOM2-randomstart). Both ablations disregard the output of the neural network. We find that UCOM2-zerostart-short achieves results nearly identical to the standard short variant, while UCOM2-randomstart attains similar performance to all three UCOM2 variants, depending on the number of random samples generated. These results demonstrate that UCOM2's effectiveness is in large part due to their greedy module and the fact that the maximum coverage objective is a monotone submodular function.
\begin{table}[H]
\centering
\caption{Ablation results for UCOM2 variants on Random500. Running time (time): smaller is better. Objective (obj): larger is better. Standard deviations capture variability.}
\label{tab:ucom2-ablation-random500}
\resizebox{\textwidth}{!}{
\begin{tabular}{l|cc|cc}
\toprule
& \multicolumn{2}{c|}{Random500, $k=10$} & \multicolumn{2}{c}{Random500, $k=50$} \\
Method
& Time$\downarrow$ & Obj$\uparrow$
& Time$\downarrow$ & Obj$\uparrow$ \\
\midrule
UCOM2-zerostart-short      & 0.1089 $\pm$ 0.0309s & 15551.01 $\pm$ 367.12 
& 0.0885 $\pm$ 0.1032s & 44311.87 $\pm$ 819.96 \\

UCOM2-randomstart-short    & 0.8862 $\pm$ 0.1238s & 15294.04 $\pm$ 408.52 
& 0.6957 $\pm$ 0.1369s & 44237.92 $\pm$ 824.05 \\

UCOM2-randomstart-medium   & 3.8521 $\pm$ 0.2770s & 15586.95 $\pm$ 360.89 
& 15.4367 $\pm$ 0.0493s & 44867.19 $\pm$ 741.85 \\

UCOM2-randomstart-long     & 35.9862 $\pm$ 2.5628s & 15672.89 $\pm$ 351.74 
& 30.5812 $\pm$ 0.0850s & 44923.74 $\pm$ 754.78 \\

UCOM2-short                & 1.0411 $\pm$ 0.0827s & 15253.35 $\pm$ 370.00 
& 0.7392 $\pm$ 0.0150s & 44208.95 $\pm$ 768.68 \\

UCOM2-medium               & 4.1891 $\pm$ 0.1150s & 15589.25 $\pm$ 363.43 
& 16.0005 $\pm$ 0.0408s & 44852.82 $\pm$ 765.11 \\

UCOM2-long                 & 39.2018 $\pm$ 0.9474s & 15779.45 $\pm$ 358.41 
& 31.7587 $\pm$ 0.0811s & 44906.50 $\pm$ 761.75 \\
\bottomrule
\end{tabular}
}
\end{table}
\begin{table}[H]
\centering
\caption{Ablation results for UCOM2 variants on Random1000. Running time (time): smaller is better. Objective (obj): larger is better. Standard deviations capture variability.}
\label{tab:ucom2-ablation-random1000}
\resizebox{\textwidth}{!}{
\begin{tabular}{l|cc|cc}
\toprule
& \multicolumn{2}{c|}{Random1000, $k=20$} & \multicolumn{2}{c}{Random1000, $k=100$} \\
Method
& Time$\downarrow$ & Obj$\uparrow$
& Time$\downarrow$ & Obj$\uparrow$ \\
\midrule
UCOM2-zerostart-short      & 0.2275 $\pm$ 0.0164s & 30991.19 $\pm$ 528.97 
& 0.1682 $\pm$ 0.0360s & 88693.99 $\pm$ 1248.72 \\

UCOM2-randomstart-short    & 1.6408 $\pm$ 0.0329s & 29878.53 $\pm$ 601.10 
& 1.5426 $\pm$ 0.0449s & 88549.61 $\pm$ 1342.89 \\

UCOM2-randomstart-medium   & 9.4358 $\pm$ 0.0415s & 30437.43 $\pm$ 532.00 
& 8.8802 $\pm$ 0.0522s & 89077.84 $\pm$ 1244.72 \\

UCOM2-randomstart-long     & 89.8093 $\pm$ 0.3444s & 30752.49 $\pm$ 529.91 
& 84.3725 $\pm$ 0.4054s & 89386.56 $\pm$ 1245.85 \\

UCOM2-short                & 1.7342 $\pm$ 0.0677s & 29791.66 $\pm$ 678.02 
& 1.6216 $\pm$ 0.0644s & 88472.61 $\pm$ 1261.36 \\

UCOM2-medium               & 9.5523 $\pm$ 0.0620s & 30420.69 $\pm$ 552.66 
& 8.9956 $\pm$ 0.0890s & 89072.92 $\pm$ 1248.58 \\

UCOM2-long                 & 90.7475 $\pm$ 0.3342s & 30744.61 $\pm$ 512.61 
& 85.2663 $\pm$ 0.6164s & 89408.28 $\pm$ 1232.74 \\
\bottomrule
\end{tabular}
}
\end{table}

On the twitch dataset, UCOM2 uses zero initialization (UCOM2-zerostart) in its original implementation, fully ignoring the trained model. To assess the impact of the neural network, we also evaluate UCOM2-withmodel on twitch, which yields markedly inferior results. This further supports our observation that all learning-based methods perform poorly on Twitch due to the architecture of the encoder network.

\begin{table}[H]
\centering
\caption{Ablation results for UCOM2 variants on the Twitch dataset. Running time (time): smaller the better.
Objective (obj): the larger the better. The standard deviations are captured to show the variability of the dataset and the method.}
\label{tab:ucom2-ablation-twitch}

\resizebox{\textwidth}{!}{
\begin{tabular}{l|cc|cc}
\toprule
& \multicolumn{2}{c|}{Twitch, $k=20$} & \multicolumn{2}{c}{Twitch, $k=50$} \\
Method
& Time$\downarrow$ & Obj$\uparrow$
& Time$\downarrow$ & Obj$\uparrow$ \\
\midrule
Ucom2-withmodel-short   & 73.2402 $\pm$ 80.5553s & 18.67 $\pm$ 45.72
& 115.8045 $\pm$ 126.7061s & 573.67 $\pm$ 376.52 \\

Ucom2-withmodel-medium  & 333.3645 $\pm$ 363.4380s & 102.00 $\pm$ 41.71
& 532.9243 $\pm$ 585.9112s & 3018.17 $\pm$ 5208.31 \\

Ucom2-withmodel-long    & 659.8775 $\pm$ 719.8658s & 150.00 $\pm$ 48.88
& 1054.5567 $\pm$ 1160.6106s & 3029.67 $\pm$ 5202.21 \\

Ucom2-zerostart-short   & 17.8031 $\pm$ 18.4783s & 25853.00 $\pm$ 1112.59
& 17.3739 $\pm$ 18.1940s & 30526.00 $\pm$ 13043.86 \\

Ucom2-zerostart-medium  & 19.6806 $\pm$ 19.8771s & 25858.17 $\pm$ 11206.89
& 21.6114 $\pm$ 21.3478s & 30544.17 $\pm$ 13052.91 \\

Ucom2-zerostart-long    & 22.0589 $\pm$ 21.6869s & 25858.17 $\pm$ 11206.89
& 27.2170 $\pm$ 25.6604s & 30544.17 $\pm$ 13052.91 \\
\bottomrule
\end{tabular}
}
\end{table}

\subsection{Detailed results}
\label{sec:results}
Below are the plots and tables showing the full detailed test results. The raw numerical results are given in \cref{tab:mc-rand500} to \cref{tab:mc-twitch}. The standard deviation are captured to show the variability of the dataset and the method.

\begin{table}[!htbp]
\centering
\makeatletter\@ifundefined{captionsetup}{}{\captionsetup{type=figure}}\makeatother
\caption{Performance comparison of our method against baseline approaches across Learning without TTO, on multiple datasets. Metrics used are inference time (lower is better) and objective value (higher is better).
In the Learning without TTO setting, our short version consistently outperforms all baselines in both inference time and objective value, demonstrating strong learning capability.}
\label{tab:max-cover-experiments-noTTO}

\resizebox{\textwidth}{!}{
\begin{tabular}{c c}
\includegraphics{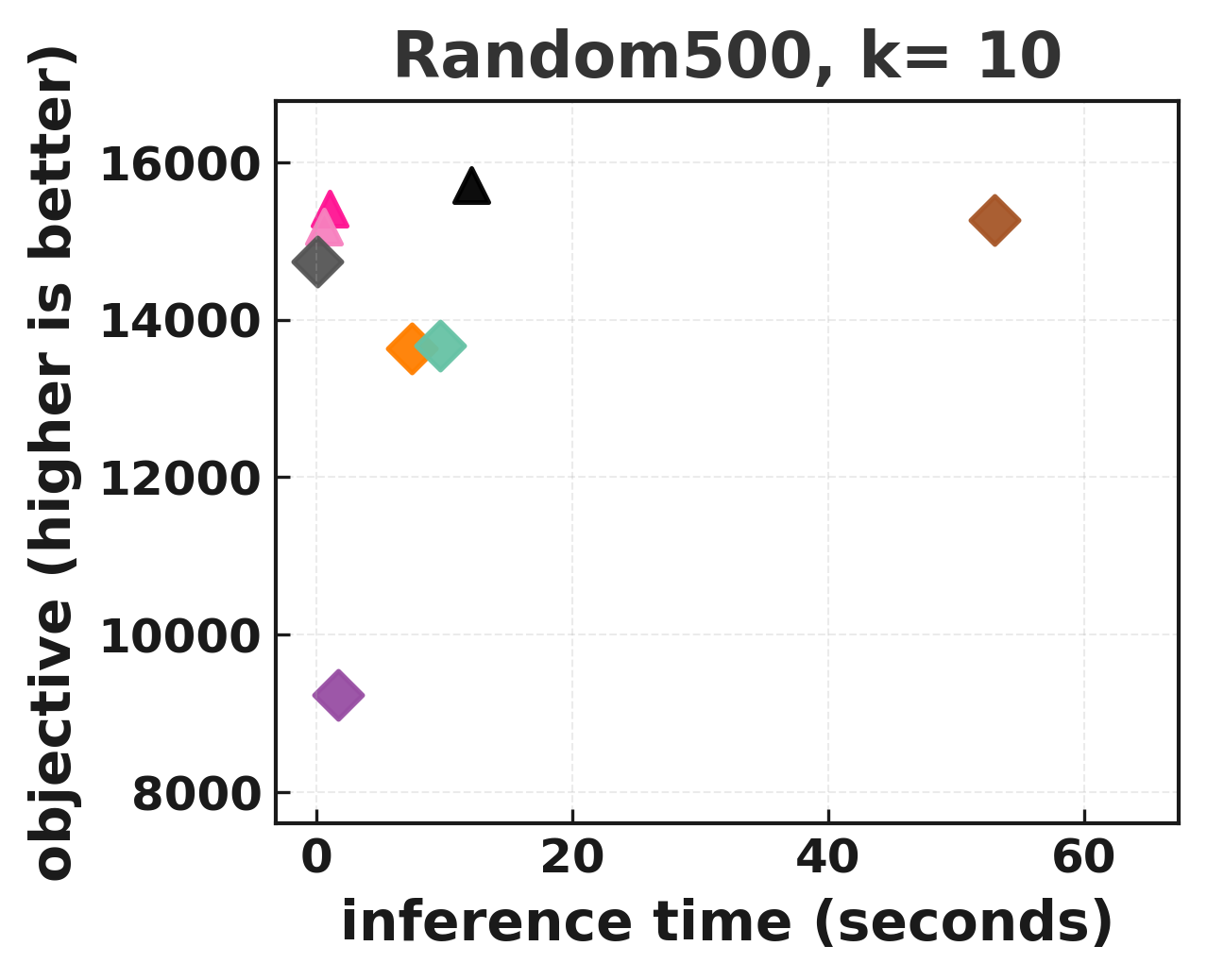} &
\includegraphics{Figures/learnable/Random500_k50.png}\\
\midrule
\includegraphics{Figures/learnable/Random1000_k20.png} &
\includegraphics{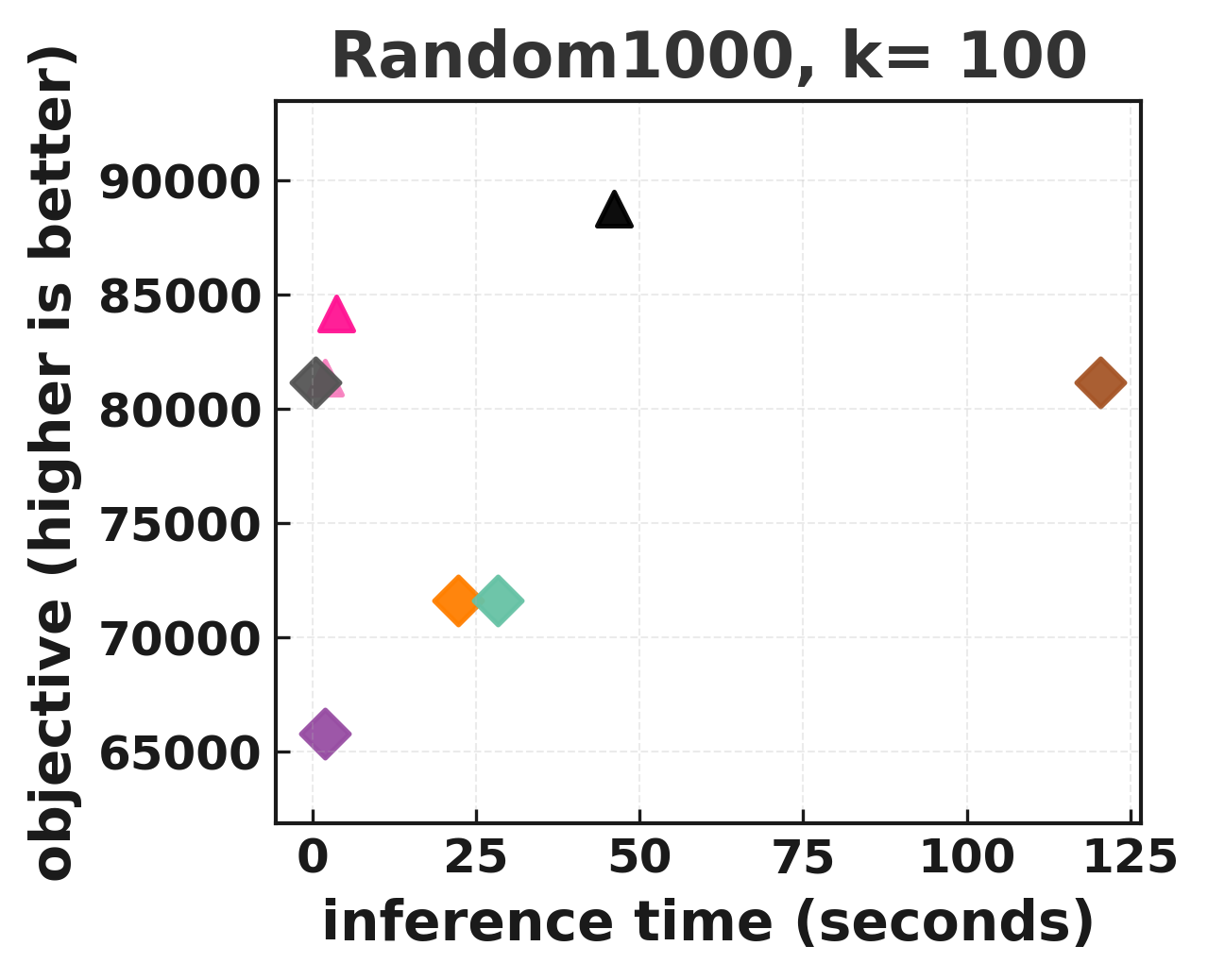} \\
\midrule
\includegraphics{Figures/learnable/Railway_k20.png} &
\includegraphics{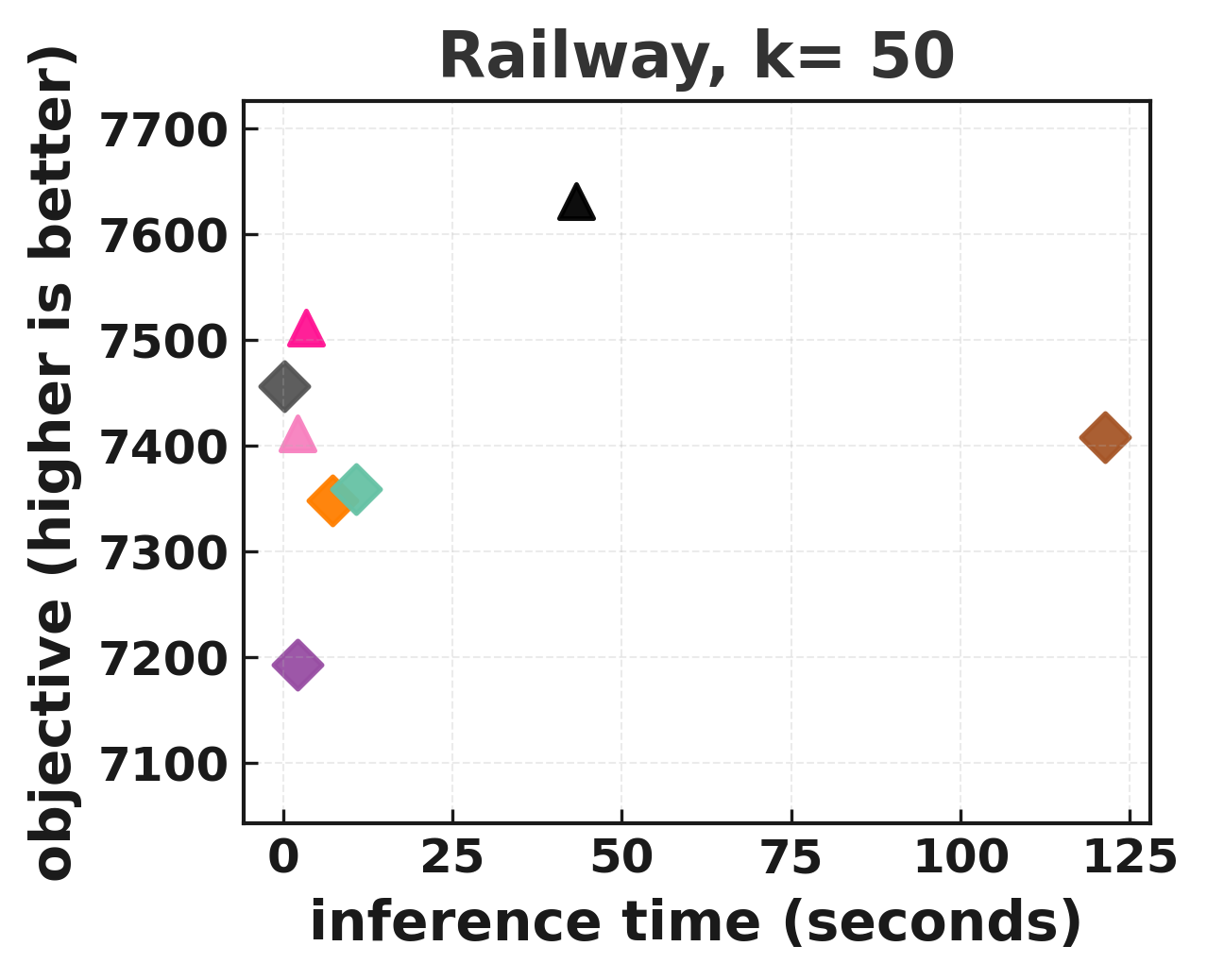} \\
\midrule
\includegraphics{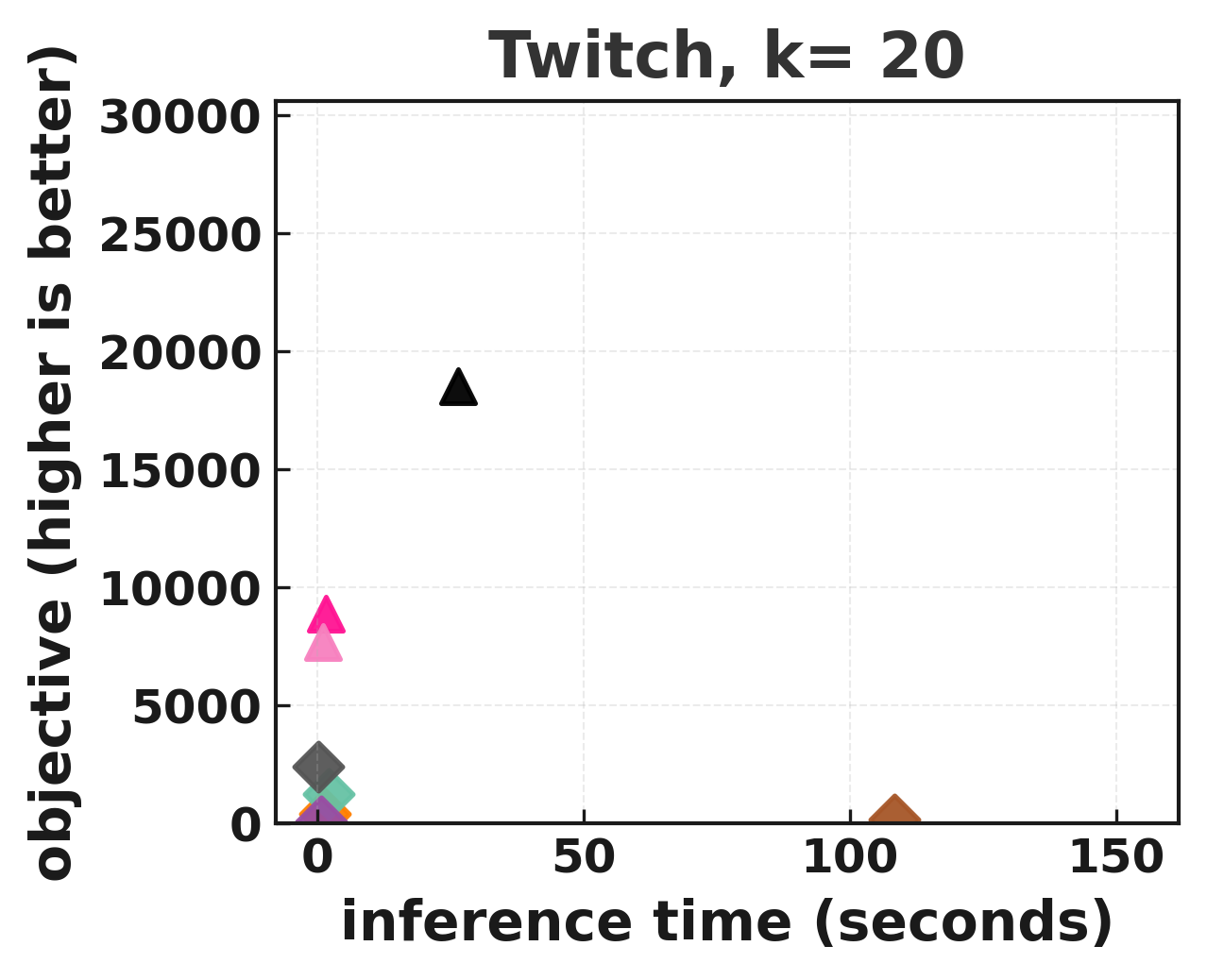} &
\includegraphics{Figures/learnable/Twitch_k50.png}\\
\midrule
\multicolumn{2}{c}{\includegraphics{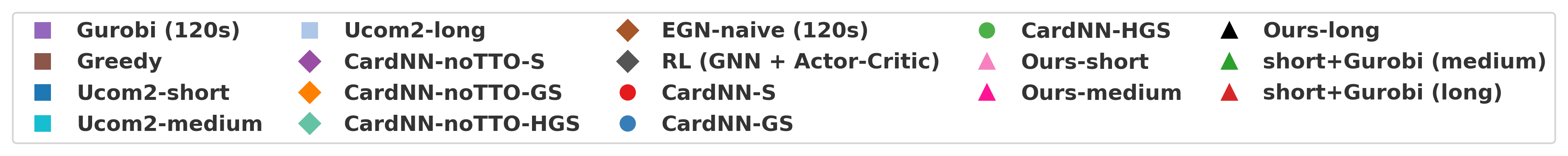}}\\
\end{tabular}
} 
\end{table}
\makeatletter\@ifundefined{FloatBarrier}{}{\FloatBarrier}\makeatother

\begin{table}[!htbp]
\centering
\makeatletter\@ifundefined{captionsetup}{}{\captionsetup{type=figure}}\makeatother
\caption{Performance comparison of our method against baseline approaches across Learning with TTO, on multiple datasets. Metrics used are inference time (lower is better) and objective value (higher is better). When extended to medium and long versions, our method surpasses most TTO-based baselines across datasets, with the exception of the Twitch dataset.}
\label{tab:max-cover-experiments-TTO}

\resizebox{\textwidth}{!}{
\begin{tabular}{c c}
\includegraphics{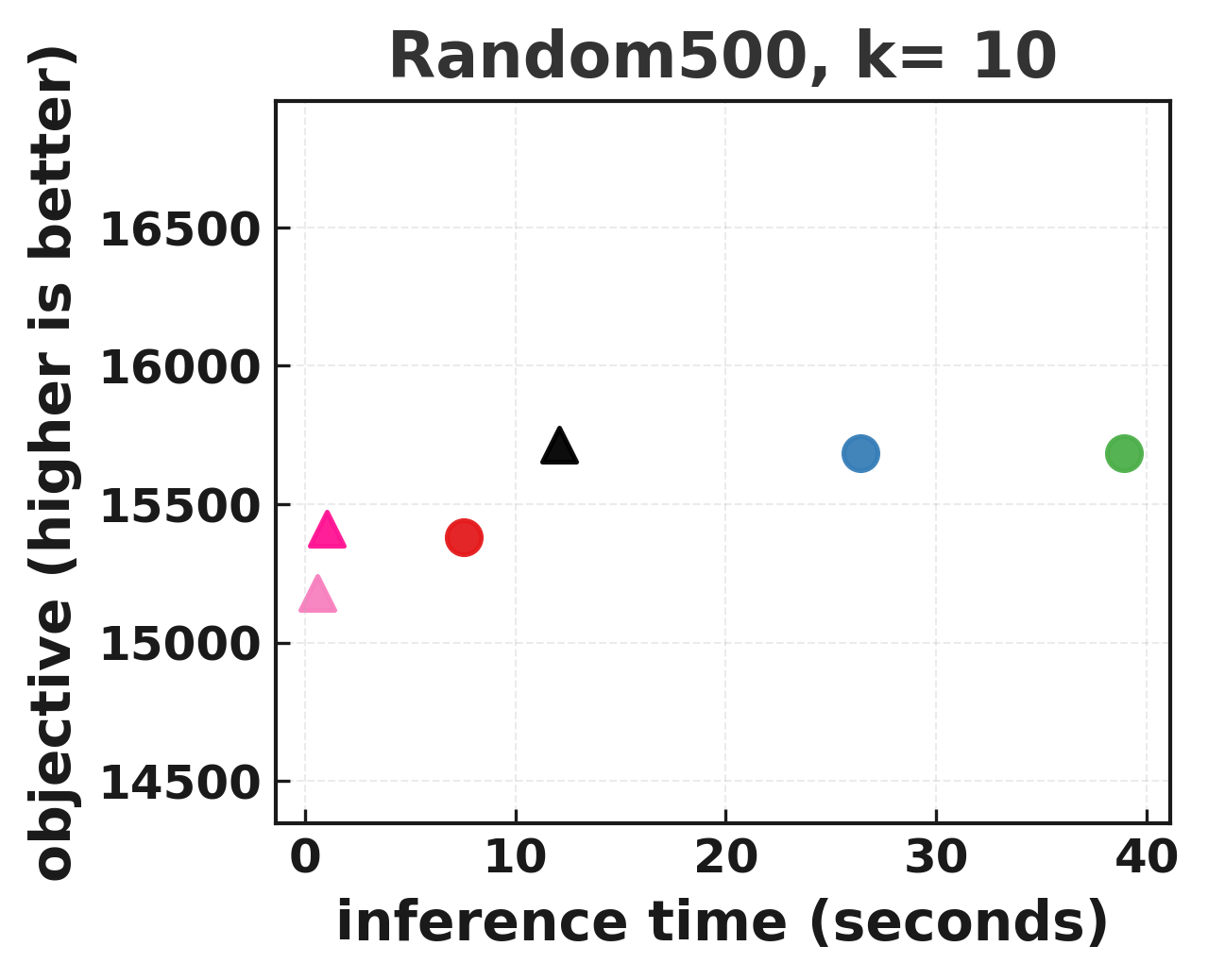} &
\includegraphics{Figures/tto/Twitch_k50.png}\\
\midrule
\includegraphics{Figures/tto/Random1000_k20.png} &
\includegraphics{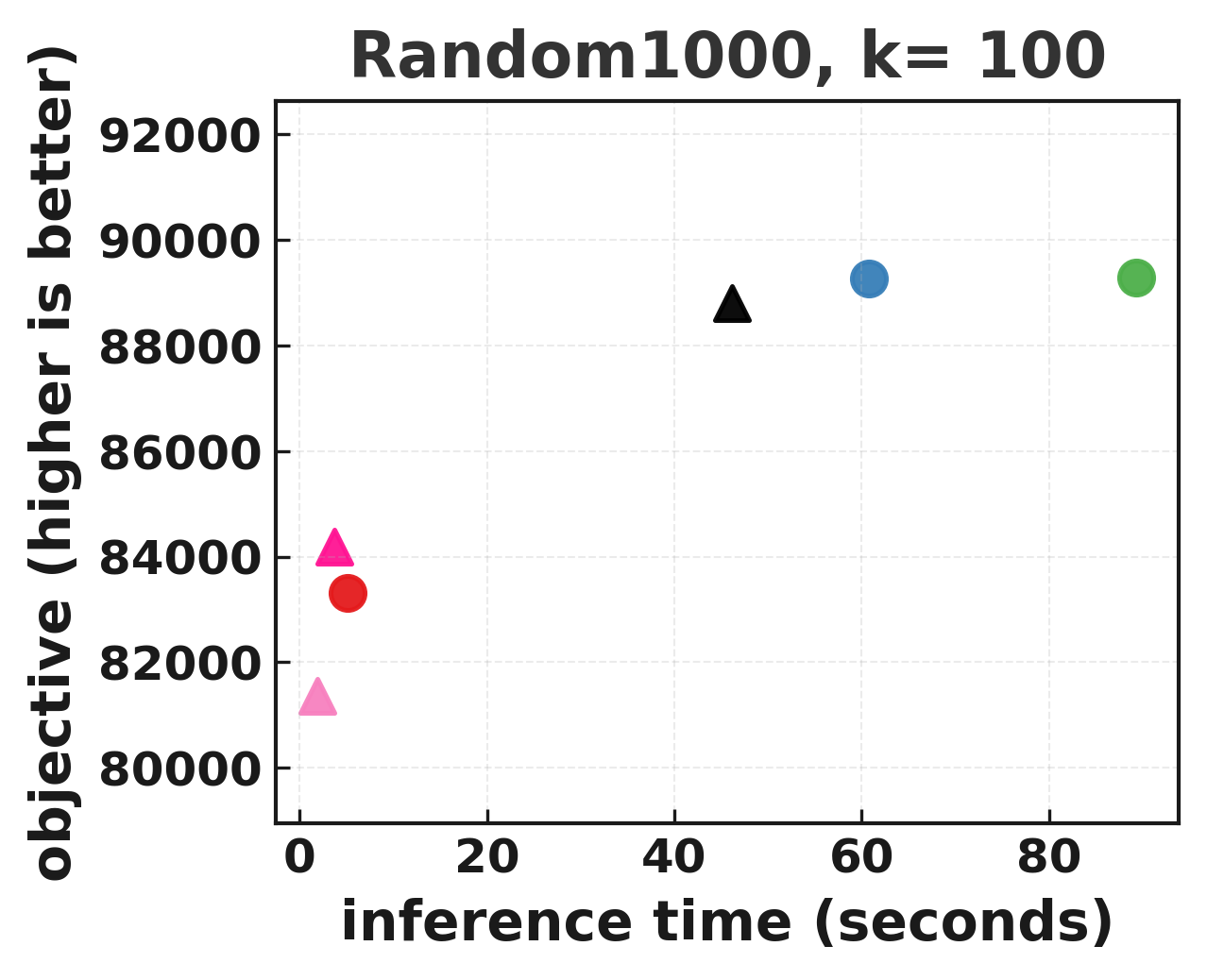} \\
\midrule
\includegraphics{Figures/tto/Railway_k20.png} &
\includegraphics{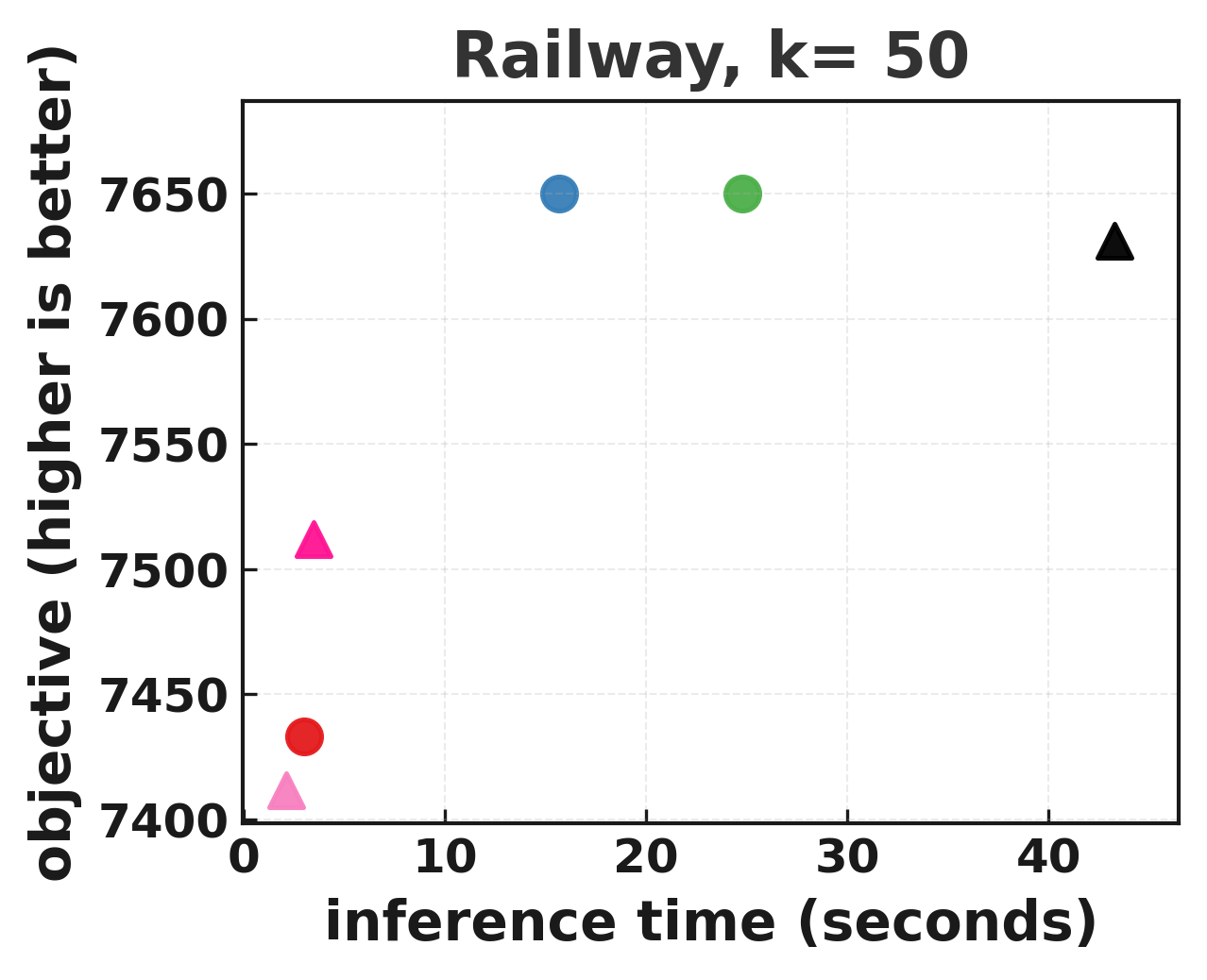} \\
\midrule
\includegraphics{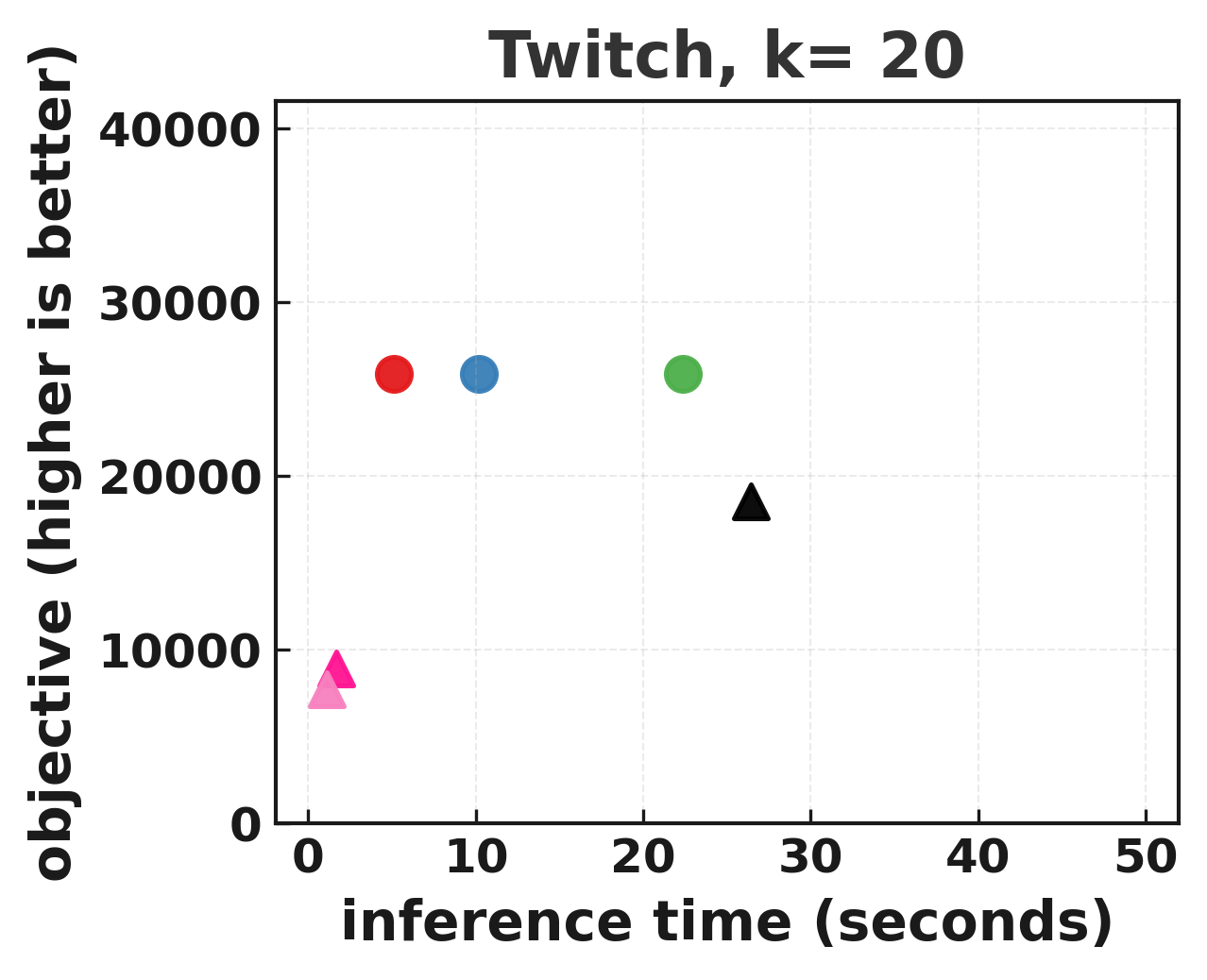} &
\includegraphics{Figures/tto/Twitch_k50.png}\\
\midrule
\multicolumn{2}{c}{\includegraphics{Figures/legend/common_legend.png}}\\
\end{tabular}
} 
\end{table}
\makeatletter\@ifundefined{FloatBarrier}{}{\FloatBarrier}\makeatother

\begin{table}[!htbp]
\centering
\makeatletter\@ifundefined{captionsetup}{}{\captionsetup{type=figure}}\makeatother
\caption{Performance comparison of our method against baseline approaches across non-Learning/traditional methods on multiple datasets. Metrics used are inference time (lower is better) and objective value (higher is better). While greedy is an efficient baseline with a strong approximation guarantee, our method performs competitively and is capable of outperforming it on datasets like Random500 for larger values of $k$. We are also able to outperform UCOM in several cases (e.g., Random1000 and Railway), though there are instances where greedy and/or UCOM perform the best, such as the Twitch dataset.}
\label{tab:max-cover-experiments-traditional}

\resizebox{\textwidth}{!}{
\begin{tabular}{c c}
\includegraphics{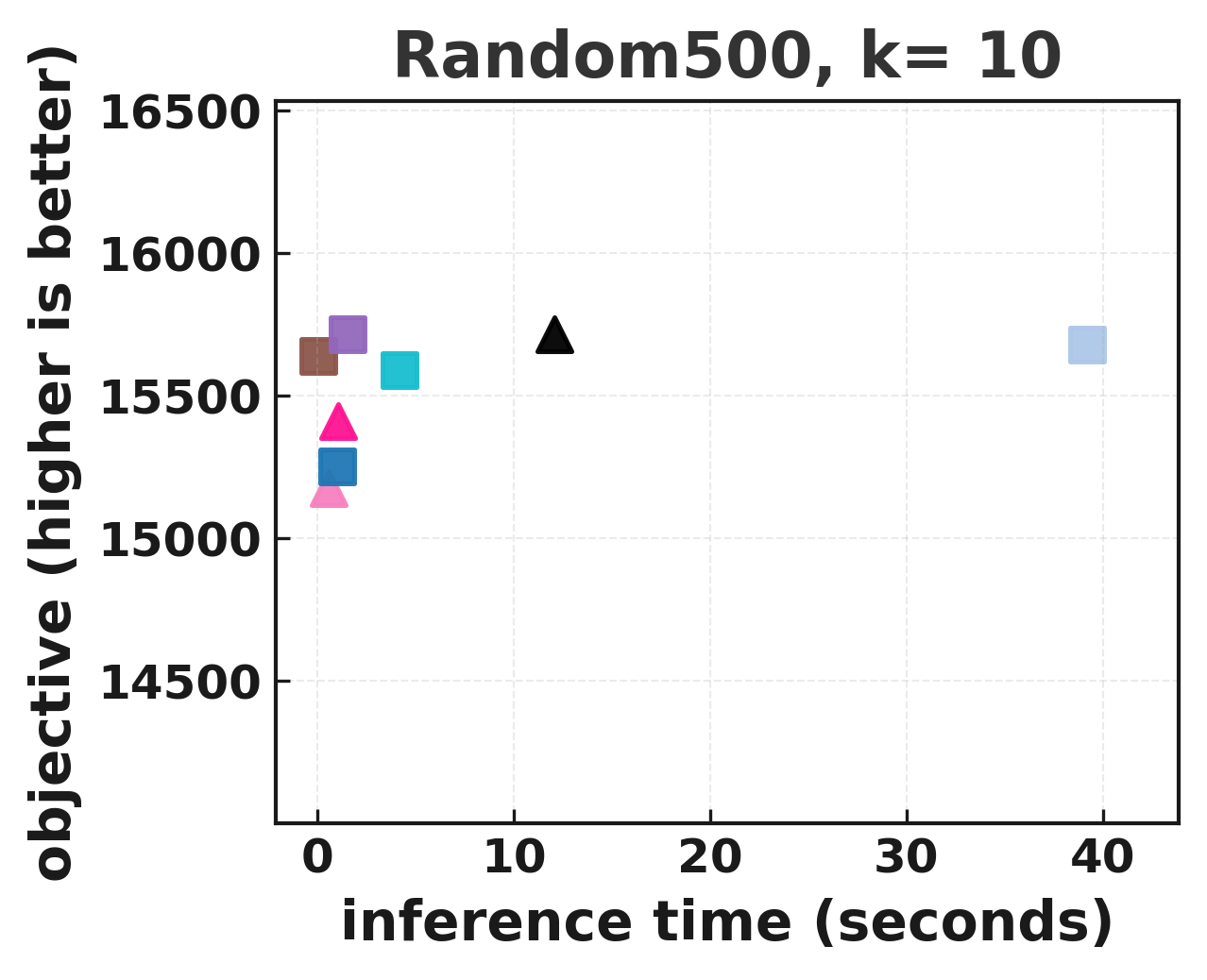} &
\includegraphics{Figures/traditional/Random500_k50.png}\\
\midrule
\includegraphics{Figures/traditional/Random1000_k20.png} &
\includegraphics{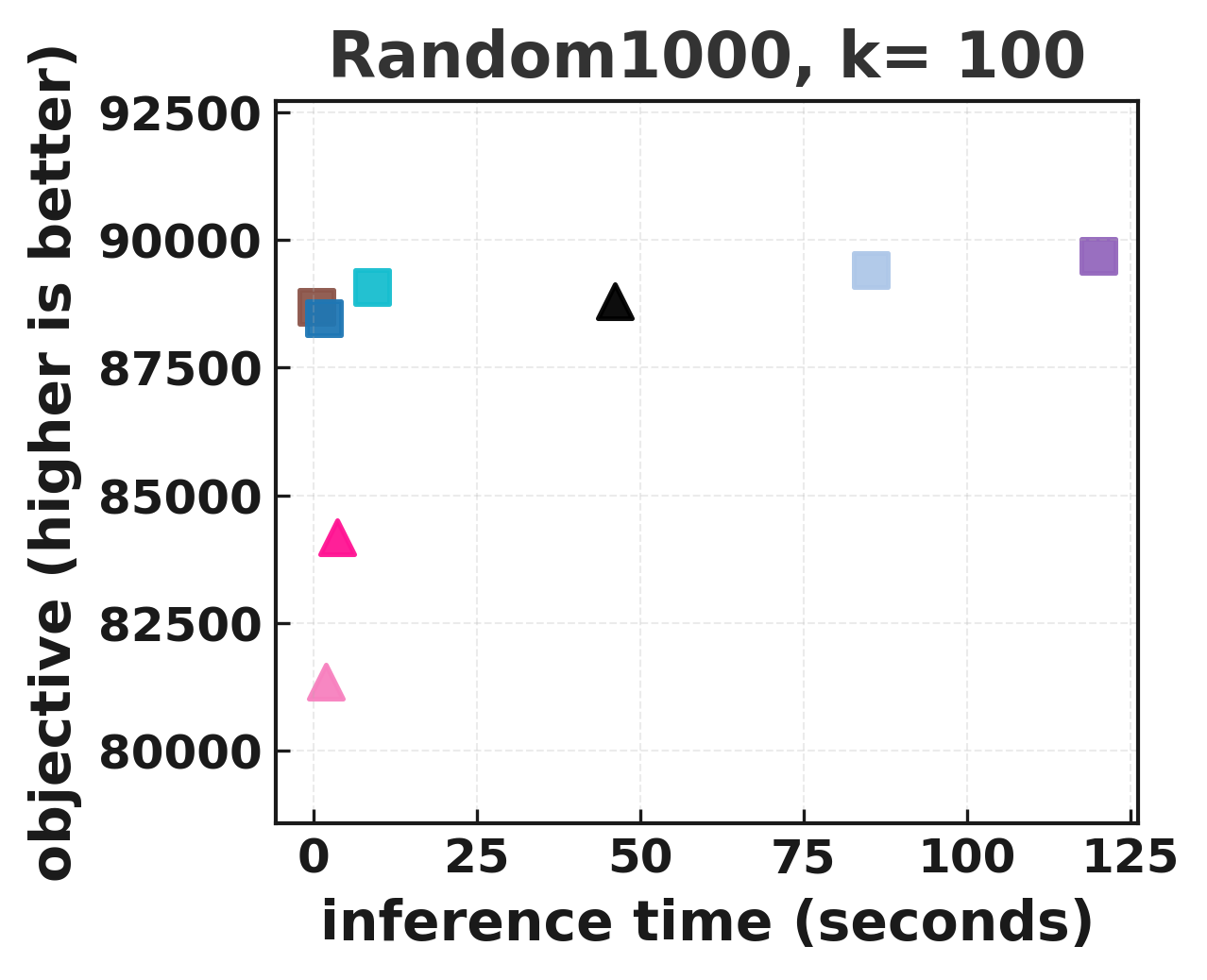} \\
\midrule
\includegraphics{Figures/traditional/Railway_k20.png} &
\includegraphics{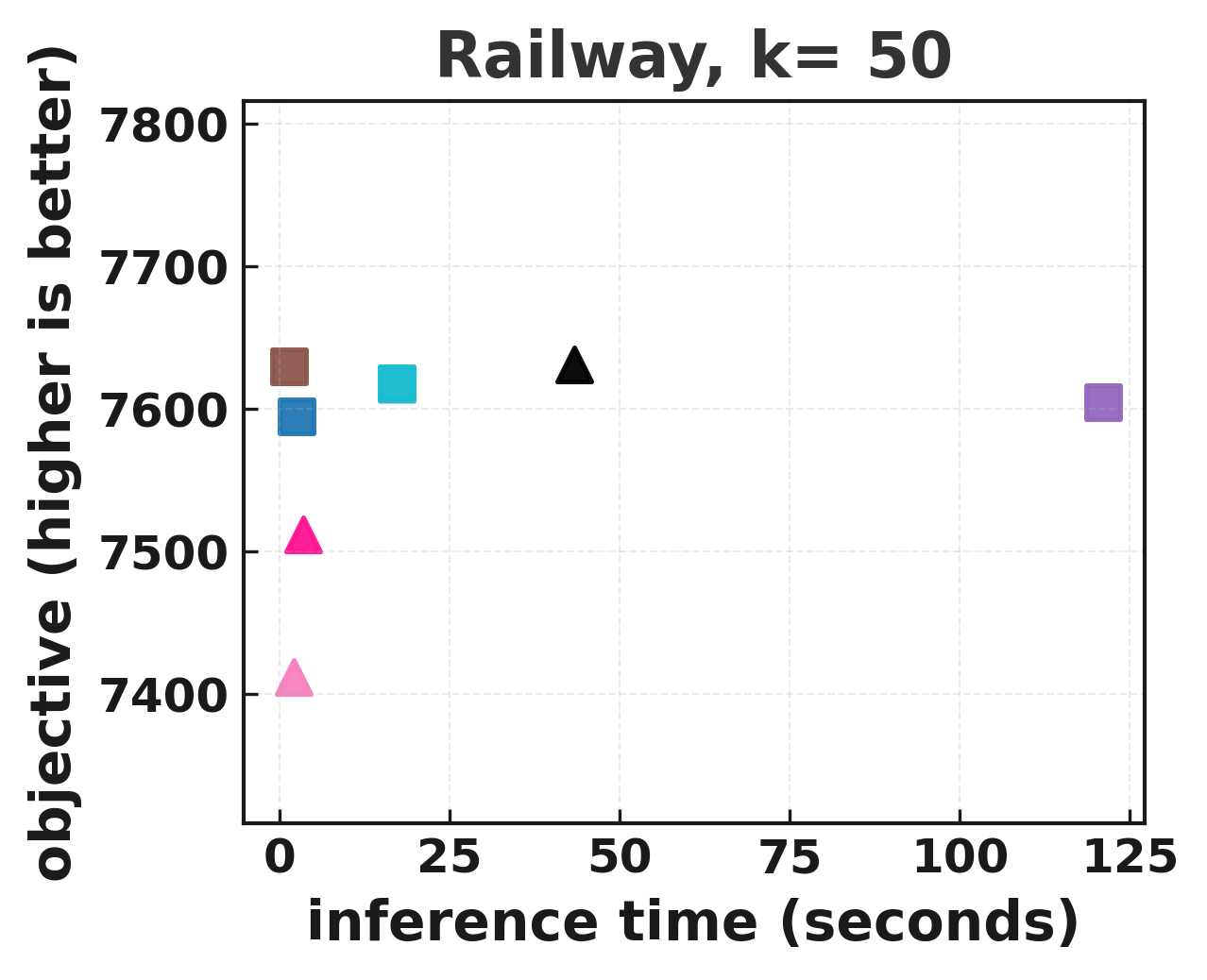} \\
\midrule
\includegraphics{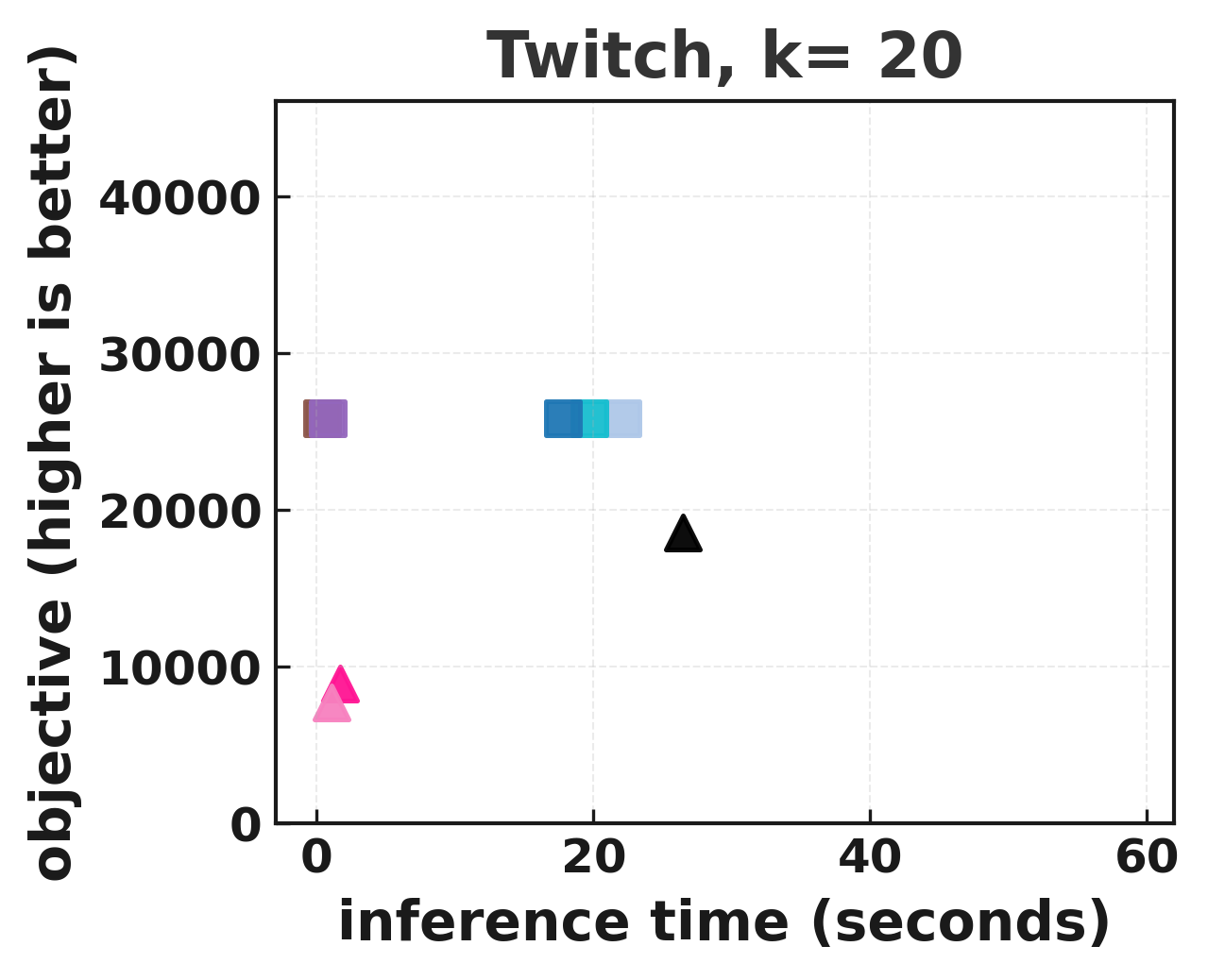} &
\includegraphics{Figures/traditional/Twitch_k50.png}\\
\midrule
\multicolumn{2}{c}{\includegraphics{Figures/legend/common_legend.png}}\\
\end{tabular}
} 
\end{table}
\makeatletter\@ifundefined{FloatBarrier}{}{\FloatBarrier}\makeatother

\begin{table}[!htbp]
\centering
\makeatletter\@ifundefined{captionsetup}{}{\captionsetup{type=figure}}\makeatother
\caption{Performance comparison of our method against all baseline approaches. Metrics used are inference time (lower is better) and objective value (higher is better).}
\label{tab:max-cover-experiments-all}

\resizebox{\textwidth}{!}{
\begin{tabular}{c c}
\includegraphics{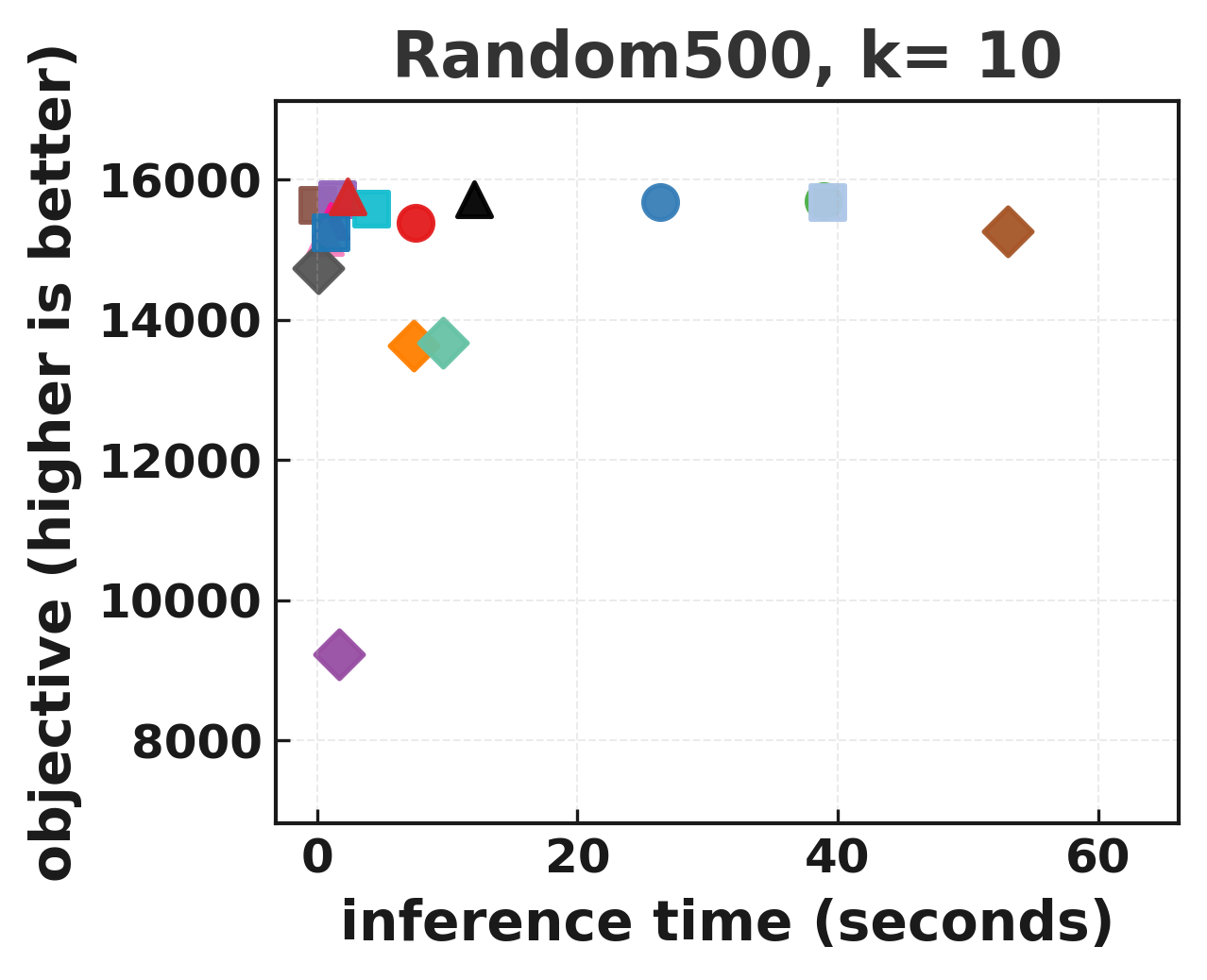} &
\includegraphics{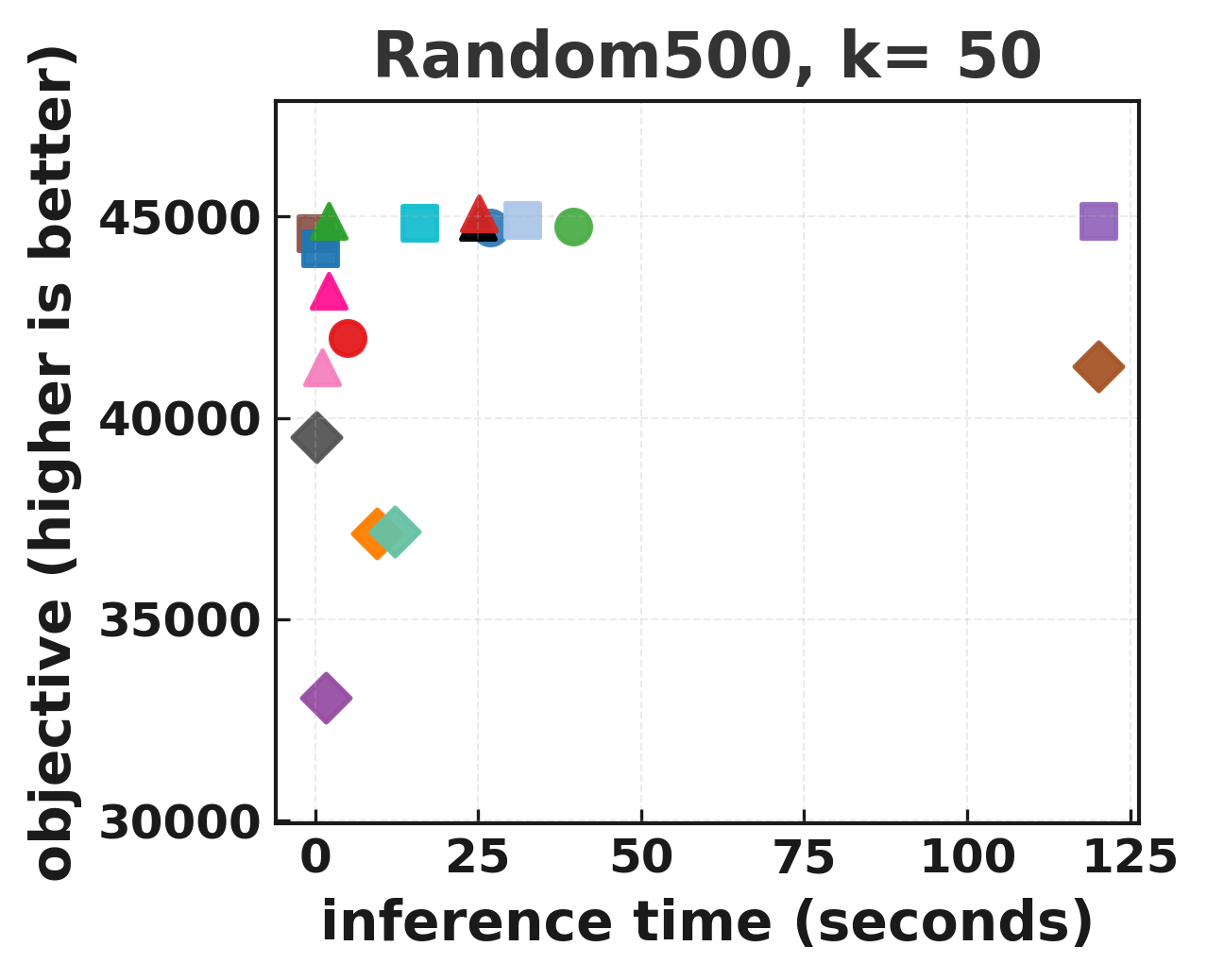}\\
\midrule
\includegraphics{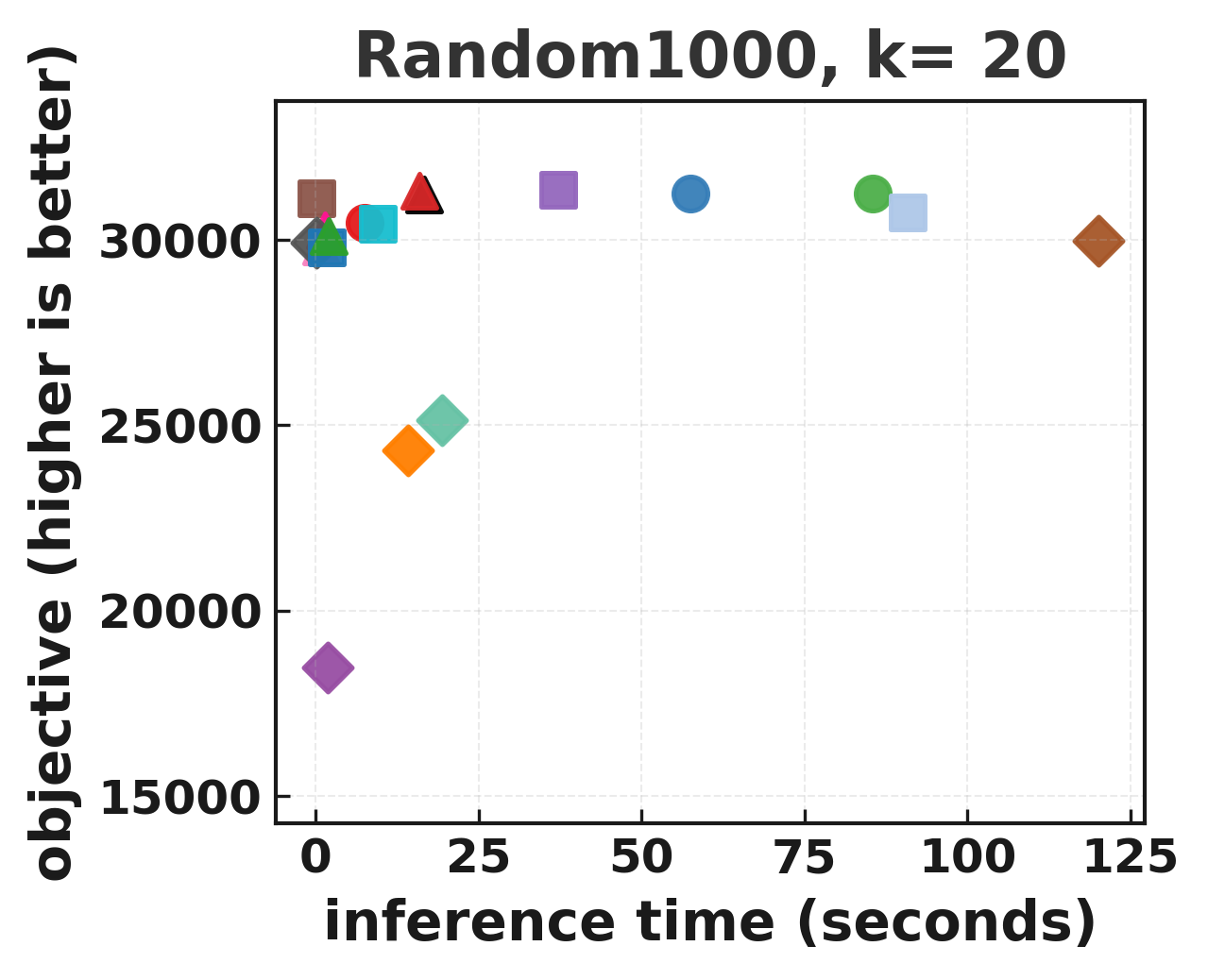} &
\includegraphics{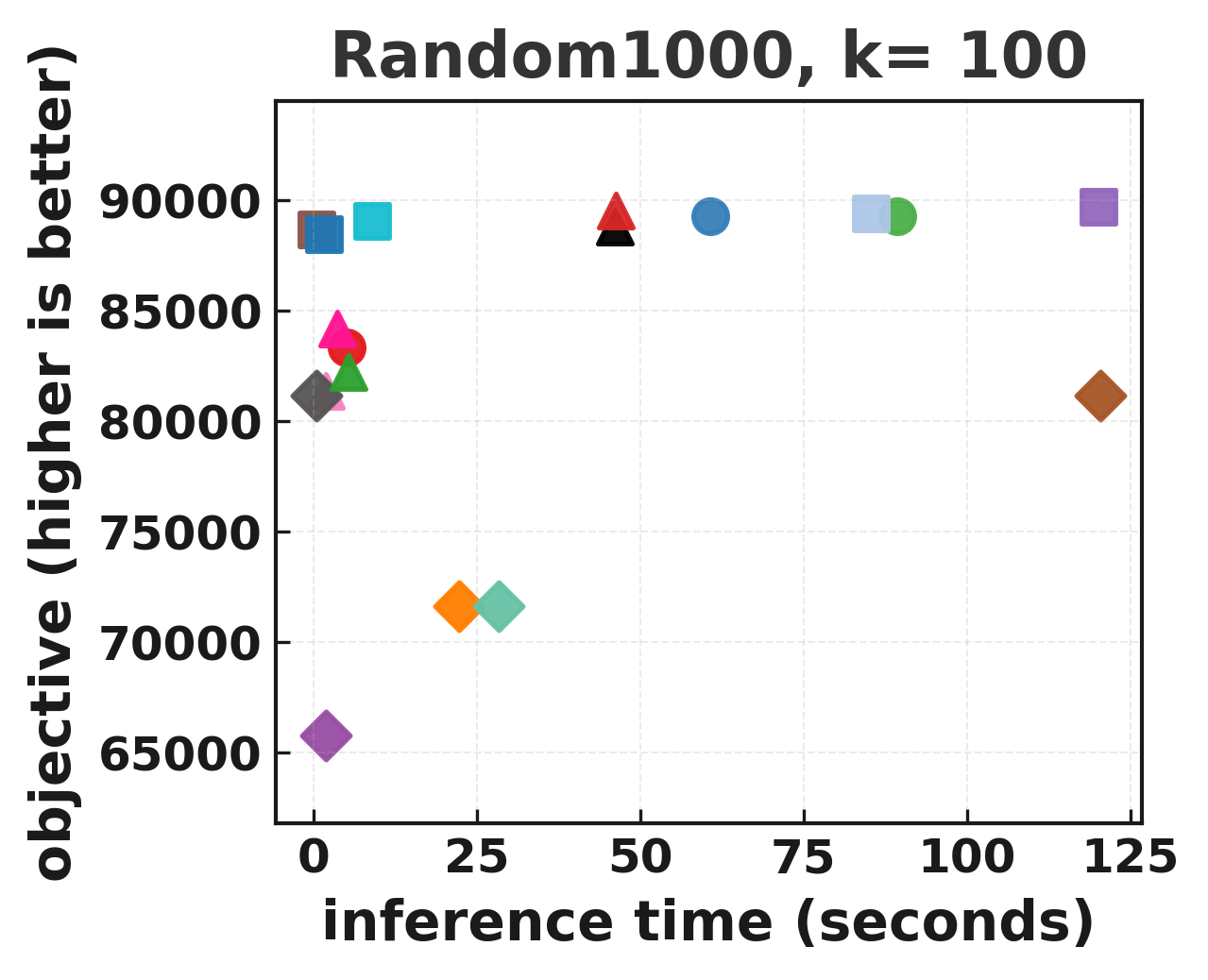} \\
\midrule
\includegraphics{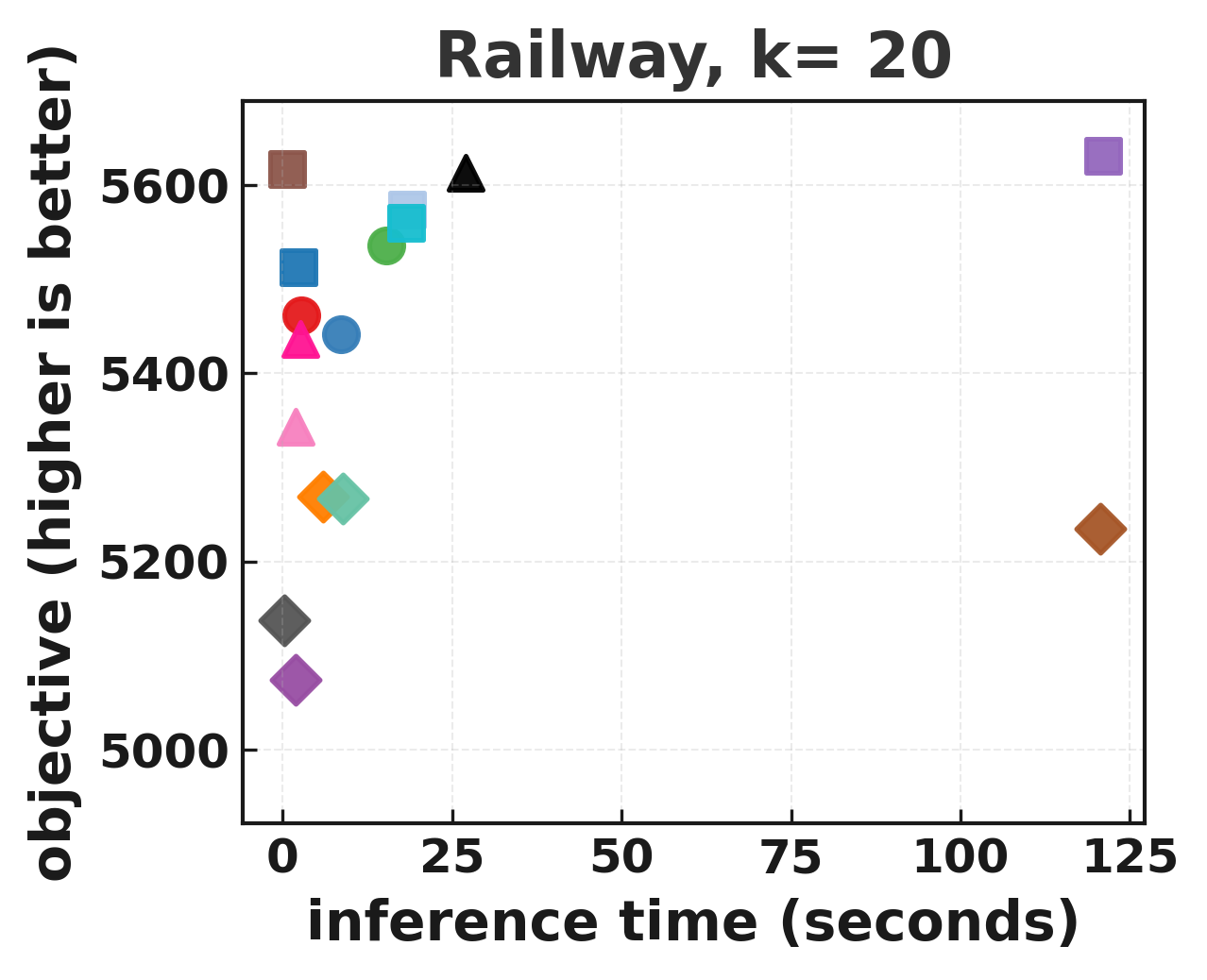} &
\includegraphics{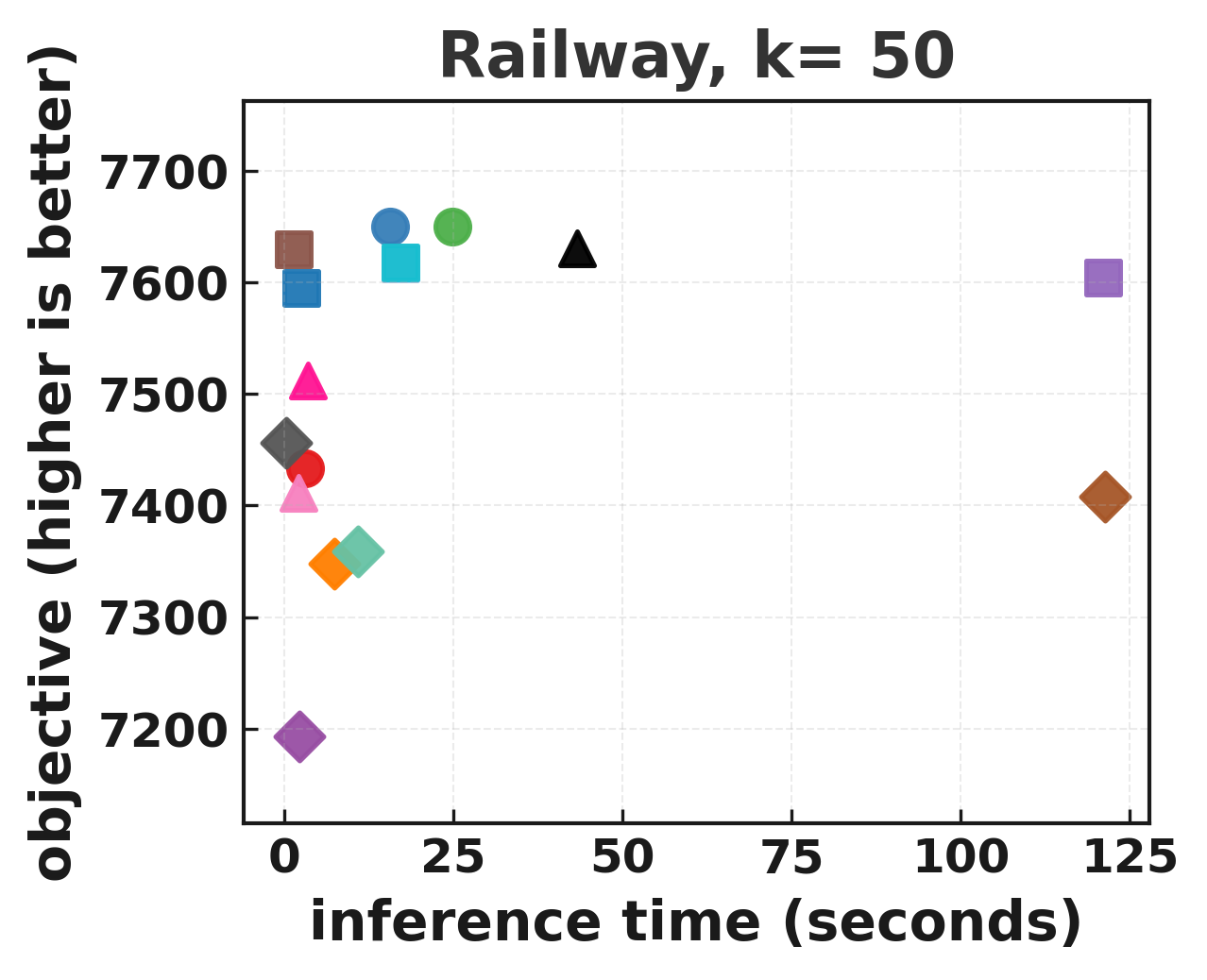} \\
\midrule
\includegraphics{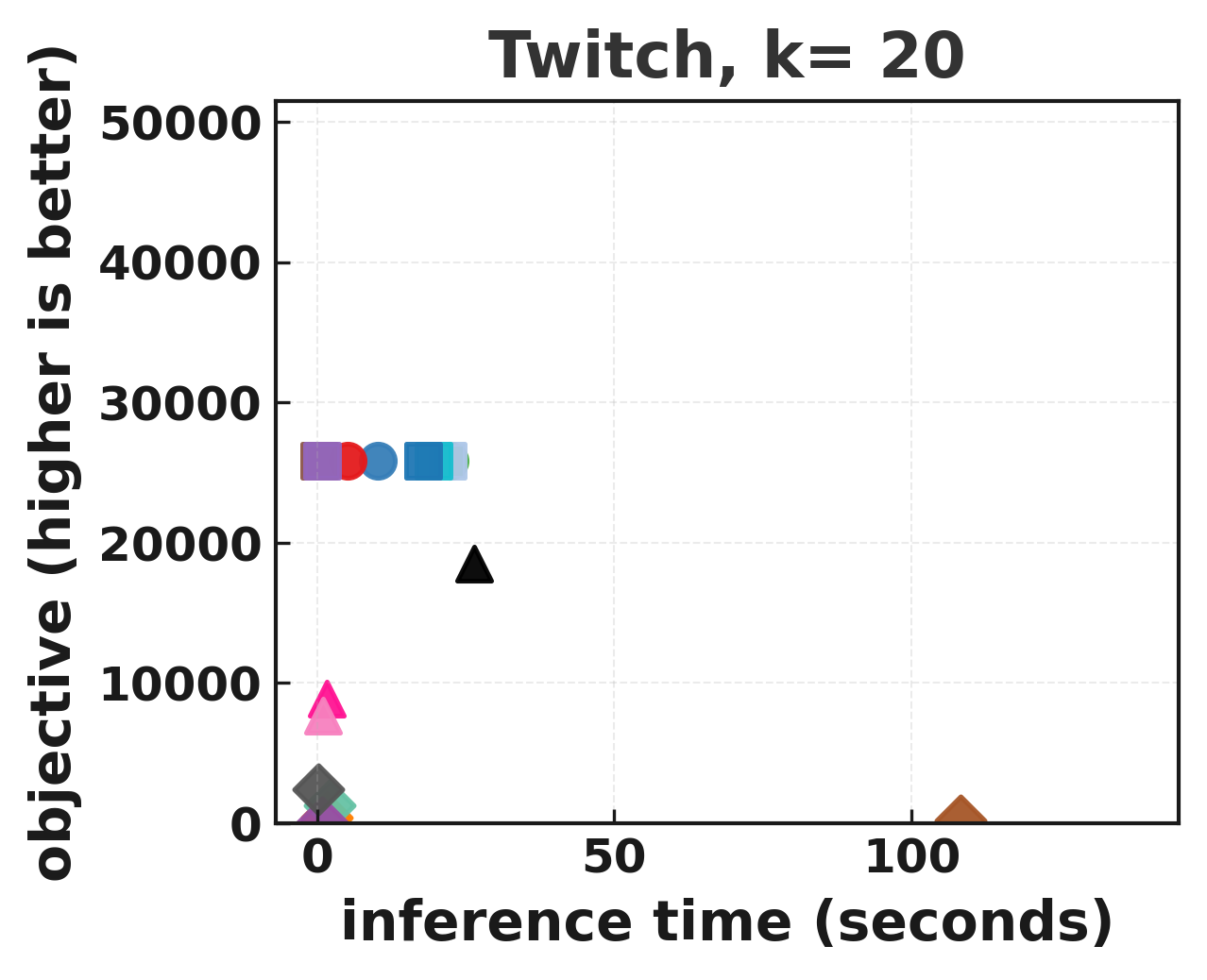} &
\includegraphics{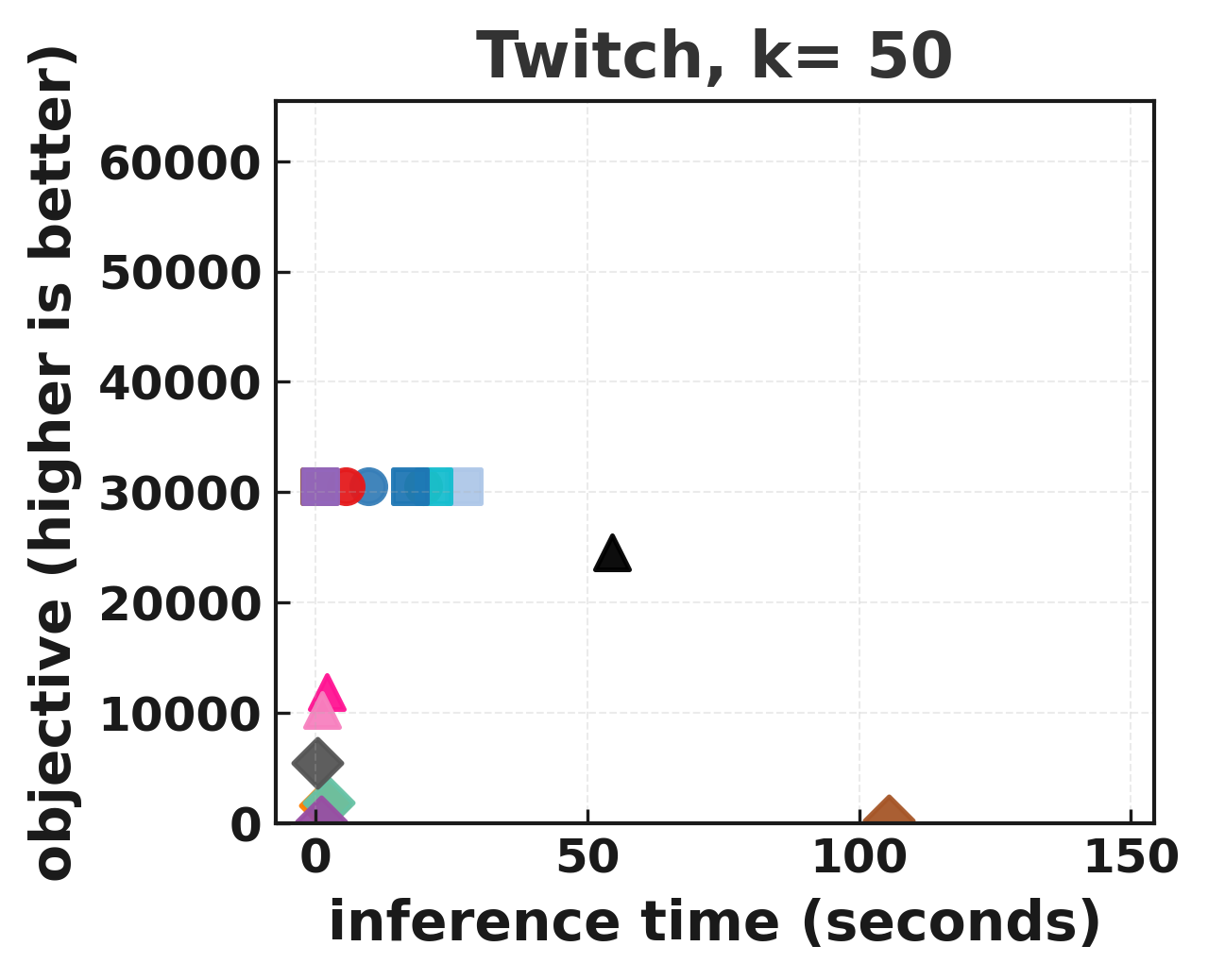}\\
\midrule
\multicolumn{2}{c}{\includegraphics{Figures/legend/common_legend.png}}\\
\end{tabular}
} 
\end{table}
\makeatletter\@ifundefined{FloatBarrier}{}{\FloatBarrier}\makeatother

\begin{table}[t]
\centering
\caption{Maximum Coverage on \textbf{rand500}. Two budgets: $k{=}10$ and $k{=}50$. Time: lower is better; Obj: higher is better.}
\label{tab:mc-rand500}
\resizebox{\textwidth}{!}{
\begin{tabular}{l||cc||cc}
\toprule
\multirow{2}{*}{\diagbox[dir=NW]{\textbf{Method}}{\textbf{Setup}}} &
\multicolumn{2}{c||}{$k=10$} &
\multicolumn{2}{c}{$k=50$} \\
\cmidrule(lr){2-3}\cmidrule(lr){4-5}
 & Time$\downarrow$ & Obj$\uparrow$ & Time$\downarrow$ & Obj$\uparrow$ \\
\midrule\midrule
\multicolumn{5}{l}{\textbf{Non-learning / Traditional}} \\
Random (240s)           & 240.0000s $\pm$ 0.0000s & 13372.76 $\pm$ 308.83 & 240.0000s $\pm$ 0.0000s & 36786.89 $\pm$  655.45 \\
Gurobi (120s)           &   1.5744s $\pm$ 0.5376s & 15714.90 $\pm$ 346.84 & 120.0651s $\pm$ 0.0268s & 44880.59 $\pm$    7.13 \\
Greedy                  &   0.0572s $\pm$ 0.0502s & 15640.99 $\pm$ 360.60 &   0.1210s $\pm$ 0.0563s & 44597.56 $\pm$  848.28 \\
Ucom2-short             &   1.0411s $\pm$ 0.0827s & 15253.35 $\pm$ 370.00 &   0.7392s $\pm$ 0.0150s & 44208.95 $\pm$  768.68 \\
Ucom2-medium            &   4.1891s $\pm$ 0.1150s & 15589.25 $\pm$ 363.43 &  16.0005s $\pm$ 0.0408s & 44852.82 $\pm$  765.11 \\
Ucom2-long              &  39.2018s $\pm$ 0.9474s & 15679.45 $\pm$ 358.41 &  31.7587s $\pm$ 0.0811s & 44906.50 $\pm$  761.75 \\
\midrule\midrule
\multicolumn{5}{l}{\textbf{Learning (no TTO)}} \\
CardNN-noTTO-S          & 1.6880s $\pm$ 0.0791s &  9231.54 $\pm$  827.09 &  1.7211s $\pm$ 0.0661s & 33055.87 $\pm$ 1211.03 \\
CardNN-noTTO-GS         & 7.4571s $\pm$ 0.0300s & 13635.05 $\pm$  303.36 &  9.5240s $\pm$ 0.5770s & 37120.46 $\pm$  610.25 \\
CardNN-noTTO-HGS        & 9.6954s $\pm$ 0.0429s & 13671.99 $\pm$  302.84 & 12.2502s $\pm$ 0.1068s & 37164.81 $\pm$  678.19 \\
EGN-naive (120s)        & 53.0409s $\pm$ 5.3879s & 15262.76 $\pm$  401.10 & 120.1361s $\pm$ 0.0796s & 41272.68 $\pm$  737.36 \\
RL (GNN+Actor-Critic)   & 0.0696s $\pm$ 0.0028s & 14741.26 $\pm$  322.51 &  0.2471s $\pm$ 0.0290s & 39510.96 $\pm$  975.21 \\
\midrule\midrule
\multicolumn{5}{l}{\textbf{Learning (with TTO)}} \\
CardNN-S                &  7.5387s $\pm$ 0.6576s & 15381.60 $\pm$  394.79 &  4.9843s $\pm$ 0.2941s & 41971.22 $\pm$  814.87 \\
CardNN-GS               & 26.3746s $\pm$ 0.0831s & 15683.40 $\pm$  350.79 & 26.8944s $\pm$ 0.1131s & 44724.42 $\pm$  773.70 \\
CardNN-HGS              & 38.8875s $\pm$ 0.1065s & 15685.39 $\pm$  349.83 & 39.6041s $\pm$ 0.1122s & 44745.20 $\pm$  769.93 \\
\midrule\midrule
\multicolumn{5}{l}{\textbf{Ours}} \\
Ours-short              & 0.5786s $\pm$ 0.0094s & 15177.77 $\pm$ 344.68 &  1.1078s $\pm$ 0.0149s & 41247.80 $\pm$  965.48 \\
Ours-medium             & 1.0546s $\pm$ 0.0146s & 15410.23 $\pm$ 353.91 &  2.1212s $\pm$ 0.0096s & 43152.87 $\pm$ 1004.95 \\
Ours-long               & 12.0878s $\pm$ 0.1365s & 15714.63 $\pm$ 354.22 & 24.9885s $\pm$ 0.3543s & 44866.84 $\pm$  876.78 \\
short+Gurobi (medium)   & N/A & N/A & 2.0622s $\pm$ 0.1122s & 44894.62 $\pm$ 951.89 \\
short+Gurobi (long)  & 2.3185s $\pm$ 0.5136s & 15758.62 $\pm$ 343.40 & 25.0653s $\pm$ 0.0609s & 45090.41 $\pm$ 887.32 \\
\bottomrule
\end{tabular}
}
\end{table}

\begin{table}[t]
\centering
\caption{Maximum Coverage on \textbf{rand1000}. Two budgets: $k{=}20$ and $k{=}100$. Time: lower is better; Obj: higher is better.}
\label{tab:mc-rand1000}
\resizebox{\textwidth}{!}{
\begin{tabular}{l||cc||cc}
\toprule
\multirow{2}{*}{\diagbox[dir=NW]{\textbf{Method}}{\textbf{Setup}}} &
\multicolumn{2}{c||}{$k=20$} &
\multicolumn{2}{c}{$k=100$} \\
\cmidrule(lr){2-3}\cmidrule(lr){4-5}
 & Time$\downarrow$ & Obj$\uparrow$ & Time$\downarrow$ & Obj$\uparrow$ \\
\midrule\midrule
\multicolumn{5}{l}{\textbf{Non-learning / Traditional}} \\
Random (240s)           & 240.0000s $\pm$ 0.0000s & 24133.50 $\pm$  390.27 & 240.0000s $\pm$ 0.0000s & 70527.31 $\pm$ 1051.87 \\
Gurobi (120s)           &  37.2985s $\pm$ 30.0300s & 31347.62 $\pm$  509.75 & 120.1395s $\pm$ 0.0526s & 89696.83 $\pm$ 1231.76 \\
Greedy                  &   0.1905s $\pm$ 0.0998s & 31105.89 $\pm$  506.23 &   0.4609s $\pm$ 0.1074s & 88685.40 $\pm$ 1225.67 \\
Ucom2-short             &   1.7342s $\pm$ 0.0677s & 29791.66 $\pm$  678.02 &   1.6216s $\pm$ 0.0644s & 88472.61 $\pm$ 1261.36 \\
Ucom2-medium            &   9.5523s $\pm$ 0.0620s & 30420.69 $\pm$  552.66 &   8.9956s $\pm$ 0.0890s & 89072.92 $\pm$ 1248.58 \\
Ucom2-long              &  90.7475s $\pm$ 0.3342s & 30744.61 $\pm$  512.61 &  85.2663s $\pm$ 0.6164s & 89408.28 $\pm$ 1232.74 \\
\midrule\midrule
\multicolumn{5}{l}{\textbf{Learning (no TTO)}} \\
CardNN-noTTO-S          &  1.8693s $\pm$ 0.0018s & 18458.92 $\pm$ 1283.61 &  1.8810s $\pm$ 0.0453s & 65793.40 $\pm$ 1478.50 \\
CardNN-noTTO-GS         & 14.1629s $\pm$ 0.0301s & 24309.37 $\pm$  465.91 & 22.3352s $\pm$ 0.0942s & 71620.38 $\pm$ 1026.81 \\
CardNN-noTTO-HGS        & 19.3888s $\pm$ 0.0532s & 25135.52 $\pm$  436.21 & 28.3368s $\pm$ 0.1218s & 71604.61 $\pm$ 1010.17 \\
EGN-naive (120s)        &120.0105s $\pm$ 0.3980s & 29968.04 $\pm$  563.02 &120.3565s $\pm$ 0.1600s & 81166.12 $\pm$ 1391.02 \\
RL (GNN+Actor-Critic)   &  0.1021s $\pm$ 0.0029s & 29912.14 $\pm$  571.12 &  0.4652s $\pm$ 0.0029s & 81158.54 $\pm$ 1215.23 \\
\midrule\midrule
\multicolumn{5}{l}{\textbf{Learning (with TTO)}} \\
CardNN-S                &  7.6054s $\pm$ 0.6608s & 30461.73 $\pm$  590.50 &  5.0909s $\pm$ 0.3016s & 83319.37 $\pm$ 1324.81 \\
CardNN-GS               & 57.5445s $\pm$ 0.0670s & 31250.56 $\pm$  528.72 & 60.7239s $\pm$ 0.0665s & 89269.41 $\pm$ 1253.68 \\
CardNN-HGS              & 85.4505s $\pm$ 0.0952s & 31250.47 $\pm$  526.03 & 89.3385s $\pm$ 0.1235s & 89280.17 $\pm$ 1261.34 \\
\midrule\midrule
\multicolumn{5}{l}{\textbf{Ours}} \\
Ours-short              &  0.8248s $\pm$ 0.0312s & 29810.12 $\pm$  586.05 &  1.9076s $\pm$ 0.0284s & 81357.29 $\pm$ 1277.84 \\
Ours-medium             &  1.4175s $\pm$ 0.0633s & 30257.05 $\pm$  606.07 &  3.6669s $\pm$ 0.0582s & 84186.79 $\pm$ 1336.48 \\
Ours-long               & 16.6007s $\pm$ 0.4632s & 31230.26 $\pm$  476.25 & 46.1200s $\pm$ 0.6998s & 88796.03 $\pm$ 1327.83 \\

short+Gurobi (medium)   & 2.0868s $\pm$ 0.1001s & 30098.64 $\pm$ 767.38 & 5.2979s $\pm$ 0.1202s & 82200.65 $\pm$ 2609.87 \\
short+Gurobi (long)     & 15.8728s $\pm$ 2.8134s & 31332.18 $\pm$ 471.59 & 46.2701s $\pm$ 0.3331s & 89509.90 $\pm$ 1316.43 \\
\bottomrule
\end{tabular}
}
\end{table}

\begin{table}[t]
\centering
\caption{Maximum Coverage on \textbf{railway}. Two budgets: $k{=}20$ and $k{=}50$. Time: lower is better; Obj: higher is better.}
\label{tab:mc-railway}
\resizebox{\textwidth}{!}{
\begin{tabular}{l||cc||cc}
\toprule
\multirow{2}{*}{\diagbox[dir=NW]{\textbf{Method}}{\textbf{Setup}}} &
\multicolumn{2}{c||}{$k=20$} &
\multicolumn{2}{c}{$k=50$} \\
\cmidrule(lr){2-3}\cmidrule(lr){4-5}
 & Time$\downarrow$ & Obj$\uparrow$ & Time$\downarrow$ & Obj$\uparrow$ \\
\midrule\midrule
\multicolumn{5}{l}{\textbf{Non-learning / Traditional}} \\
Random (240s)           & 240.0000s $\pm$ 0.0000s & 5291.67 $\pm$  73.92 & 240.0000s $\pm$ 0.0000s & 7367.00 $\pm$  76.97 \\
Gurobi (120s)           & 121.0590s $\pm$ 0.0318s & 5631.67 $\pm$  28.77 & 121.0329s $\pm$ 0.0613s & 7604.67 $\pm$  62.38 \\
Greedy                  &   0.7271s $\pm$ 0.0251s & 5617.00 $\pm$  49.00 &   1.3196s $\pm$ 0.5774s & 7630.00 $\pm$  72.19 \\
Ucom2-short             &   2.2988s $\pm$ 0.2026s & 5512.67 $\pm$  56.89 &   2.4537s $\pm$ 0.3920s & 7594.67 $\pm$  68.72 \\
Ucom2-medium            &  18.2307s $\pm$ 2.4105s & 5560.33 $\pm$  44.56 &  17.1635s $\pm$ 2.2504s & 7618.00 $\pm$  79.76 \\
Ucom2-long              &  18.3035s $\pm$ 2.4544s & 5574.33 $\pm$  58.79 &  17.1988s $\pm$ 2.2001s & 7617.00 $\pm$  81.19 \\
\midrule\midrule
\multicolumn{5}{l}{\textbf{Learning (no TTO)}} \\
CardNN-noTTO-S          &  1.9287s $\pm$ 0.2161s & 5074.33 $\pm$  59.41 &  2.1892s $\pm$ 0.2505s & 7193.00 $\pm$  80.72 \\
CardNN-noTTO-GS         &  5.9863s $\pm$ 0.5143s & 5269.00 $\pm$  68.83 &  7.3275s $\pm$ 0.3261s & 7348.33 $\pm$  80.31 \\
CardNN-noTTO-HGS        &  8.8791s $\pm$ 0.7784s & 5266.67 $\pm$  76.01 & 10.8211s $\pm$ 0.6323s & 7359.00 $\pm$  70.89 \\
EGN-naive (120s)        &120.6763s $\pm$ 0.2931s & 5234.67 $\pm$  52.47 & 121.3351s $\pm$ 0.9133s & 7408.33 $\pm$  72.82 \\
RL (GNN+Actor-Critic)   &  0.2191s $\pm$ 0.0030s & 5137.00 $\pm$  52.01 &  0.2191s $\pm$ 0.0031s & 7456.67 $\pm$  57.14 \\
\midrule\midrule
\multicolumn{5}{l}{\textbf{Learning (with TTO)}} \\
CardNN-S                &  2.7580s $\pm$ 0.2705s & 5462.00 $\pm$  56.00 &  3.0035s $\pm$ 0.2782s & 7433.33 $\pm$  63.34 \\
CardNN-GS               &  8.6626s $\pm$ 0.5672s & 5441.67 $\pm$  58.77 & 15.6866s $\pm$ 1.0987s & 7650.00 $\pm$  73.33 \\
CardNN-HGS              & 15.2983s $\pm$ 0.5441s & 5535.67 $\pm$  64.52 & 24.7941s $\pm$ 0.9907s & 7650.00 $\pm$  88.71 \\

\midrule\midrule
\multicolumn{5}{l}{\textbf{Ours}} \\
Ours-short              &  1.8559s $\pm$ 0.0458s & 5343.00 $\pm$  89.17 &  2.0838s $\pm$ 0.0142s & 7411.67 $\pm$  55.08 \\
Ours-medium             &  2.5628s $\pm$ 0.0418s & 5437.00 $\pm$  66.84 &  3.4385s $\pm$ 0.0532s & 7512.00 $\pm$  63.85 \\
Ours-long               & 27.0344s $\pm$ 0.9617s & 5613.33 $\pm$  55.58 & 43.2777s $\pm$ 1.3728s & 7631.00 $\pm$  75.19 \\
\bottomrule
\end{tabular}
}
\end{table}

\begin{table}[t]
\centering
\caption{Maximum Coverage on \textbf{twitch}. Two budgets: $k{=}20$ and $k{=}50$. Time: lower is better; Obj: higher is better.}
\label{tab:mc-twitch}
\resizebox{\textwidth}{!}{
\begin{tabular}{l||cc||cc}
\toprule
\multirow{2}{*}{\diagbox[dir=NW]{\textbf{Method}}{\textbf{Setup}}} &
\multicolumn{2}{c||}{$k=20$} &
\multicolumn{2}{c}{$k=50$} \\
\cmidrule(lr){2-3}\cmidrule(lr){4-5}
 & Time$\downarrow$ & Obj$\uparrow$ & Time$\downarrow$ & Obj$\uparrow$ \\
\midrule\midrule
 \multicolumn{5}{l}{\textbf{Non-learning / Traditional}} \\

Random (240s)           & 240.0000s $\pm$ 0.0000s & 12889.17 $\pm$ 5636.10 & 240.0000s $\pm$ 0.0000s & 16050.50 $\pm$ 6715.32 \\
Gurobi (120s)           &   0.8221s $\pm$ 0.5700s & 25864.00 $\pm$ 10223.00 &   0.7961s $\pm$ 0.6577s & 30560.33 $\pm$ 11922.45 \\
Greedy                  &   0.3970s $\pm$ 0.2536s & 25855.50 $\pm$ 11207.33 &   0.6553s $\pm$ 0.3883s & 30542.33 $\pm$ 13055.67 \\
Ucom2-short             &  17.8031s $\pm$ 18.4783s & 25833.00 $\pm$ 11212.59 & 17.3739s $\pm$ 18.1940s & 30526.00 $\pm$ 13043.86 \\
Ucom2-medium            &  19.6806s $\pm$ 19.8771s & 25858.17 $\pm$ 11206.89 & 21.6114s $\pm$ 21.3478s & 30544.17 $\pm$ 13052.91 \\
Ucom2-long              &  22.0589s $\pm$ 21.6886s & 25858.17 $\pm$ 11206.89 & 27.2170s $\pm$ 25.6604s & 30544.17 $\pm$ 13052.91 \\
\midrule\midrule
\multicolumn{5}{l}{\textbf{Learning (no TTO)}} \\
CardNN-noTTO-S          &  0.8636s $\pm$ 0.2075s &   77.00 $\pm$  51.47 &  0.9635s $\pm$ 0.4585s &  170.50 $\pm$  158.15 \\
CardNN-noTTO-GS         &  1.5749s $\pm$ 0.3842s &  376.00 $\pm$ 194.67 &  1.6334s $\pm$ 0.6782s & 1574.33 $\pm$  631.04 \\
CardNN-noTTO-HGS        &  2.1831s $\pm$ 0.7328s & 1242.33 $\pm$ 499.23 &  2.3674s $\pm$ 0.7858s & 1898.00 $\pm$  506.67 \\
EGN-naive (120s)        &108.3030s $\pm$ 24.3569s &  152.67 $\pm$  12.22 & 105.4605s $\pm$ 20.7886s &  247.17 $\pm$   31.88 \\
RL (GNN+Actor-Critic)   &  0.1694s $\pm$ 0.0028s & 2379.50 $\pm$ 1471.50 &  0.2905s $\pm$ 0.0030s & 5435.50 $\pm$ 3318.67 \\
\midrule\midrule
\multicolumn{5}{l}{\textbf{Learning (with TTO)}} \\
CardNN-S                &  5.1258s $\pm$ 0.6503s & 25863.83 $\pm$ 11198.96 &  5.5767s $\pm$ 0.5135s & 30556.00 $\pm$ 13063.12 \\
CardNN-GS               & 10.1648s $\pm$ 1.1928s & 25864.00 $\pm$ 11198.73 &  9.5794s $\pm$ 1.5230s & 30560.67 $\pm$ 13061.60 \\
CardNN-HGS              & 22.3602s $\pm$ 1.2962s & 25864.00 $\pm$ 11198.73 & 19.7813s $\pm$ 3.4102s & 30560.83 $\pm$ 13061.35 \\
\midrule\midrule

\multicolumn{5}{l}{\textbf{Ours}} \\
Ours-short              &  1.0975s $\pm$ 0.7446s &  7689.00 $\pm$  5712.56 &  1.1485s $\pm$ 0.7596s & 10227.00 $\pm$ 5537.29 \\
Ours-medium             &  1.6693s $\pm$ 0.9268s &  8887.83 $\pm$  5087.20 &  1.9401s $\pm$ 1.0147s & 11858.50 $\pm$  4793.61 \\
Ours-long               & 26.4401s $\pm$ 15.9632s & 18526.83 $\pm$  8671.00 & 54.4778s $\pm$ 32.5091s & 24509.83 $\pm$ 11171.33 \\
\bottomrule
\end{tabular}
}
\end{table}


%

\clearpage
\newpage

\section{Max-k-Cut Ablation}
\label{sec:max-cut}
\paragraph{Datasets.} We conduct experiments on two datasets. The IMDB-BINARY dataset \cite{yanardag2015deep} consists of $1000$ graphs. The Erdős–Rényi dataset consists of $1000$ synthetic graphs generated under the $G(n,p)$ model. For each graph, the number of nodes $n$ is drawn uniformly from $\{50,\dots,100\}$ and each candidate edge is included independently with probability $p=0.15$. Our experiments use the following split for both datasets. $60\%$ of the graphs are allocated for training, $20\%$ for validation, and the final $20\%$ for testing. Within each split, any graphs with fewer than $k$ nodes are excluded. We compare in two settings, one of small and one of large $k$, where $k$ is selected as a fraction of the average number of nodes in the dataset. For small $k$ we pick the fraction to be 0.25 of the node average and for the large $k$ we pick 0.75.

\textbf{Random sampling + decomp.} The purpose of this baseline is to highlight the contribution of optimization and the neural network in the performance.  In this baseline, we randomly sample a single point in $\Delta_{n,k}$, decompose it, and then report the value of the best performing set in the decomposition.

\paragraph{Direct optimization + decomp.} In the direct optimization approach, we optimize a vector $\mb{x}\in\mathbb{R}^n$ using the perturbation method described in \cref{sec:decompositiondesign}, i.e., we treat it as an additive perturbation on the  vector $(k/n, k/n, \dots, k/n)$  in $\Delta_{n,k}$. We optimize the perturbation $\mb{x}$  by decomposing the perturbed point and optimizing the value of the extension with gradient descent  (Adam \cite{kingma2014adam}).  For IMDB-BINARY, we set the learning rate to $0.015$, and for Erdős–Rényi, the learning rate was set to $0.012$. Each optimization run consists of $150$ update steps. 

\paragraph{NN + Optimization + Decomp.} The neural network approach proceeds in a similar fashion as the direct optimization one.  Here, we use a neural network to generate the perturbation vector $\mb{x}\in\mathbb{R}^n$ for the perturbation approach described in \cref{sec:decompositiondesign}. The perturbed interior point is decomposed and we updated the parameters of the neural network by gradient descent on the extension. Here, the neural network is an eight-layer GatedGraphConv network \cite{li2015gated} followed by two linear layers. Node features include random-walk positional encodings generated with AddRandomWalkPE \cite{dwivedi2021graph} with walk length $10$. The neural network in this case is optimized directly on the test data.
The results are presented in the table below.
\paragraph{Standard SSL.} In the standard SSL baseline, we train the same architecture on separate training data and do model selection using a validation set. Then we report the performance of the selected model (without any finetuning) on the test set.
We observe that while the trained SSL baseline is not able to match the performance of a neural net directly optimized on the test set, it consistently produces performance that is competitive with direct optimization and greedy for large $k$.
\begin{table}[h]
\centering 
\small
\caption{Test‐set performance (mean \(\pm\) std) on IMDB‐BINARY graphs (avg.\ 20 nodes) and Erdős–Rényi graphs (avg.\ 75 nodes, edge density 0.15).}
\label{tab:kset_results}
\begin{tabular}{l
  cc
  cc}
\toprule
\multirow{2}{*}{Method} 
  & \multicolumn{2}{c}{IMDB‐BINARY}
  & \multicolumn{2}{c}{Erdős–Rényi}\\
  & \(k=5\) & \(k=15\) & \(k=15\) & \(k=60\) \\
\midrule
Greedy Algorithm
  & 1.000 \(\pm\) 0.002 
  & 0.801 \(\pm\) 0.195 
  & 0.985 \(\pm\) 0.015 
  & 0.900 \(\pm\) 0.047 \\

Random Sampling + Decomp 
  & 0.881 \(\pm\) 0.140 
  & 0.850 \(\pm\) 0.179 
  & 0.753 \(\pm\) 0.038 
  & 0.761 \(\pm\) 0.041 \\

Direct optimization + Decomp
  & 0.960 \(\pm\) 0.064 
  & 0.922 \(\pm\) 0.158 
  & 0.833 \(\pm\) 0.025 
  & 0.844 \(\pm\) 0.043 \\

NN + Optimization + Decomp& 0.971 \(\pm\) 0.041 
  & 0.932 \(\pm\) 0.089 
  & 0.910 \(\pm\) 0.045 
  & 0.902 \(\pm\) 0.041 \\

\midrule
Standard SSL + Decomp
  & 0.956 \(\pm\) 0.049 
  & 0.905 \(\pm\) 0.117 
  & 0.899 \(\pm\) 0.035 
  & 0.889 \(\pm\) 0.044 \\
\bottomrule
\end{tabular}
\end{table}

\clearpage
\newpage
\section*{NeurIPS Paper Checklist}


\begin{enumerate}

\item {\bf Claims}
    \item[] Question: Do the main claims made in the abstract and introduction accurately reflect the paper's contributions and scope?
    \item[] Answer: \answerYes{} 
    \item[] Justification: We claim to provide an end-to-end pipeline for neural combinatorial optimization with constraints (section 4), which has applications to several fundamental constraint classes (section 4.2.1 and 4.2.2), all of which are clearly given in the paper. In section 5, we support our claims with experimental results.
    \item[] Guidelines:
    \begin{itemize}
        \item The answer NA means that the abstract and introduction do not include the claims made in the paper.
        \item The abstract and/or introduction should clearly state the claims made, including the contributions made in the paper and important assumptions and limitations. A No or NA answer to this question will not be perceived well by the reviewers. 
        \item The claims made should match theoretical and experimental results, and reflect how much the results can be expected to generalize to other settings. 
        \item It is fine to include aspirational goals as motivation as long as it is clear that these goals are not attained by the paper. 
    \end{itemize}

\item {\bf Limitations}
    \item[] Question: Does the paper discuss the limitations of the work performed by the authors?
    \item[] Answer: \answerYes{} 
    \item[] Justification: The limitations are discussed in the experiment section and also in the conclusion. Also, when we discuss the general pipeline we do not claim it covers all possible constrained CO problems.
    \item[] Guidelines:
    \begin{itemize}
        \item The answer NA means that the paper has no limitation while the answer No means that the paper has limitations, but those are not discussed in the paper. 
        \item The authors are encouraged to create a separate "Limitations" section in their paper.
        \item The paper should point out any strong assumptions and how robust the results are to violations of these assumptions (e.g., independence assumptions, noiseless settings, model well-specification, asymptotic approximations only holding locally). The authors should reflect on how these assumptions might be violated in practice and what the implications would be.
        \item The authors should reflect on the scope of the claims made, e.g., if the approach was only tested on a few datasets or with a few runs. In general, empirical results often depend on implicit assumptions, which should be articulated.
        \item The authors should reflect on the factors that influence the performance of the approach. For example, a facial recognition algorithm may perform poorly when image resolution is low or images are taken in low lighting. Or a speech-to-text system might not be used reliably to provide closed captions for online lectures because it fails to handle technical jargon.
        \item The authors should discuss the computational efficiency of the proposed algorithms and how they scale with dataset size.
        \item If applicable, the authors should discuss possible limitations of their approach to address problems of privacy and fairness.
        \item While the authors might fear that complete honesty about limitations might be used by reviewers as grounds for rejection, a worse outcome might be that reviewers discover limitations that aren't acknowledged in the paper. The authors should use their best judgment and recognize that individual actions in favor of transparency play an important role in developing norms that preserve the integrity of the community. Reviewers will be specifically instructed to not penalize honesty concerning limitations.
    \end{itemize}

\item {\bf Theory assumptions and proofs}
    \item[] Question: For each theoretical result, does the paper provide the full set of assumptions and a complete (and correct) proof?
    \item[] Answer: \answerYes{} 
    \item[] Justification: Assumptions are stated in theorem statements and all proofs are provided in the appendix.
    \item[] Guidelines:
    \begin{itemize}
        \item The answer NA means that the paper does not include theoretical results. 
        \item All the theorems, formulas, and proofs in the paper should be numbered and cross-referenced.
        \item All assumptions should be clearly stated or referenced in the statement of any theorems.
        \item The proofs can either appear in the main paper or the supplemental material, but if they appear in the supplemental material, the authors are encouraged to provide a short proof sketch to provide intuition. 
        \item Inversely, any informal proof provided in the core of the paper should be complemented by formal proofs provided in appendix or supplemental material.
        \item Theorems and Lemmas that the proof relies upon should be properly referenced. 
    \end{itemize}

    \item {\bf Experimental result reproducibility}
    \item[] Question: Does the paper fully disclose all the information needed to reproduce the main experimental results of the paper to the extent that it affects the main claims and/or conclusions of the paper (regardless of whether the code and data are provided or not)?
    \item[] Answer: \answerYes{} 
    \item[] Justification: All the details of the experiments such as training and dataset details are provided either in the main body of the paper or appendix. 
    \item[] Guidelines:
    \begin{itemize}
        \item The answer NA means that the paper does not include experiments.
        \item If the paper includes experiments, a No answer to this question will not be perceived well by the reviewers: Making the paper reproducible is important, regardless of whether the code and data are provided or not.
        \item If the contribution is a dataset and/or model, the authors should describe the steps taken to make their results reproducible or verifiable. 
        \item Depending on the contribution, reproducibility can be accomplished in various ways. For example, if the contribution is a novel architecture, describing the architecture fully might suffice, or if the contribution is a specific model and empirical evaluation, it may be necessary to either make it possible for others to replicate the model with the same dataset, or provide access to the model. In general. releasing code and data is often one good way to accomplish this, but reproducibility can also be provided via detailed instructions for how to replicate the results, access to a hosted model (e.g., in the case of a large language model), releasing of a model checkpoint, or other means that are appropriate to the research performed.
        \item While NeurIPS does not require releasing code, the conference does require all submissions to provide some reasonable avenue for reproducibility, which may depend on the nature of the contribution. For example
        \begin{enumerate}
            \item If the contribution is primarily a new algorithm, the paper should make it clear how to reproduce that algorithm.
            \item If the contribution is primarily a new model architecture, the paper should describe the architecture clearly and fully.
            \item If the contribution is a new model (e.g., a large language model), then there should either be a way to access this model for reproducing the results or a way to reproduce the model (e.g., with an open-source dataset or instructions for how to construct the dataset).
            \item We recognize that reproducibility may be tricky in some cases, in which case authors are welcome to describe the particular way they provide for reproducibility. In the case of closed-source models, it may be that access to the model is limited in some way (e.g., to registered users), but it should be possible for other researchers to have some path to reproducing or verifying the results.
        \end{enumerate}
    \end{itemize}

\item {\bf Open access to data and code}
    \item[] Question: Does the paper provide open access to the data and code, with sufficient instructions to faithfully reproduce the main experimental results, as described in supplemental material?
    \item[] Answer: \answerYes{} 
    \item[] Justification: All the datasets are public. Our code is available \url{https://anonymous.4open.science/r/Neural_Combinatorial_Optimization_with_Constraints-3485/README.md} .
    \item[] Guidelines:
    \begin{itemize}
        \item The answer NA means that paper does not include experiments requiring code.
        \item Please see the NeurIPS code and data submission guidelines (\url{https://nips.cc/public/guides/CodeSubmissionPolicy}) for more details.
        \item While we encourage the release of code and data, we understand that this might not be possible, so “No” is an acceptable answer. Papers cannot be rejected simply for not including code, unless this is central to the contribution (e.g., for a new open-source benchmark).
        \item The instructions should contain the exact command and environment needed to run to reproduce the results. See the NeurIPS code and data submission guidelines (\url{https://nips.cc/public/guides/CodeSubmissionPolicy}) for more details.
        \item The authors should provide instructions on data access and preparation, including how to access the raw data, preprocessed data, intermediate data, and generated data, etc.
        \item The authors should provide scripts to reproduce all experimental results for the new proposed method and baselines. If only a subset of experiments are reproducible, they should state which ones are omitted from the script and why.
        \item At submission time, to preserve anonymity, the authors should release anonymized versions (if applicable).
        \item Providing as much information as possible in supplemental material (appended to the paper) is recommended, but including URLs to data and code is permitted.
    \end{itemize}

\item {\bf Experimental setting/details}
    \item[] Question: Does the paper specify all the training and test details (e.g., data splits, hyperparameters, how they were chosen, type of optimizer, etc.) necessary to understand the results?
    \item[] Answer: \answerYes{} 
    \item[] Justification: All the details of the experiments such as training and dataset details are
provided either in the main body of the paper or appendix.
    \item[] Guidelines:
    \begin{itemize}
        \item The answer NA means that the paper does not include experiments.
        \item The experimental setting should be presented in the core of the paper to a level of detail that is necessary to appreciate the results and make sense of them.
        \item The full details can be provided either with the code, in appendix, or as supplemental material.
    \end{itemize}

\item {\bf Experiment statistical significance}
    \item[] Question: Does the paper report error bars suitably and correctly defined or other appropriate information about the statistical significance of the experiments?
    \item[] Answer: \answerNo{} 
    \item[] Justification: We do not report error bars or statistical significance due to computational and time constraints; as is common in combinatorial optimization, we report mean results over multiple problem instances.
    \item[] Guidelines:
    \begin{itemize}
        \item The answer NA means that the paper does not include experiments.
        \item The authors should answer "Yes" if the results are accompanied by error bars, confidence intervals, or statistical significance tests, at least for the experiments that support the main claims of the paper.
        \item The factors of variability that the error bars are capturing should be clearly stated (for example, train/test split, initialization, random drawing of some parameter, or overall run with given experimental conditions).
        \item The method for calculating the error bars should be explained (closed form formula, call to a library function, bootstrap, etc.)
        \item The assumptions made should be given (e.g., Normally distributed errors).
        \item It should be clear whether the error bar is the standard deviation or the standard error of the mean.
        \item It is OK to report 1-sigma error bars, but one should state it. The authors should preferably report a 2-sigma error bar than state that they have a 96\% CI, if the hypothesis of Normality of errors is not verified.
        \item For asymmetric distributions, the authors should be careful not to show in tables or figures symmetric error bars that would yield results that are out of range (e.g. negative error rates).
        \item If error bars are reported in tables or plots, The authors should explain in the text how they were calculated and reference the corresponding figures or tables in the text.
    \end{itemize}

\item {\bf Experiments compute resources}
    \item[] Question: For each experiment, does the paper provide sufficient information on the computer resources (type of compute workers, memory, time of execution) needed to reproduce the experiments?
    \item[] Answer: \answerYes{} 
    \item[] Justification: Details are provided in the appendix.
    \item[] Guidelines:
    \begin{itemize}
        \item The answer NA means that the paper does not include experiments.
        \item The paper should indicate the type of compute workers CPU or GPU, internal cluster, or cloud provider, including relevant memory and storage.
        \item The paper should provide the amount of compute required for each of the individual experimental runs as well as estimate the total compute. 
        \item The paper should disclose whether the full research project required more compute than the experiments reported in the paper (e.g., preliminary or failed experiments that didn't make it into the paper). 
    \end{itemize}
    
\item {\bf Code of ethics}
    \item[] Question: Does the research conducted in the paper conform, in every respect, with the NeurIPS Code of Ethics \url{https://neurips.cc/public/EthicsGuidelines}?
    \item[] Answer: \answerYes{} 
    \item[] Justification: Given the focus on foundations, we do not have human subjects or sensitive data and we do not perceive risks of harm or misuse.
    \item[] Guidelines:
    \begin{itemize}
        \item The answer NA means that the authors have not reviewed the NeurIPS Code of Ethics.
        \item If the authors answer No, they should explain the special circumstances that require a deviation from the Code of Ethics.
        \item The authors should make sure to preserve anonymity (e.g., if there is a special consideration due to laws or regulations in their jurisdiction).
    \end{itemize}

\item {\bf Broader impacts}
    \item[] Question: Does the paper discuss both potential positive societal impacts and negative societal impacts of the work performed?
    \item[] Answer: \answerNA{} 
    \item[] Justification: The societal impact of improved combinatorial optimization or neural combinatorial optimization techniques are minimal.
    \item[] Guidelines:
    \begin{itemize}
        \item The answer NA means that there is no societal impact of the work performed.
        \item If the authors answer NA or No, they should explain why their work has no societal impact or why the paper does not address societal impact.
        \item Examples of negative societal impacts include potential malicious or unintended uses (e.g., disinformation, generating fake profiles, surveillance), fairness considerations (e.g., deployment of technologies that could make decisions that unfairly impact specific groups), privacy considerations, and security considerations.
        \item The conference expects that many papers will be foundational research and not tied to particular applications, let alone deployments. However, if there is a direct path to any negative applications, the authors should point it out. For example, it is legitimate to point out that an improvement in the quality of generative models could be used to generate deepfakes for disinformation. On the other hand, it is not needed to point out that a generic algorithm for optimizing neural networks could enable people to train models that generate Deepfakes faster.
        \item The authors should consider possible harms that could arise when the technology is being used as intended and functioning correctly, harms that could arise when the technology is being used as intended but gives incorrect results, and harms following from (intentional or unintentional) misuse of the technology.
        \item If there are negative societal impacts, the authors could also discuss possible mitigation strategies (e.g., gated release of models, providing defenses in addition to attacks, mechanisms for monitoring misuse, mechanisms to monitor how a system learns from feedback over time, improving the efficiency and accessibility of ML).
    \end{itemize}
    
\item {\bf Safeguards}
    \item[] Question: Does the paper describe safeguards that have been put in place for responsible release of data or models that have a high risk for misuse (e.g., pretrained language models, image generators, or scraped datasets)?
    \item[] Answer: \answerNA{} 
    \item[] Justification: The paper does not contain data or models that have a high risk for misuse.
    \item[] Guidelines:
    \begin{itemize}
        \item The answer NA means that the paper poses no such risks.
        \item Released models that have a high risk for misuse or dual-use should be released with necessary safeguards to allow for controlled use of the model, for example by requiring that users adhere to usage guidelines or restrictions to access the model or implementing safety filters. 
        \item Datasets that have been scraped from the Internet could pose safety risks. The authors should describe how they avoided releasing unsafe images.
        \item We recognize that providing effective safeguards is challenging, and many papers do not require this, but we encourage authors to take this into account and make a best faith effort.
    \end{itemize}

\item {\bf Licenses for existing assets}
    \item[] Question: Are the creators or original owners of assets (e.g., code, data, models), used in the paper, properly credited and are the license and terms of use explicitly mentioned and properly respected?
    \item[] Answer: \answerYes{} 
    \item[] Justification: We provide citations to all data and code we use. All material is licensed for such use.
    \item[] Guidelines:
    \begin{itemize}
        \item The answer NA means that the paper does not use existing assets.
        \item The authors should cite the original paper that produced the code package or dataset.
        \item The authors should state which version of the asset is used and, if possible, include a URL.
        \item The name of the license (e.g., CC-BY 4.0) should be included for each asset.
        \item For scraped data from a particular source (e.g., website), the copyright and terms of service of that source should be provided.
        \item If assets are released, the license, copyright information, and terms of use in the package should be provided. For popular datasets, \url{paperswithcode.com/datasets} has curated licenses for some datasets. Their licensing guide can help determine the license of a dataset.
        \item For existing datasets that are re-packaged, both the original license and the license of the derived asset (if it has changed) should be provided.
        \item If this information is not available online, the authors are encouraged to reach out to the asset's creators.
    \end{itemize}

\item {\bf New assets}
    \item[] Question: Are new assets introduced in the paper well documented and is the documentation provided alongside the assets?
    \item[] Answer: \answerNA{} 
    \item[] Justification: There are no new assets released in this paper.
    \item[] Guidelines:
    \begin{itemize}
        \item The answer NA means that the paper does not release new assets.
        \item Researchers should communicate the details of the dataset/code/model as part of their submissions via structured templates. This includes details about training, license, limitations, etc. 
        \item The paper should discuss whether and how consent was obtained from people whose asset is used.
        \item At submission time, remember to anonymize your assets (if applicable). You can either create an anonymized URL or include an anonymized zip file.
    \end{itemize}

\item {\bf Crowdsourcing and research with human subjects}
    \item[] Question: For crowdsourcing experiments and research with human subjects, does the paper include the full text of instructions given to participants and screenshots, if applicable, as well as details about compensation (if any)? 
    \item[] Answer: \answerNA{} 
    \item[] Justification: There were no human subjects in this work.
    \item[] Guidelines:
    \begin{itemize}
        \item The answer NA means that the paper does not involve crowdsourcing nor research with human subjects.
        \item Including this information in the supplemental material is fine, but if the main contribution of the paper involves human subjects, then as much detail as possible should be included in the main paper. 
        \item According to the NeurIPS Code of Ethics, workers involved in data collection, curation, or other labor should be paid at least the minimum wage in the country of the data collector. 
    \end{itemize}

\item {\bf Institutional review board (IRB) approvals or equivalent for research with human subjects}
    \item[] Question: Does the paper describe potential risks incurred by study participants, whether such risks were disclosed to the subjects, and whether Institutional Review Board (IRB) approvals (or an equivalent approval/review based on the requirements of your country or institution) were obtained?
    \item[] Answer: \answerNA{} 
    \item[] Justification: No human subjects or study participants were used in this research.
    \item[] Guidelines:
    \begin{itemize}
        \item The answer NA means that the paper does not involve crowdsourcing nor research with human subjects.
        \item Depending on the country in which research is conducted, IRB approval (or equivalent) may be required for any human subjects research. If you obtained IRB approval, you should clearly state this in the paper. 
        \item We recognize that the procedures for this may vary significantly between institutions and locations, and we expect authors to adhere to the NeurIPS Code of Ethics and the guidelines for their institution. 
        \item For initial submissions, do not include any information that would break anonymity (if applicable), such as the institution conducting the review.
    \end{itemize}

\item {\bf Declaration of LLM usage}
    \item[] Question: Does the paper describe the usage of LLMs if it is an important, original, or non-standard component of the core methods in this research? Note that if the LLM is used only for writing, editing, or formatting purposes and does not impact the core methodology, scientific rigorousness, or originality of the research, declaration is not required.
    \item[] Answer: \answerNA{} 
    \item[] Justification: LLMs were not used in the core methods of this research.
    \item[] Guidelines:
    \begin{itemize}
        \item The answer NA means that the core method development in this research does not involve LLMs as any important, original, or non-standard components.
        \item Please refer to our LLM policy (\url{https://neurips.cc/Conferences/2025/LLM}) for what should or should not be described.
    \end{itemize}

\end{enumerate}

\end{document}